\documentclass[11pt]{article}
\usepackage{booktabs} % For formal tables
\usepackage{graphicx}
\usepackage{authblk}
\usepackage[top=1in, bottom=1in, left=1in, right=1in]{geometry}

% \usepackage[ruled]{algorithm2e} % For algorithms
% \renewcommand{\algorithmcfname}{ALGORITHM}
% \SetAlFnt{\small}
% \SetAlCapFnt{\small}
% \SetAlCapNameFnt{\small}
% \SetAlCapHSkip{0pt}
% \IncMargin{-\parindent}

\usepackage{xpatch}
\usepackage{pgfplots}
\pgfplotsset{compat=1.18}
\usepackage{pgfmath}
\usepackage{apxproof}
\usepackage{amsthm}
\usepackage{threeparttable}
\usepackage{makecell}
%\newcounter{theorem}
\usepackage{multirow}
\usepackage{cancel}
\usepackage{ulem}
\usepackage{subcaption}
\usepackage{natbib}
\usepackage{hyperref}
\usepackage{amsmath}
\usepackage{cleveref}
\usepackage{nicefrac}       % compact symbols for 1/2, etc.
\usepackage{amsfonts}       % blackboard math symbols
\newtheorem{theorem}{Theorem}
%\newtheoremrep{observation}[theorem]{Observation}
%\newtheoremrep{claim}[theorem]{Claim}
%\newtheoremrep{thm}[theorem]{Theorem}

\newtheorem{lemma}{Lemma}
\newtheorem{claim}{Claim}
\newtheorem{observation}{Observation}

\hypersetup{
	colorlinks   = true, %Colours links instead of ugly boxes
	urlcolor     = blue, %Colour for external hyperlinks
	linkcolor    = blue, %Colour of internal links
	citecolor   = blue    %Colour of citations
}

\newcommand{\hide}[1]{} %hide

 \newcommand\shu[1]{\textcolor{red}{[Shu: #1]}}
 \newcommand\yk[1]{}%\textcolor{green}{[Yuqing: #1]}}

\newcommand\bayes[0]{x^{Bayes}}
\newcommand\brn[0]{x^{BRN}}
\newcommand\logit[0]{\mathrm{logit}}

\definecolor{blue1}{RGB}{186,228,188}
\definecolor{blue2}{RGB}{123,204,196}
\definecolor{blue3}{RGB}{67,162,202}
\definecolor{blue4}{RGB}{8,104,172}
\definecolor{pink1}{RGB}{251,180,185}
\definecolor{pink2}{RGB}{247,104,161}
\definecolor{pink3}{RGB}{197,27,138}
\definecolor{pink4}{RGB}{122,1,119}

\begin{document}

% Title page for title and abstract only.
\begin{titlepage}

% \vspace{-10cm}

\title{The Surprising Benefits of Base Rate Neglect in Robust Aggregation\thanks{This work is supported by National Science and Technology Major Project (2022ZD0114904). We thank Tracy Xiao Liu for stimulating comments and suggestions.}}

% and is approved by Tsinghua University Science and Technology Ethics Committee (Humanities, Social Sciences and Engineering) (THU-04-2024-39).

\author[1]{Yuqing Kong}
\author[2]{Shu Wang}
\author[1]{Ying Wang\thanks{Authors contributed equally and are listed in alphabetical order. }}

\affil[1]{Center on Frontiers of Computing Studies, School of Computer Science, Peking Univeristy}
\affil[2]{Department of Economics, School of Economics and Management, Tsinghua Univeristy}
\affil[1]{\texttt{\{yuqing.kong, wying2000\}@pku.edu.cn}}
\affil[2]{\texttt{shu-wang20@mails.tsinghua.edu.cn}}
\date{}

\maketitle

% \renewcommand{\thefootnote}{\fnsymbol{footnote}} %将脚注符号设置为fnsymbol类型，即特殊符号表示
% \footnotetext[1]{Authors contributed equally and are listed in alphabetical order. This work is supported by National Science and Technology Major Project (2022ZD0114904), and is approved by Tsinghua University Science and Technology Ethics Committee (Humanities, Social Sciences and Engineering) (THU-04-2024-39). We thank Tracy Xiao Liu for stimulating comments and suggestions.}

% \vspace{-1cm}
\begin{abstract}
Robust aggregation integrates predictions from multiple experts without knowledge of the experts' information structures. Prior work assumes experts are Bayesian, providing predictions as perfect posteriors based on their signals. However, real-world experts often deviate systematically from Bayesian reasoning. Our work considers experts who tend to ignore the base rate. We find that a certain degree of base rate neglect helps with robust forecast aggregation.

Specifically, we consider a forecast aggregation problem with two experts who each predict a binary world state after observing private signals. Unlike previous work, we model experts exhibiting base rate neglect, where they incorporate the base rate information to degree $\lambda\in[0,1]$, with $\lambda=0$ indicating complete ignorance and $\lambda=1$ perfect Bayesian updating. To evaluate aggregators' performance, we adopt \citet{arieli2018robust}'s worst-case regret model, which measures the maximum regret across the set of considered information structures compared to an omniscient benchmark. Our results reveal the surprising V-shape of regret as a function of $\lambda$. 
That is, predictions with an intermediate incorporating degree of base rate $\lambda<1$ can counter-intuitively lead to lower regret than perfect Bayesian posteriors with $\lambda=1$.
We additionally propose a new aggregator with low regret robust to unknown $\lambda$. Finally, we conduct an empirical study to test the base rate neglect model and evaluate the performance of various aggregators\footnote{The data collected in the empirical study is available at \href{https://github.com/EconCSPKU/Probability-Task-Data}{https://github.com/EconCSPKU/Probability-Task-Data}.}. 

\end{abstract}

% Optionally include a table of contents
% \vspace{1cm}
\setcounter{tocdepth}{2} % adjust to 1 if desired
%\tableofcontents

\end{titlepage}

% Paper body
%\shu{we need to clarify that $\lambda$-Base Rate Balancing Aggregators include average prior, thus specifying ($\lambda < 1$). }
\section{Introduction}

% \shu{conditional independent should be replaced with conditionally independent} \yw{Yes.  xxx are conditionally independent. or conditional independence.}

% \shu{truth state? or true state?} \yw{true world state}

Meet Jane --- a generally healthy woman who has been feeling under the weather lately. She decides to get checked out by two doctors to see if she has a particular disease that's been going around. Doctor A runs a diagnostic test and tells Jane there's a 70\% chance she has the disease. Meanwhile, Doctor B runs a different diagnostic test and tells Jane her chance is 60\%. Jane wonders how should she combine these two assessments to understand her overall likelihood of having this disease. 

If the doctors were perfect Bayesians, Jane could combine the results using her knowledge of the disease's 15\% prevalence rate in the general population. But she may not know the prevalence rate. More importantly, in the real world, doctors may not be perfect Bayesians.

Say you're Doctor A. You know this disease affects 15\% of the population in general, and your test is 80\% accurate at detecting it. If Jane tests positive, what is the chance she has the disease? An intuitive response is 80\% --- after all, that's what your test accuracy is. A slightly more informed guess might be 70\%. But using the Bayesian rule, the actual chance Jane has the disease is only 41\%! 

Doctor A's example is an adaptation of the famous taxicab problem. Most people will answer 80\% whereas the correct answer is 41\%. \citet{kahneman1973psychology} used this example to illustrate the prevalent cognitive bias of humans termed base rate neglect (or base rate fallacy), where people tend to ignore the base rate and instead focus on new information. 

% Such fallacy in base rate also appears in machine learning tasks such as classification where the skewness of class ratio between training data and the real world scenario result in this incorrect.

This raises an important research question: How should patients like Jane aggregate medical opinions when doctors may exhibit base rate fallacy and the true prevalence of the disease is unknown? This question is faced in many other decision-making situations. For example, a business leader might get a few different guesses about next quarter's sales from analysts. The analysts might not look enough at older sales data. Also, a government official could get some predictions about how far an epidemic will spread. The experts might ignore past rates. In the machine learning context, a decision-maker elicits forecasts from data scientists. The data scientists may over-rely on a machine's prediction and ignore the true prior\footnote{\href{https://cacm.acm.org/blogs/blog-cacm/262443-the-base-rate-neglect-cognitive-bias-in-data-science/fulltext}{https://cacm.acm.org/blogs/blog-cacm/262443-the-base-rate-neglect-cognitive-bias-in-data-science/fulltext}}.

To address the question, we consider a model with experts who exhibit base rate neglect. The experts share a base rate $\mu=\Pr[\omega=1]$. Each expert $i$ also knows the relationship between signal $s_i$ (e.g. medical test result) and the binary world state $\omega$. However, rather than generating a Bayesian posterior, she only partially incorporates the prior $\mu$ into her evaluation $x_i$ of the truth.

\paragraph{\textbf{The Base Rate Neglect Model}} The extent to which the prior is considered is quantified by a parameter $\lambda\in [0,1]$. We name this parameter as the prior consideration degree (or base rate consideration degree). When $\lambda=0$, the expert completely ignores the prior and reports $x_i = \frac{\Pr[S_i = s_i | \omega=1]}{\Pr[S_i = s_i | \omega=1] + \Pr[S_i = s_i | \omega=0]}$. For example, if a medical test is positive, an expert with $\lambda=0$ would report the test's accuracy rather than incorporating the rarity of the disease. As $\lambda$ increases, the expert puts more weight on the prior when forming her posterior evaluation $x_i = \brn_i(s_i,\lambda)$ where \begin{align*}
\brn_i(s_i, \lambda) &= \frac{\mu^{\lambda}\cdot \Pr[S_i = s_i\mid \omega = 1]}{\mu^{\lambda}\cdot \Pr[S_i = s_i\mid \omega = 1]+(1-\mu)^{\lambda}\cdot \Pr[S_i = s_i\mid \omega = 0]}.
\end{align*}

Let $\bayes_i(s_i)$ denote the Bayesian posterior of expert $i$ upon receiving signal $s_i$.
We also have 

\begin{observation}\label{lemma: BNR prediction vs Bayesian}
%     The expert's prediction $\brn_i(s_i, \lambda)$ can be rewritten by the perfect Bayesian posterior $\bayes_i(s_i)$, the prior $\mu$, and the extent to which the expert incorporates the prior, $\lambda$. In mathematical terms, this relationship is given by
% \[
% \brn_i(s_i, \lambda) = \frac{(1-\mu)^{1-\lambda}\cdot \bayes(s_i) }{(1-\mu)^{1-\lambda}\cdot \bayes(s_i)+\mu^{1-\lambda}\cdot (1-\bayes(s_i))}
% .\]

% Moreover, the relationship between $\brn_i(s_i, \lambda)$ and $\bayes_i(s_i)$ can be simplified to a linear form when considering the log odds. That is,
% \[ \logit (\brn_i(s_i, \lambda))=\logit(\bayes(s_i))-(1-\lambda)\logit(\mu)  ,\] where $\logit(p)$ is defined as $\log\frac{p}{1-p}$.

\begin{align*}
\brn_i(s_i, \lambda) = &\frac{(1-\mu)^{1-\lambda}\cdot \bayes(s_i) }{(1-\mu)^{1-\lambda}\cdot \bayes(s_i)+\mu^{1-\lambda}\cdot (1-\bayes(s_i))}.
\end{align*} 

It induces a linear relationship between the log odds
\[ \logit (\brn_i(s_i, \lambda))=\logit(\bayes(s_i))-(1-\lambda)\logit(\mu)  \]
where $\logit(p)=\log\frac{p}{1-p}$.
\end{observation}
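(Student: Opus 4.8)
The plan is to pass to odds ratios, where the multiplicative structure of both $\brn_i$ and $\bayes_i$ becomes transparent, and then read off all three claimed identities in turn. Write $p_1 = \Pr[S_i=s_i\mid\omega=1]$ and $p_0=\Pr[S_i=s_i\mid\omega=0]$ for brevity. First I would record the standard Bayesian posterior $\bayes_i(s_i)=\mu p_1/(\mu p_1+(1-\mu)p_0)$, whose odds are therefore $\bayes_i/(1-\bayes_i)=(\mu p_1)/((1-\mu)p_0)$. Solving this for the likelihood ratio gives $p_1/p_0 = \frac{1-\mu}{\mu}\cdot\frac{\bayes_i}{1-\bayes_i}$, which is the only fact about the Bayesian posterior I will actually need downstream.

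Next I would compute the odds of the base-rate-neglect prediction directly from its definition: $\frac{\brn_i}{1-\brn_i}=\frac{\mu^\lambda p_1}{(1-\mu)^\lambda p_0}=\left(\frac{\mu}{1-\mu}\right)^{\lambda}\frac{p_1}{p_0}$. Substituting the likelihood-ratio expression from the first step and collecting the powers of $\mu/(1-\mu)$ yields $\frac{\brn_i}{1-\brn_i}=\left(\frac{1-\mu}{\mu}\right)^{1-\lambda}\frac{\bayes_i}{1-\bayes_i}$. This single odds identity is the crux of the whole observation; everything that follows is just a rewriting of it.

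From here the two displayed claims follow mechanically. Taking logarithms of the odds identity turns the product into a sum, giving $\logit(\brn_i)=\logit(\bayes_i)+(1-\lambda)\log\frac{1-\mu}{\mu}=\logit(\bayes_i)-(1-\lambda)\logit(\mu)$, which is precisely the claimed log-odds relationship. For the explicit probability form, I would clear the odds: setting $a=(1-\mu)^{1-\lambda}\bayes_i$ and $b=\mu^{1-\lambda}(1-\bayes_i)$, the odds identity reads $\brn_i/(1-\brn_i)=a/b$, and inverting via $\brn_i=a/(a+b)$ recovers exactly the first displayed formula.

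There is no genuine obstacle here, as the statement is an algebraic identity rather than a theorem requiring an idea; the only thing to watch is the bookkeeping of exponents, namely keeping $\lambda$ versus $1-\lambda$ and the roles of $\mu$ versus $1-\mu$ straight when combining the factor $(\mu/(1-\mu))^{\lambda}$ with the $\frac{1-\mu}{\mu}$ coming from the substitution. Choosing the odds ratio as the working representation is exactly what makes this bookkeeping trivial, since it collapses the two-term normalizations into single ratios and converts the power-of-$\mu$ reweighting into an additive shift in log-odds.
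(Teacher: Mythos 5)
Your proposal is correct and follows essentially the same route as the paper's own proof: both pass to odds ratios, derive the key identity $\frac{\brn_i}{1-\brn_i}=\left(\frac{1-\mu}{\mu}\right)^{1-\lambda}\frac{\bayes_i}{1-\bayes_i}$ (the paper writes the reciprocal form), then take logarithms for the logit relation and invert the odds for the explicit formula. No gaps.
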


When $\lambda=1$, the expert becomes a perfect Bayesian, i.e., $\brn_i(s_i, 1) = \bayes_i(s_i)$, properly integrating the prior and signal likelihood. We adopt the above model \cite{benjamin2019base} because the prior experimental studies such as \citet{grether1992testing} have demonstrated base rate neglect by fitting a linear relationship between log odds and finding $\lambda<1$.

\paragraph{\textbf{Robust Framework}} We focus on the two-expert case. To integrate experts' evaluations, we use an aggregator $f:[0,1]^2 \to [0,1]$ which inputs evaluations and generates an aggregated forecast. The aggregator lacks knowledge of the information structure --- the joint distribution over signals and the state. To evaluate the performance of this aggregator, we follow the robust framework of \citet{arieli2018robust}. In this framework, an omniscient aggregator is compared to assess the loss of $f$. The omniscient aggregator knows the information structure and signals and outputs the Bayesian aggregator's posterior given all experts' signals. The regret of the aggregator is calculated as the worst-case relative loss of aggregator $f$, where the worst-case refers to the worst information structure that maximizes the relative loss of $f$.

\paragraph{\textbf{A New Framework under Base Rate Neglect}} This paper follows the above regret definition but replaces perfect Bayesian experts with experts who consider the prior information to degree $\lambda$. This leads to a new regret definition $R_\lambda(f)$ for each $\lambda$, and generalizes the regret in \citet{arieli2018robust} whose regret corresponds to $R_{\lambda=1}(f)$.

Recognizing that the aggregator generally lacks information about degree $\lambda$, we introduce a new criterion to measure the regret of an aggregator $f$ under this uncertainty:

\begin{equation}\label{def: overall regret}
R(f) = \sup_{\lambda \in [0, 1]}\{R_\lambda(f) - \inf_{g}{R_\lambda(g)}\}
\end{equation}

The overall regret $R(f)$ is defined as the maximum regret over all $\lambda$ compared to the optimal aggregator for that $\lambda$. An aggregator with low $R(f)$ would perform well across different consideration degrees of base rate, rather than relying on a specific assumption about $\lambda$. %This framework allows to identify an aggregation method that is robust to uncertainty about not only the specific information structure but also how experts incorporate the base rate.

\subsection{Summary of Results}

We focus on the setting of two experts and conditionally independent information structures. That is conditioning on the true state $\omega$, two experts' signals are independent. For general structures, \citet{arieli2018robust} prove a negative result of effective aggregation. The negative result still holds in our scenario\footnote{We defer the detailed explanation in Appendix ~\ref{app:claim}.}. In the conditionally independent setting, we obtain the following results.

\begin{toappendix}
\begin{claim}%\label{claim: negative result for general structures} 
For any prior consideration degree $\lambda$, no aggregator can reach a regret less than 0.25 in general information structures.
\end{claim}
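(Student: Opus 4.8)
The plan is to exhibit a single adversarial information structure on which \emph{every} aggregator suffers regret at least $1/4$, and to make that structure work uniformly in $\lambda$ by a careful choice of the base rate. Since $R_\lambda(f)$ is a supremum over information structures, producing one sufficiently bad structure already forces the lower bound, so no genuine minimax argument is needed.

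First I would fix the base rate at $\mu=\tfrac12$. By Observation~\ref{lemma: BNR prediction vs Bayesian}, $\logit(\brn_i(s_i,\lambda))=\logit(\bayes(s_i))-(1-\lambda)\logit(\tfrac12)=\logit(\bayes(s_i))$, so the base rate neglect reports coincide with the Bayesian posteriors for \emph{every} $\lambda$ simultaneously. This collapses the problem to the Bayesian case and, crucially, lets the same construction witness the bound at every consideration degree at once, which is exactly what the phrase ``for any $\lambda$'' demands.

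Next I would take the parity (XOR) structure: let $s_1,s_2$ be independent and uniform on $\{0,1\}$ and set $\omega=s_1\oplus s_2$. Then each expert's marginal posterior is $\Pr[\omega=1\mid s_i]=\tfrac12$, so both experts report $(\tfrac12,\tfrac12)$ regardless of the realized signals; yet the pair $(s_1,s_2)$ pins down $\omega$ exactly, so the omniscient benchmark incurs zero loss. Because the reports are always $(\tfrac12,\tfrac12)$, the aggregator is forced to commit to a single value $z=f(\tfrac12,\tfrac12)$, and conditioned on these reports $\omega$ is still $\mathrm{Bernoulli}(\tfrac12)$. Its expected quadratic loss is therefore $\tfrac12 z^2+\tfrac12(1-z)^2\ge \tfrac14$, minimized at $z=\tfrac12$. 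Subtracting the omniscient loss of $0$ leaves regret at least $1/4$ on this structure, hence $R_\lambda(f)\ge 1/4$ for every $f$ and every $\lambda$.

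The work is in the setup rather than the arithmetic. The crux is recognizing that setting $\mu=\tfrac12$ annihilates the $\logit(\mu)$ shift in Observation~\ref{lemma: BNR prediction vs Bayesian}, so a single structure handles all $\lambda$, and that XOR is a legitimate (though deliberately non--conditionally-independent) general structure in which marginally uninformative signals are jointly fully informative. The one thing to check carefully is that the aggregator genuinely cannot escape: since all reports collapse to $(\tfrac12,\tfrac12)$ its output is a fixed constant, and the parity structure bounds the regret from below no matter what that constant is. This also exhibits tightness, since the constant aggregator $f\equiv\tfrac12$ attains regret exactly $1/4$, confirming $0.25$ as the sharp threshold and mirroring the general-structure impossibility of \citet{arieli2018robust}.
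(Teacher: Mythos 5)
Your proposal is correct and is essentially the paper's own argument: the paper uses exactly this XOR/parity structure with uniform independent signals and prior $\mu=\tfrac12$, observes that every expert reports $\tfrac12$ regardless of $\lambda$, and concludes that any ignorant aggregator loses at least $0.25$ relative to the omniscient benchmark. Your write-up just makes the $\mu=\tfrac12$ cancellation in Observation~\ref{lemma: BNR prediction vs Bayesian} and the quadratic-loss computation $\tfrac12 z^2+\tfrac12(1-z)^2\ge\tfrac14$ more explicit.
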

\begin{proof}
\label{app:claim}
\begin{table}[h]
    \centering
    \begin{subtable}{.4\linewidth} % Specifies the width of the subtable
        \centering
       \begin{tabular}{ccc}
            & $S_2 = r$  & $S_2 = b$\\
        \cline{2-3}
          \multicolumn{1}{c|}{$S_1 = r$}   & \multicolumn{1}{c|}{$\frac14$}   & \multicolumn{1}{c|}{0}  \\
        \cline{2-3}
           \multicolumn{1}{c|}{$S_1 = b$}   &\multicolumn{1}{c|}{0} &   \multicolumn{1}{c|}{$\frac14$} \\
           \cline{2-3}
        \end{tabular}
        \caption*{$\omega = 0$}
    \end{subtable}%
    \begin{subtable}{.4\linewidth} % Adjust the width as needed
        \centering
        \begin{tabular}{ccc}
            & $S_2 = r$ & $S_2 = b$\\
           \cline{2-3}
          \multicolumn{1}{c|}{$S_1 = r$}   & \multicolumn{1}{c|}{0} & \multicolumn{1}{c|}{$\frac14$}\\
           \cline{2-3}
           \multicolumn{1}{c|}{$S_1 = b$} & \multicolumn{1}{c|}{$\frac14$} & \multicolumn{1}{c|}{0}\\
                   \cline{2-3}
        \end{tabular}
        \caption*{$\omega = 1$}
    \end{subtable}
    \caption{Joint Distribution of the General Information Structure}
    \label{tab: general structure instance}
\end{table}
Consider a general information structure $\theta$ where $\mathcal{S}_1 = \mathcal{S}_2 = \{r,b\}$ and the joint distribution of states and signals is specified in Table~\ref{tab: general structure instance}.

In this setup, the signals for both experts are independent and uniformly drawn from the signal space.  The determination of the world state $\omega$ is based on the combination of received signals: $\omega = 0$ when both experts receive the same signal (either both $r$ or both $b$), and $\omega = 1$ when their signals differ.

Given this structure, regardless of the prior consideration degree $\lambda$ or the specific signal received, each expert will predict $\frac{1}{2}$. In such case, an ignorant aggregator can at best give an aggregated result as $\frac{1}{2}$. However, the omniscient aggregator, which has complete knowledge of the experts' signals, can accurately deduce the actual world state from the experts' signals, resulting in a relative loss of at least $0.25$.

Therefore, for any aggregator $f$ and any degree $\lambda$, $R_\lambda(f) \ge 0.25$ holds for general information structures.
\end{proof}
\end{toappendix}

%\shu{should the illustration be included as results?} \shu{I do not understand the purpose of this paragraph from ``consider a case...at least 0.25''. Is it related to \citet{arieli2018robust}'s negative result in our BRN setting?}

 %\yk{use xor case to show that the negative result still holds for our case}

\definecolor{lightBlue}{HTML}{a6cee3}
\definecolor{darkBlue}{HTML}{1f78b4}
\definecolor{lightGreen}{HTML}{b2df8a}
\definecolor{darkGreen}{HTML}{33a02c}
\definecolor{lightRed}{HTML}{fb9a99}

\definecolor{c1}{HTML}{ccebc5}
\definecolor{c2}{HTML}{a8ddb5}
\definecolor{c3}{HTML}{7bccc4}
\definecolor{c4}{HTML}{43a2ca}
\definecolor{c5}{HTML}{0868ac}

\begin{figure}
    \centering
    \begin{tikzpicture}[scale = 0.85]
    \begin{axis}[
        width=1\textwidth, % Adjust the width of the plot
        height=7.8cm, % Adjust the height of the plot
        xlabel={base rate consideration degree $\lambda$},
        ylabel={regret $R_\lambda(f)$},
        xmin=0, xmax=1,
        ymin=0,
        ytick={0.05,0.1,0.15,0.2,0.25}, % 设置横坐标刻度的位置
        yticklabels={0.05,0.10,0.15,0.20,0.25} , % 设置横坐标刻度标签的内容
        legend pos=north east,
        grid style=dashed,
        ]
    
    % Lower bound curve
    \addplot[color=lightBlue,
    solid,
    line width=1pt,
    mark=o,
    mark options={solid},
    mark size=2.5pt] coordinates {
        (0,0.25) (0.1,0.100347) (0.2,0.042957) (0.3,0.015106) (0.4,0.003017) (0.5,0) (0.6,0.001939) (0.7,0.006288) (0.8,0.011604) (0.9,0.017143) (1,0.022542)
    };
    \addlegendentry{lower bound}

    \addplot[color=darkGreen,
    solid,
    line width=1pt,
    mark=triangle,
    mark options={solid},
    mark size=2.5pt] coordinates {
        (0,0.25) (0.1,0.147781) (0.2,0.094201) (0.3,0.0625) (0.4,0.0625) (0.5,0.0625) (0.6,0.0625) (0.7,0.0625) (0.8,0.0625) (0.9,0.0625) (1,0.0625)
    };
    \addlegendentry{simple average}
    
    % Average prior with alpha = 0.7
    \addplot[color=darkBlue,
    solid,
    line width=1pt,
    mark=square,
    mark options={solid},
    mark size=2.5pt] coordinates {
        (0,0.249999) (0.1,0.112599) (0.2,0.054847) (0.3,0.02508) (0.4,0.009709) (0.5,0.002478) (0.6,0.002237) (0.7,0.006743) (0.8,0.013867) (0.9,0.022527) (1,0.032162)
    };
    \addlegendentry{our aggregator}

    % Average priori curve
    \addplot[color=lightGreen,
    solid,
    line width=1pt,
    mark=diamond,
    mark options={solid},
    mark size=2.5pt] coordinates {
        (0,0.25) (0.1,0.147781) (0.2,0.094201) (0.3,0.060114) (0.4,0.037578) (0.5,0.022542) (0.6,0.015458) (0.7,0.014549) (0.8,0.014352) (0.9,0.018257) (1,0.025992)
    };
    \addlegendentry{average prior}
    \end{axis}
    \end{tikzpicture}
    \caption{Our aggregator vs. Existing aggregators}
    \label{fig: main results}
\end{figure}

% \yw{We will replace V-shaped by single-troughed and replace strict V-shaped by V-shaped.}
\paragraph{\textbf{Surprising Benefits of Base Rate Neglect}} When we have a single expert, we prefer this expert to be a perfect Bayesian. The case becomes more complex with two experts. Intuitively, we might expect that having two perfect Bayesian experts would be best. However, our results suggest there might be unexpected advantages if experts neglect the base rate to some extent.

We show that the regret curve for any aggregator must be single-troughed regarding $\lambda$ (first decreasing and then increasing, or monotone).
By numerical methods, we find many aggregators can achieve lower regret when $\lambda<1$, thus having V-shaped regret (first decreasing and then increasing), including existing aggregators, for example, the average prior aggregator (see Figure~\ref{fig: main results}), that are particularly designed for perfect Bayesian \citep{arieli2018robust}. %We also find that there exists $\lambda<1$ such that there exists an aggregator that can achieve almost zero worst-case regret. When experts are perfect Bayesian, no aggregator can have regret $<TKTK$.

We analyze the optimal regret $\inf_g{R_\lambda(g)}$ for each $\lambda$ value. Due to the complexity of finding optimal aggregator, 
%Because finding the optimal aggregator for each $\lambda$ is intractable, 
we provide tight lower bounds and numerical upper bounds for $\inf_g{R_\lambda(g)}$, with a small margin of error up to 0.003. We prove that the lower bound on worst-case regret is V-shaped as $\lambda$ increases (Theorem \ref{theorem: unimodality of the lower bound}). Moreover, the numerical upper bound is also V-shaped. 

%\shu{``$\lambda$ varies'' maybe not precise enough. what if randomly vary?}

Specifically, for $\lambda=0.5$, there exists an aggregator that can achieve almost-zero regret. However, \citet{arieli2018robust} validate that when experts are perfect Bayesian, no aggregator can have a regret less than 0.0225. In other words, when experts' prior consideration degree is $\lambda=0.5$, there exists an aggregator that outperforms all aggregators with perfect Bayesian posterior input.

The above counter-intuitive findings reveal the benefits of base rate neglect in aggregation. Here is an intuitive explanation. When experts make predictions, they use two main types of information: the shared information (the base rate) and the private information. An effective aggregator needs to balance these types in an appropriate proportion. However, an ignorant aggregator cannot correctly decompose these two kinds of information and may overemphasize the base rate in the aggregation because the base rate is repeatedly considered by the two experts. To address this, prior studies recommend using additional information, such as historical data and second-order information, to downplay the base rate's influence \citep{kim2001inefficiency, chen2004eliminating, palley2019extracting}.%\yk{cite here?} 

In scenarios where experts lean towards disregarding the base rate, particularly when a parameter $\lambda$ is adjusted from $1$ to $0.5$, the issue of base rate double-counting diminishes. Thus, the aggregator has a chance to perform better.

%experts' predictions blend the commonly shared information contained in the base rate and experts' private information derived from signals. An effective aggregator needs to balance the shared information with experts' private information in an appropriate proportion.However, an ignorant aggregator cannot correctly decompose these two kinds of information from experts' predictions.When inputting perfect posteriors from Bayesian experts, the aggregator faces a challenge in reducing the weight of shared information, which is repeatedly considered by the two experts. As $\lambda$ decreases from $1$ to $\frac{1}{2}$, such repeat consideration is progressively mitigated. Thus the aggregator has a chance to perform better.

%\yk{Any intuitive explanation here?}

\paragraph{\textbf{New Aggregators: Balancing the Base Rate}} We provide a closed-form aggregator $f$ with numerical regret $R(f)$ of only 0.013 (see our aggregator in Figure~\ref{fig: main results}). This demonstrates nearly optimal performance without knowing experts' true prior consideration degree $\lambda$. In detail, we design a family of $\hat{\lambda}$-base rate balancing aggregators. Each of them assumes the experts incorporate the prior at a specific $\hat{\lambda}$ degree and balance the commonly shared prior and experts' private insights under this assumption. These aggregators do not know the exact prior value. Instead, they use the average of experts' predictions as a proxy of the prior just as what an existing aggregator, the average prior, does. Particularly, the average prior aggregator is a special one of this family with $\hat{\lambda}$ assumed to be $1$. With $\hat{\lambda} = 0.7$, we get the aggregator shown in Figure~\ref{fig: main results} which performs generally well for all $\lambda$. 

% \yk{give the new aggregator a name, shortly explain the design idea of the new aggregator and the intuitive difference between this aggregator and prior aggregators.}\shu{we may mention that the family of aggregators include average prior.}
% \yk{perhaps move some figures in advance}

% \yk{前后最好都一一对应，看看怎么组织，或者每个结果对应一个section: e.g. 1. Rlambda(f) is v shaped 2. degree = \frac{1}{2} achieve zero regret 3. infRg's lower/upper bound are v shaped 4. Near-optimal f regarding R(f)}

\paragraph{\textbf{Empirical Evaluation of Aggregators}} To empirically quantify the consideration degree of base rate and evaluate the performance of various aggregators, we conduct a study to gather predictions across tens of thousands of discrete information structures spanning the entire spectrum. % \shu{(Section \ref{study})}. 
The results are multidimensional. 
First, people exhibit a significant degree of heterogeneity, with some ignoring the base rate ($\lambda$ approaching 0), and some applying the Bayesian rule ($\lambda$ approaching 1). A certain proportion of participants fall outside the theoretical range between perfect base rate neglect and Bayesian. For instance, some place very high emphasis on the base rate, or even report only the base rate itself. Furthermore, simple average aggregator outperforms the family of $\hat{\lambda}$-base rate balancing aggregators 
%\shu{\sout{and average prior aggregator}} 
in terms of square relative loss in the whole sample. 
However, when focusing on the subset of predictions exhibiting base rate neglect, there are some $\hat{\lambda}$-base rate balancing aggregators ($\hat{\lambda} < 1$) that performs better than both simple average and average prior aggregators. 
Lastly, base rate neglect alone does not compromise aggregation performance as long as an appropriate $\hat{\lambda}$-base rate balancing aggregator is chosen.

%1）people are not Bayesian, but some proportion ignore base rate, some highlight base rate, 2) base rate neglect people needs base rate balanced aggregator with lambda<1 3) base rate neglect does not hurt aggregation performance 4) in the general population with all types, simple average may performs the best

%To summarize, we provide a starting point to consider base rate neglect in robust forecast aggregation, 
%\shu{we use robust forecast aggregation before, are the two interchangeable?} 
%revealing surprising benefits of partial base rate neglect through the non-monotonicity of regret, and proposing a new near-optimal aggregation method that is robust across different consideration degrees of base rate.
%\shu{I suggest we unify the expression: consideration degree of base rate}.

\subsection{Related Work}
Forecast aggregation is widely studied. Many studies explore various aggregating methodologies theoretically and empirically such as \citet{clemen1986combining, stock2004combination, jose2008simple, baron2014two, satopaa2014combining}. 
Our work focuses on prior-free forecast aggregation, where an ignorant aggregator without access to the exact information structure is required to integrate predictions provided by multiple experts. There exists a body of work that studies the performance of the ignorant aggregator in a robust framework, where aggregators' efficacy is measured by the worst-case among a set of possible information structures. %\yk{check this}

%Literature explores various aggregating methodologies including simple techniques such as averaging~\cite{clemen1986combining}, extremization~\cite{baron2014two, satopaa2014combining}, random dictator~\cite{neyman2022you}, and furthur variations~\cite{jose2008simple}. A series of works study the performance of aggregation rules theoretically and empirically \cite{neyman2022you, clemen1986combining, stock2004combination, satopaa2014combining, jose2008simple}.

\paragraph{\textbf{Robust Aggregation}}
\citet{arieli2018robust} propose this robust framework by considering an additive regret formulation compared to an omniscient benchmark. In this study, low-regret aggregators for two agents are presented under the assumptions of Blackwell-ordered and conditionally independent structures. \citet{neyman2022you} consider aggregators with low approximation ratio under both the prior-free setting and a known prior setting where the aggregator knows not only the experts’ predictions but also the prior likelihood of the world state. Their analysis is performed within a set of informational substitutes structures, which is termed \sout{as} projective substitutes. \citet{levy2022combining} study the robust prediction aggregation under a setting where the marginal distributions of the forecasters are known but their joint correlation structure is unobservable.  
\citet{de2021robust} consider a similar setting to \citet{levy2022combining} while studying a robust action decision problem where an optimal action is selected among a finite action space based on multiple experiment realizations whose isolated distribution is known. In addition, \citet{babichenko2021learning} considers the forecast aggregation problem in a repeated setting, where the optimal forecast at each period is considered as the benchmark. \citet{guo2024algorithmic} propose an algorithmic framework for general information aggregation with a finite set of information structures.

All the above work assumes experts are Bayesian. In contrast, we consider the case where experts display base rate neglect. Such bias is widely studied in economic and psychological literature.

\paragraph{\textbf{Base Rate Neglect}} Start from seminal work of \citet{kahneman1973psychology}, a series of studies focus on the phenomenon of deviation from Bayesian updating by ignoring the unconditional probability, which is named base rate base rate neglect. The bias is examined across various subjects, including doctors \citep{eddy1982m}, law students \citep{eide2011two}, or even pigeons \citep{fantino2005teaching}. See the related survey papers for a systematic review of research related to base rate neglect \citep{koehler1996base,barbey2007base,benjamin2019errors}. 

% (e.g. \citep{fischhoff1984diagnosticity,ginossar1987problem,gigerenzer1988presentation}) 

%\shu{If we want to show references by example? which format is better? in econ tradition, maybe the latter}

Early studies mainly focus on the psychological mechanism explaining base rate neglect \citep{kahneman1973psychology,nisbett1976popular,bar1980base}. Then researchers begin to investigate the factors that may influence the degree of base rate neglect, such as uninformative description [e.g., \citealp{fischhoff1984diagnosticity,ginossar1987problem,gigerenzer1988presentation}], training and feedback \citep{goodie1999does,esponda2023mental}, framing \citep{barbey2007base}, variability of prior and likelihood information \citep{yang2020base}. For example, \citet{esponda2023mental} investigate the persistent base rate neglect when feedback is provided, and examine several potential mechanisms that inhibit the effect of learning. 

Recent works provide new mechanisms and implications to understand base rate neglect. For instance, \citet{yang2020base} further illustrate the neurocomputational substrates of base rate neglect. \citet{benjamin2019base} extend the previous formalizations of base-rate neglect and broadly examine its implications such as persuasion and reputation-building. However, few studies consider the impact of base rate neglect and how to deal with predictions based on it, especially in the process of information aggregation.

\section{Problem Statement}\label{section: problem statement}
We follow \citet{arieli2018robust}'s setting: There are two possible world states $\omega \in \Omega = \{0,1\}$. Two experts each receive a private signal that provides information about the current world state. For expert $i$, the signal $S_i$ comes from a discrete signal space $\mathcal{S}_i$. The overall signal space for all experts is denoted as $\mathcal{S} = \mathcal{S}_1 \times \mathcal{S}_2$. 

The relationship between the world states and the experts' signals is characterized by the information structure, $\theta$, which belongs to the set $\Delta_{\Omega \times \mathcal{S}}$. In this work, we assume the experts' signals are independent conditional on the world state. %\shu{when conditioned on -> conditional on ?} 
 We denote the set of information structures that align with this assumption as $\Theta$.

While experts are aware of the information structure $\theta$ and receive private signals, there is a decision maker who is uninformed about $\theta$ but interested in determining the true world state $\omega$. The decision maker obtains predictions from the experts regarding the likelihood of $\omega$ being 1. These predictions may vary as each expert has access to different signals. An aggregator is required to integrate experts' predictions into an aggregated forecast.

Formally, an aggregator is a deterministic function $f:[0,1]^2 \to [0,1]$, which maps experts' prediction profile $\mathbf{x} = (x_1, x_2)$ to a single aggregated result.  
The decision maker wants to find a robust aggregator that works well across all possible information structures in $\Theta$.

Unlike previous work by \citet{arieli2018robust} %\shu{\cite{arieli2018robust} -> by \citet{arieli2018robust}?} 
where the experts are modeled as Bayesian agents, we consider experts' base rate fallacy and employ the model introduced in the introduction. The relationship between the perfect Bayesian and the posterior that considers base rate neglect has been stated in the introduction. We defer the proof to Appendix~\ref{app:ps}.  %\yk{mention deferring to appendix if we defer}

%Specifically, we assume experts exhibit base rate neglect, meaning that they tend to disregard the prior probability of the world state (i.e., $\mu = \Pr_{\theta}[\omega = 1]$). \yk{We follow the base rate neglect model in XX and formalize...} \yw{I CANNOT find the source of this model now} We formalize the prediction of expert $i$ upon receiving signal $s_i$ as: $$x_i(s_i, \lambda_i) = \frac{\mu^{\lambda_i}\cdot \Pr[s_i\mid \omega = 1]}{\mu^{\lambda_i}\cdot \Pr[s_i\mid \omega = 1]+(1-\mu)^{\lambda_i}\cdot \Pr[s_i\mid \omega = 0]},$$ where $\lambda_i$ quantifies the degree to which expert $i$ considers the prior probability in her prediction. 
% Expert $i$ is fully Bayesian when $\lambda_i = 1$, and displays perfect base rate neglect (i.e., fully ignores the prior) when $\lambda_i = 0$.

\begin{appendixproof} 
\label{app:ps}
The Bayesian posterior of expert $i$ upon receiving signal $s_i$ is \[\bayes_i(s_i)=\frac{\mu\cdot \Pr[S_i = s_i\mid \omega = 1]}{\mu\cdot \Pr[S_i = s_i\mid \omega = 1]+(1-\mu)\cdot \Pr[S_i = s_i\mid \omega = 0]}.\]
By normalizing the numerator of the Bayesian posterior, we simplify the expression to 
\[\bayes_i(s_i)=\cfrac{1}{1 + \cfrac{1-\mu}{\mu} \cdot \cfrac{\Pr[S_i = s_i\mid \omega = 0]}{\Pr[S_i = s_i\mid \omega = 1]}}.\]
Further transforming this expression, we get
\[\frac{1 - \bayes_i(s_i)}{\bayes_i(s_i)}=\frac{1}{\bayes_i(s_i)} - 1 = \frac{1-\mu}{\mu} \cdot \frac{\Pr[S_i = s_i\mid \omega = 0]}{\Pr[S_i = s_i\mid \omega = 1]}.\]
    
    Analogously, for the expert's prediction $\brn_i(s_i, \lambda)$, \[\frac{1-\brn_i(s_i, \lambda)}{\brn_i(s_i, \lambda)} = \left(\frac{1-\mu}{\mu}\right)^{\lambda} \cdot \frac{\Pr[S_i = s_i\mid \omega = 0]}{\Pr[S_i = s_i\mid \omega = 1]}.\]
Thus, \[\frac{1 - \brn_i(s_i, \lambda)}{\brn_i(s_i, \lambda)} = \left(\frac{1-\mu}{\mu}\right)^{\lambda - 1} \cdot\frac{1 - \bayes_i(s_i)}{\bayes_i(s_i)}.\]

Taking the logarithm of these ratios, we derive 
    \[ \logit (\brn_i(s_i, \lambda))=\logit(\bayes(s_i))-(1-\lambda)\logit(\mu).\]
    
Moreover, consider  $\frac{1}{p} - 1 = \frac{1 -p}{p}$ and view $\brn_i(s_i, \lambda)$ as $p$, we have \[\brn_i(s_i, \lambda) = \cfrac{1}{1 + \frac{(1-\mu)^{\lambda - 1}}{\mu^{\lambda - 1}} \cdot \frac{1 - \bayes_i(s_i)}{\bayes_i(s_i)}}.\]

Further transformation derives  \[
\brn_i(s_i, \lambda) = \frac{(1-\mu)^{1-\lambda}\cdot \bayes(s_i) }{(1-\mu)^{1-\lambda}\cdot \bayes(s_i)+\mu^{1-\lambda}\cdot (1-\bayes(s_i))}
.\]
\end{appendixproof}

As a preliminary step in the investigation of the base rate fallacy in information aggregation, we assume both experts have a consistent consideration degree of base rate.

\subsection{Aggregator Evaluation}\label{section: criteria}
To evaluate the performance of an aggregator $f$, we adopt the regret definition from \citet{arieli2018robust}. For a given base rate consideration degree $\lambda$, the regret of an aggregator $f$ is defined as:
$$R_\lambda(f) = \sup_{\theta \in \Theta}{\mathbb{E}_{\theta}
[ L(f(\mathbf{x}(\mathbf{s}, \lambda)), \omega) - L(f^*(\mathbf{s}), \omega)]}.$$

In this definition, an unachievable \textit{omniscient aggregator} $f^*$, who knows the information structure $\theta$ and all experts' signals and outputs the Bayesian posterior, serves as a benchmark. Let $f^*(\mathbf{s})$ denote the Bayesian posterior upon signal profile $\mathbf{s} = (s_1, s_2)$.
In contrast, the aggregator $f$ does not know $\theta$ and only inputs the experts' prediction profile $\mathbf{x}(\mathbf{s}, \lambda) = (\brn_1(s_1, \lambda), \brn_2(s_2,\lambda))$.

Formula $L(f(\mathbf{x}(\mathbf{s}, \lambda)), \omega) - L(f^*(\mathbf{s}), \omega)$ corresponds to the accuracy loss of aggregator $f$ compared to $f^*$ on signal profile $\mathbf{s}$ and true world state $\omega$, where we use loss function $L:[0,1] \times\Omega \to \mathbb{R}^+$ to measure the forecast accuracy. Particularly, we employ square loss, i.e., $L(p, \omega) = (p - \omega)^2$.
The relative loss of $f$ is computed as the expected accuracy loss, where the expectation is taken over the sampling of the truth state and signals. We also name this relative loss as the regret at some structure $\theta$ later.

The regret $R_\lambda(f)$ considers the worst-case relative loss, whereas the worst-case refers to the information structure that maximizes the relative loss. As mentioned in the introduction, we propose a new framework that measures the overall regret of aggregator $f$ under unknown prior consideration degree $\lambda$: $R(f) = \sup_{\lambda \in [0, 1]}\{R_\lambda(f) - \inf_{g}{R_\lambda(g)}\}.$
This definition quantifies the maximal gap between the regret of aggregator $f$ and the optimal regret achievable by the best possible aggregator $g$. An aggregator with a low overall regret performs well for every possible $\lambda$.

% The relative loss is computed as the expected accuracy loss between the aggregator $f$ and the benchmark $f^*$, where the expectation is taken over the sampling of the truth state and signals. (We will also name this relative loss as the regret at some structure $\theta$ later) \shu{brackets are not needed} We use loss function $L:[0,1] \times\Omega \to \mathbb{R}^+$ to measure the forecast accuracy of the aggregation result regarding the true state $\omega$. Particularly, we employ square loss, i.e., $L(p, \omega) = (p - \omega)^2$. Formula $L(f(\mathbf{x}(\mathbf{s}, \lambda)), \omega) - L(f^*(\mathbf{s}), \omega)$ corresponds to the accuracy loss on signal profile $\mathbf{s}$ and truth state $\omega$. \shu{Maybe logically, we should first introduce $f$ as the first paragraph does, then relative loss in this paragraph, and lastly, introduce regret after knowing the specific element in $\mathbb{E}_{\theta}$ as those in the last paragraph.}

The below is a useful claim that we will repeatedly use with squared loss.  

\begin{claim}[Alternative Formula for the Relative Loss \cite{arieli2018robust}]\label{lemma: square relative loss}
The relative square loss between $f$ and the omniscient aggregator $f^*$ can be expressed as:

    \[\mathbb{E} [(f(\mathbf{x}) - \omega)^2 - (f^*(\mathbf{s}) - \omega)^2] = \mathbb{E} [(f(\mathbf{x}) - f^*(\mathbf{s}))^2]\]
\end{claim}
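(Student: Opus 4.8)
The plan is to exploit the defining property of the omniscient aggregator, namely that $f^*(\mathbf{s})$ is exactly the Bayesian posterior $\Pr[\omega = 1 \mid \mathbf{s}] = \mathbb{E}[\omega \mid \mathbf{s}]$. The whole identity will then follow from a single application of the law of total expectation, conditioning on the full signal profile $\mathbf{s}$. This is the standard Pythagorean/orthogonality decomposition of squared error around a conditional mean.

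First I would make the key measurability observation: for a fixed consideration degree $\lambda$, the aggregator's output $f(\mathbf{x}) = f(\mathbf{x}(\mathbf{s}, \lambda))$ is a deterministic function of $\mathbf{s}$, because each report $\brn_i(s_i, \lambda)$ is determined by $s_i$. Hence, conditional on $\mathbf{s}$, both $f(\mathbf{x})$ and $f^*(\mathbf{s})$ are constants, and only $\omega$ remains random.

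Next I would reduce the two sides to a single cross term by elementary algebra. Writing $a = f(\mathbf{x})$ and $b = f^*(\mathbf{s})$, the integrand on the left expands as $(a-\omega)^2 - (b-\omega)^2 = a^2 - b^2 - 2\omega(a-b)$, and subtracting the integrand on the right, $(a-b)^2 = a^2 - 2ab + b^2$, leaves exactly $2(a-b)(b-\omega)$. Thus the difference of the two sides equals $2\,\mathbb{E}[(f(\mathbf{x}) - f^*(\mathbf{s}))(f^*(\mathbf{s}) - \omega)]$, and it suffices to show this vanishes.

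Finally I would condition on $\mathbf{s}$ and apply the tower property: the factor $f(\mathbf{x}) - f^*(\mathbf{s})$ is $\mathbf{s}$-measurable and pulls outside the inner expectation, while $\mathbb{E}[f^*(\mathbf{s}) - \omega \mid \mathbf{s}] = f^*(\mathbf{s}) - \mathbb{E}[\omega \mid \mathbf{s}] = 0$ by the definition of $f^*$. Hence the conditional expectation of the product is zero for every $\mathbf{s}$, and taking the outer expectation over $\mathbf{s}$ gives the claimed equality. I do not anticipate a genuine obstacle here; the only point requiring care is the measurability observation that $f(\mathbf{x})$ depends on $\mathbf{s}$ alone (for fixed $\lambda$), which is what licenses pulling it out of the conditional expectation.
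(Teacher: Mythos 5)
Your proposal is correct and follows essentially the same route as the paper: both arguments condition on the signal profile $\mathbf{s}$, use that $f(\mathbf{x})$ and $f^*(\mathbf{s})$ are determined by $\mathbf{s}$, and invoke $\mathbb{E}[\omega\mid\mathbf{s}]=f^*(\mathbf{s})$ to kill the cross term. The paper simply expands the conditional expectation of the left-hand side directly and simplifies to $(f(\mathbf{x})-f^*(\mathbf{s}))^2$, whereas you isolate the cross term $2(f(\mathbf{x})-f^*(\mathbf{s}))(f^*(\mathbf{s})-\omega)$ first; the mathematical content is identical.
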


%\yk{citation for this claim, maybe from old literature}

The relative loss can be written as the expected squared loss between $f$ and $f^*$ under the square loss function. We defer the proof of this claim to Appendix~\ref{app:claim loss}. Intuitively, the closer the aggregated forecast $f(\mathbf{x})$ is to the omniscient prediction $f^*(\mathbf{s})$, the smaller the relative loss becomes. If an aggregator can output the Bayesian aggregator's posterior at some structure $\theta$, then the relative loss of it under this $\theta$ is exactly zero.

\begin{appendixproof}
\label{app:claim loss}
    We prove this equation for any signal profile $(s_1, s_2)$ and any report profile $(x_1, x_2)$.
   \begin{align*}
&\mathbb{E}\left[\left(f(x_1, x_2)- \omega\right)^2 - \left(f^*(s_1, s_2)-\omega\right)^2\mid S_1 = s_1, S_2 = s_2\right] \\
&\mathbb{E}\left[f(x_1, x_2)^2 - 2\omega\cdot f(x_1, x_2) - f^*(s_1, s_2)^2 +2\omega\cdot f^*(s_1, s_2) \mid S_1 = s_1, S_2 = s_2 \right] \\
= &f(x_1,x_2)^2 - f^*(s_1,s_2)^2 + 2 \mathbb{E}\left[\omega \mid S_1 = s_1, S_2 = s_2 \right]\cdot \left(f^*(s_1, s_2) - f(x_1,x_2)\right) \\
      = & f(x_1, x_2)^2 - f^*(s_1, s_2)^2  + 2f^*(s_1, s_2)\left(f^*(s_1, s_2) - f(x_1,x_2)\right)\\
      = &f(x_1, x_2)^2 + f^*(s_1,s_2)^2  - 2f^*(s_1, s_2)\cdot  f(x_1,x_2)\\
      = &\left(f(x_1, x_2)- f^*(s_1, s_2)\right)^2
   \end{align*}
\end{appendixproof}

%\shu{Is it correct? or I misunderstand. The relative loss cannot be 0 when outputting Bayesian posterior. Or it is misleading by using ``Bayesian posterior'' because we mostly refer it to the Bayesian answer as opposite to BRN.}

%\paragraph{\textbf{A New Framework under Base Rate Neglect}}
%\paragraph{\textbf{A New Framework with Unknown $\lambda$}} 

\section{Warm Up: the Omniscient Aggregator}
As we mentioned before, the omniscient aggregator $f^*$ is compared to assess the aggregator's regret.
This omniscient aggregator possesses complete knowledge about the underlying information structure $\theta$ and experts' signals. It works as a Bayesian aggregator that takes experts' signals as input and utilizes its knowledge about $\theta$ to output the Bayesian posterior upon experts' signals. %\shu{I am little confused here because omniscient aggregator should also require information structure as input. We can omit it by our conditional independent setting, but we have not mentioned it so far. Actually, it is introduced in the next paragraph.} 
Formally, \begin{align*}
f^*(s_1, s_2) = &\Pr[\omega = 1 \mid S_1 = s_1, S_2 = s_2]\\
= &\cfrac{\Pr[\omega = 1, S_1 = s_1, S_2 = s_2]}{\sum_{\sigma \in \{0,1\}}\Pr[\omega = \sigma, S_1 = s_1, S_2 = s_2]}.
\end{align*}
Particularly, in our conditionally independent setting, the calculation of this Bayesian aggregator's posterior does not rely on the knowledge of joint distribution $\theta$. The experts' predictions, the base rate consideration degree $\lambda$, and the prior $\mu$ are enough to obtain this posterior.

\begin{observation}\label{lemma: omniscient aggregator}
    Given the prior $\mu$, the base rate consideration degree $\lambda$, and the experts' prediction profile $(x_1, x_2) = (\brn_1(s_1, \lambda), \brn_2(s_2,\lambda))$, the Bayesian aggregator’s posterior is \[f^*(s_1, s_2) = \frac{(1-\mu)^{2\lambda - 1}x_1x_2}{(1-\mu)^{2\lambda - 1}x_1x_2 + \mu^{2\lambda - 1}(1-x_1)(1-x_2)}.\]
\end{observation}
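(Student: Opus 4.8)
The plan is to avoid manipulating the unwieldy rational expression directly and instead pass to log-odds, where both the conditional-independence structure and the base-rate-neglect correction become additive. Concretely, I will prove the equivalent statement that
\[
\logit(f^*(s_1,s_2)) = \logit(x_1) + \logit(x_2) + (1-2\lambda)\,\logit(\mu),
\]
and then exponentiate and invert the logit to recover the displayed rational form. This reduces the whole argument to assembling two linear identities and a final algebraic cleanup.

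The first ingredient is a log-odds additivity formula for the omniscient posterior under conditional independence. Starting from $f^*(s_1,s_2)=\Pr[\omega=1\mid S_1=s_1,S_2=s_2]$, I would apply Bayes' rule and then use the assumption that $\Pr[s_1,s_2\mid\omega]=\Pr[s_1\mid\omega]\Pr[s_2\mid\omega]$ to factor the likelihood. Taking the ratio $f^*/(1-f^*)$ and then logarithms yields
\[
\logit(f^*(s_1,s_2)) = \logit(\mu) + \log\frac{\Pr[s_1\mid\omega=1]}{\Pr[s_1\mid\omega=0]} + \log\frac{\Pr[s_2\mid\omega=1]}{\Pr[s_2\mid\omega=0]}.
\]
Since each single-signal Bayesian posterior satisfies $\logit(\bayes_i(s_i))=\logit(\mu)+\log\frac{\Pr[s_i\mid\omega=1]}{\Pr[s_i\mid\omega=0]}$ by the same Bayes-rule computation (already carried out in Appendix~\ref{app:ps} for the single-expert case), this rearranges to the clean statement $\logit(f^*)=\logit(\bayes_1(s_1))+\logit(\bayes_2(s_2))-\logit(\mu)$.

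The second ingredient is simply the relation from Observation~\ref{lemma: BNR prediction vs Bayesian}, which I rewrite as $\logit(\bayes_i(s_i))=\logit(x_i)+(1-\lambda)\logit(\mu)$, recalling $x_i=\brn_i(s_i,\lambda)$. Substituting both expressions into the additivity formula and collecting the $\logit(\mu)$ terms gives $2(1-\lambda)-1 = 1-2\lambda$ as the coefficient, which is exactly the claimed log-odds identity. Exponentiating turns the sum into a product, so that $\frac{f^*}{1-f^*}=\frac{x_1x_2}{(1-x_1)(1-x_2)}\left(\frac{\mu}{1-\mu}\right)^{1-2\lambda}=\frac{(1-\mu)^{2\lambda-1}x_1x_2}{\mu^{2\lambda-1}(1-x_1)(1-x_2)}$, and solving $p/(1-p)=r$ for $p$ produces the stated closed form.

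I do not expect a serious obstacle here, since after the reduction everything is linear in log-odds; the only place that genuinely uses the hypothesis is the likelihood factorization, so I would make sure to invoke conditional independence explicitly at that single step. The main care needed is bookkeeping on the exponents of $\mu$ and $1-\mu$ when passing between the $\frac{\mu}{1-\mu}$ and $\frac{1-\mu}{\mu}$ conventions during exponentiation; writing the coefficient as $1-2\lambda$ throughout and converting to $2\lambda-1$ only at the final step keeps the sign of the exponent unambiguous and matches the target expression.
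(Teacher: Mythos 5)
Your proposal is correct and is essentially the paper's own argument: both proofs first use Bayes' rule plus conditional independence to express $f^*(s_1,s_2)$ through the two single-signal Bayesian posteriors and the prior, then substitute the relation of Observation~\ref{lemma: BNR prediction vs Bayesian} to replace $\bayes_i(s_i)$ by $x_i$ and collect the resulting power of $\mu/(1-\mu)$ into the exponent $2\lambda-1$. The only difference is notational --- you carry out the bookkeeping additively in log-odds while the paper works multiplicatively with the odds ratios $\frac{1-p}{p}$ --- so the two derivations are term-for-term equivalent.
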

We defer the proof to Appendix~\ref{app:omniscient}.
The conditionally independent assumption plays a crucial role in formulating the aggregator's posterior through the individual predictions of experts. %\shu{Not understand here, especially those after ``by''.}

%experts do not consider the prior. 
When $\lambda = 0$,the prediction profile $(x_1, x_2) = (\brn_1(s_1, 0), \brn_2(s_2,0))$ showcases the relative ratio in frequencies of signals $(s_1, s_2)$ under state $\omega = 1$ compared to their frequencies under state $\omega = 0$. %experts do not consider the prior and
%When $\lambda = 0$, indicating that experts do not consider the prior, profile $(x_1, x_2) = (\brn_1(s_1, 0), \brn_2(s_2,0))$ showcases the relative ratio in frequencies of signals $(s_1, s_2)$ under state $\omega = 1$ compared to their frequencies under state $\omega = 0$.
The aggregated result in this case is given by \(\frac{\mu x_1x_2}{\mu x_1x_2 + (1-\mu)(1-x_1)(1-x_2)}.\) %\shu{I incorporate the formula into the paragraph instead of presenting in a new line.} 
As $\lambda$ increases, indicating a higher degree of prior consideration by the experts, the influence of $\mu$ in the Bayesian aggregator's posterior is correspondingly diminished.
%we accordingly decrease\hide{\shu{as $\lambda$ increases, consideration degree increases}} the extent to which we incorporate $\mu$ into the aggregator's posterior.
When $\lambda = 1$,  profile $(x_1, x_2) = (\brn_1(s_1, 1), \brn_2(s_2,1))$ corresponds to individual experts' Bayesian posteriors. The aggregation formula becomes \(\frac{(1-\mu)x_1x_2}{(1-\mu)x_1x_2 + \mu(1-x_1)(1-x_2)}\).
    
%which instructs 
%\shu{instructing -> to instruct?} 
%how to integrate the individual Bayesian posterior into the Bayesian aggregator's posterior. \shu{Are the terms 'individual Bayesian posterior' and 'overall Bayesian posterior' widely accepted? Throughout the paper, we do not explicitly highlight the distinction and mostly use 'Bayesian posterior' to refer to the individual one.}

\begin{appendixproof}[Proof of Observation~\ref{lemma: omniscient aggregator}]
\label{app:omniscient}
For a concise presentation, we shorten the notation $\bayes_i(s_i)$ and $\brn_i(s_i, \lambda)$ to $\bayes_i$ and $\brn_i$ in this proof. 

In our conditionally independent setting, the Bayesian posterior upon signal profile $\mathbf{s} = (s_1, s_2)$ can be rewritten using the prior distribution of the state and the perfect Bayesian posteriors of the experts as below. 
\begin{align*}
    \Pr[\omega = 1 \mid S_1 = s_1, S_2 = s_2] = &\frac{\Pr[\omega = 1, S_1 = s_1, S_2 = s_2]}{\sum_{\sigma \in \{0,1\}}\Pr[\omega = \sigma, S_1 = s_1, S_2 = s_2]}\\
= &\cfrac{\Pr[\omega = 1] \prod_{i \in \{1,2\}}{\Pr[S_i = s_i \mid \omega = 1]}}{\sum_{\sigma \in \{0,1\}}\Pr[\omega = \sigma] \prod_{i \in \{1,2\}}{\Pr[S_i = s_i \mid \omega = \sigma]}} \tag{by conditionally independent assumption}\\
= &\frac{\Pr[\omega = 1]^{-1} \prod_{i \in \{1,2\}}{\left(\Pr[S_i = s_i]\cdot \Pr[\omega = 1\mid S_i = s_i]\right)}}{\sum_{\sigma \in \{0,1\}}{\Pr[\omega = \sigma]^{-1} \prod_{i \in \{1,2\}}{\left(\Pr[S_i = s_i]\cdot \Pr[\omega = \sigma\mid S_i = s_i]\right)}}}\tag{by Bayes' Theorem}\\
&= \frac{\mu^{-1}\bayes_1\bayes_2}{\mu^{-1}\bayes_1\bayes_2 + (1 - \mu)^{-1}(1 - \bayes_1)(1 - \bayes_2)}.\tag{by the law of total probability, $\Pr[\omega = 0\mid S_i = s_i] = 1 - \bayes_i$}
\end{align*}

Using Observation~\ref{lemma: BNR prediction vs Bayesian}, we can replace the perfect Bayesian posterior $\bayes_i$ by the expert's prediction which exhibits base rate neglect $\brn_i$ and the prior consideration degree $\lambda$:
\begin{align*}
    \Pr[\omega = 1 \mid S_1 = s_1, S_2 = s_2] & = \cfrac{1}{1 + \cfrac{\mu}{(1 - \mu)}\cdot\cfrac{1 - \bayes_1}{\bayes_1}\cdot\cfrac{1 - \bayes_2}{\bayes_2}}\\
    & = \cfrac{1}{1 + \cfrac{\mu}{(1 - \mu)}\cdot\left(\cfrac{1-\mu}{\mu}\right)^{(1-\lambda)\times 2}\cdot\cfrac{1 - \brn_1(s_1, \lambda)}{\brn_1(s_1, \lambda)}\cdot\cfrac{1 - \brn_2(s_2, \lambda)}{\brn_2(s_2, \lambda)}}\\
    & = \cfrac{1}{1 + \cfrac{\mu^{2\lambda - 1}}{(1 - \mu)^{2\lambda - 1}}\cdot\cfrac{1 - \brn_1(s_1, \lambda)}{\brn_1(s_1, \lambda)}\cdot\cfrac{1 - \brn_2(s_2, \lambda)}{\brn_2(s_2, \lambda)}}\\
    &= \cfrac{(1-\mu)^{2\lambda - 1}\brn_1\brn_2}{(1-\mu)^{2\lambda - 1}\brn_1\brn_2 + \mu^{2\lambda - 1}(1-\brn_1)(1-\brn_2)}.
\end{align*}

\end{appendixproof}

\section{V-shape of Regret Curves}
%\shu{why do we start from claim 2 and then observation 3 and theorem 4. I think they should be separately numbered from 1.}

%\yw{Stop here.}
In this section, we study how the degree $\lambda$ affects regret. Our theoretical results demonstrate the single-trough of all regret curves. 
%\shu{theoretical result demonstrates -> theoretical results demonstrate?}

%Theorem~\ref{theorem: V-shaped upper bound} provides a theoretical characterization of the shape of regret $R_\lambda(f)$ for any aggregator $f$.

%\yk{formally define v-shaped here, I delete it from the introduction}

% \begin{definition}[V-shaped Function]\label{def: V-shape function}
% A function $f(x)$ is said to be V-shaped in interval $[a, b]$ if it satisfies one of the following two conditions as $x$ varies from $a$ to $b$:
% \begin{itemize}
% \item $f(x)$ is first monotonically decreasing and then monotonically increasing.
% \item $f(x)$ is monotonically increasing/decreasing throughout the interval. 
% \end{itemize}
% \end{definition}
%\shu{is it more precise to express them in the following structure? e.g., $f(x)$ first monotonically decreases and then monotonically increases.}\yk{merged}

\begin{theorem}[Regret Curves Are Single-troughed]\label{theorem: V-shaped upper bound}
    For any aggregator $f: [0,1]^2 \to [0,1]$, the regret $R_\lambda(f)$ is either monotone or first monotonically decreasing and then monotonically increasing for the base rate consideration degree $\lambda$. %We call such functions' curves V-shaped.
    We call such curves single-troughed. 
\end{theorem}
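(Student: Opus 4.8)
Observe first that ``single-troughed'' is exactly quasiconvexity on the interval $[0,1]$: a function on an interval is monotone or first-decreasing-then-increasing iff each sublevel set $\{\lambda : R_\lambda(f)\le t\}$ is itself an interval. Since $R_\lambda(f)=\sup_{\theta\in\Theta} r_\lambda(\theta,f)$ with $r_\lambda(\theta,f):=\mathbb{E}_\theta[(f(\mathbf{x})-f^*)^2]$, and since a pointwise supremum of quasiconvex functions is quasiconvex (its sublevel set is the intersection of intervals, hence an interval), it suffices to prove an \emph{exchange property}: for every structure $\theta_3$ and every $\lambda_3=\alpha\lambda_1+(1-\alpha)\lambda_2$ there exist structures $\theta_1,\theta_2$ with $r_{\lambda_3}(\theta_3,f)\le\max\{r_{\lambda_1}(\theta_1,f),\,r_{\lambda_2}(\theta_2,f)\}$. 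I would therefore spend the whole proof establishing this one inequality.

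\textbf{Logit coordinates.} The main simplification comes from Observations~\ref{lemma: BNR prediction vs Bayesian} and~\ref{lemma: omniscient aggregator}, which linearize everything: writing $m=\logit(\mu)$ and letting $\ell_i$ be the log-likelihood ratio of signal $s_i$, one has $\logit(\brn_i(s_i,\lambda))=\lambda m+\ell_i$ and $\logit(f^*)=m+\ell_1+\ell_2$. The crucial consequence is that, for a \emph{fixed} structure, the benchmark $f^*$ and the probability of each signal profile do not depend on $\lambda$ at all; only the two reports move, and they move solely through the common term $\lambda m$. Hence, holding the reports frozen, the per-profile loss $\bigl(f(x_1,x_2)-\sigma(\nu+\ell_1+\ell_2)\bigr)^2$, viewed as a function of the prior log-odds $\nu$, is a convex function of the benchmark composed with the monotone map $\nu\mapsto\sigma(\nu+\ell_1+\ell_2)$, and is therefore itself single-troughed (U-shaped) in $\nu$.

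\textbf{Matching reports across $\lambda$.} To exploit this, given $\theta_3$ with prior log-odds $m_3$ I would keep its conditional likelihood ``shapes'' $\Pr[s_i\mid\omega]$ but rescale the priors to $m_j=(\lambda_3/\lambda_j)\,m_3$, so that $\lambda_j m_j=\lambda_3 m_3$ for $j=1,2$. Then every signal profile produces exactly the same pair of reports under $\theta_j$ at $\lambda_j$ as under $\theta_3$ at $\lambda_3$, while the benchmark log-odds $m_3$ lies strictly between $m_1$ and $m_2$ (the degenerate case $m_3=0$, i.e.\ $\mu=\tfrac12$, is separate, since there the reports and the benchmark are $\lambda$-independent and the regret is constant). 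The U-shape in the prior log-odds then dominates the $\lambda_3$-loss by the larger of the two endpoint losses \emph{profile by profile}.

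\textbf{The main obstacle.} The difficulty is that $r_\lambda(\theta,f)$ is a prior-weighted \emph{average} over profiles, and neither quasiconvexity nor the profile-wise ``$\le\max$'' bound survives averaging; moreover rescaling the prior changes the weight $\mu=\sigma(m)$ that the structure places on each profile, so the endpoint averages are not taken against the same measure as the $\lambda_3$-average. Thus the per-profile domination does not immediately yield $r_{\lambda_3}(\theta_3,f)\le\max\{r_{\lambda_1}(\theta_1,f),r_{\lambda_2}(\theta_2,f)\}$. Resolving this is where I expect the real work to lie: I would try to reduce the supremum to a canonical family of worst-case structures (for instance binary-signal structures, or structures concentrated on extreme profiles) for which the average collapses to essentially a single U-shaped term and the prior-weight mismatch disappears, so that the one-dimensional single-trough argument in $\nu$ applies directly. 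A successful such reduction, combined with the ``supremum of quasiconvex is quasiconvex'' principle, would give the theorem.
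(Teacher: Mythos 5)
Your architecture is essentially the paper's: the ``sup of quasiconvex is quasiconvex'' principle is the paper's Lemma~\ref{lemma: V-shaped supremum}, your prior rescaling $m_j=(\lambda_3/\lambda_j)m_3$ is exactly the paper's transformation $t_\theta$ (whose $u(\lambda)$ satisfies $\logit(u(\lambda))=\logit(\mu)/\lambda$, freezing the reports while moving the benchmark), and the paper does perform the reduction to two-signal structures you mention (Lemma~\ref{lemma: reduction}). But the step you yourself flag as ``the main obstacle'' is a genuine gap, and the escape route you sketch would not work: even after restricting to two-signal structures the relative loss is still a probability-weighted sum of \emph{four} per-profile terms, and worst-case structures do not concentrate on a single signal profile, so there is no canonical family for which ``the average collapses to essentially a single U-shaped term.''

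The paper closes this gap differently, and this is the heart of its proof: parametrize by the prior probability $u$ itself and show that each \emph{weighted} per-profile loss
\[
\phi_\theta^{\mathbf{s}} \;=\; A\,u\left(\frac{C^2}{y(u)}+y(u)-2C\right),\qquad y(u)=\frac{1}{1+K\frac{1-u}{u}},
\]
is \emph{convex} in $u$, with explicit second derivative $\dfrac{2AK^2}{(K+(1-K)u)^3}\ge 0$. Convexity --- unlike the quasiconvexity you establish --- is closed under addition, so the four-term sum is convex in $u$, hence single-troughed in $u$, hence single-troughed in $\lambda$ after composing with the monotone map $\lambda\mapsto u(\lambda)$. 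Note the weighting matters: it is the product of the profile probability with the squared error, not the squared error alone, that is convex, and convexity in the log-odds variable $\nu$ you chose is not the same claim as convexity in $u$. Without upgrading your per-profile ``U-shape'' to convexity of the weighted terms in a single common parametrization, the averaging step does not go through and the argument does not close.
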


According to our definition, monotone functions are also single-troughed. Thus, we additionally define non-monotone single-troughed functions as V-shaped functions to distinguish. Intuitively, as the degree $\lambda$ increases, the experts become more Bayesian, and the aggregator's regret may decrease. However, Section~\ref{section: numerical results} illustrates the non-monotonicity, and thus, the V-shape of many aggregators, including the average prior aggregator which was previously designed to aggregate Bayesian experts~\cite{arieli2018robust}.

%\yk{perhaps follow the above flow to write the proof sketch}

The key observation used in proving this theorem is that the supremum of a family of single-troughed functions is still single-troughed. Though there does not exist a closed-form format for $R_\lambda(f)$, we will prove that $R_\lambda(f)$ is the supremum of a family of ``simple'' single-troughed functions. 

To achieve this, we first reduce the regret computation to a smaller structure space% $\Theta_4$ by Lemma~\ref{lemma: reduction}
, where each expert only receives two types of signals, i.e., signal $r$ or signal $b$ (we denote this space $\Theta_4$ because there are four distinct signals in total).
Then we construct a family of transformations on\hide{ the two-signal structure space} $\Theta_4$, denoted as $\{t_\theta\}_{\theta \in \Theta_4}$, and build a family of relative loss functions, each being single-troughed, denoted as $\{\phi_\theta\}_{\theta\in\Theta_4}$. %\shu{why $\Theta_4$} %We denote these transformations as $\{t_\theta\}_{\theta \in \Theta_4}$. 
In a transformation $t_\theta$, structure $\theta$ is adapted according to experts' prior consideration degree $\lambda$. At each value of $\lambda$, the adapted structure $t_\theta(\lambda)$ induces the same expert predictions as the perfect Bayesian posteriors under structure $\theta$ and then generates a specific relative loss. For a fixed structure $\theta$, its adaptations across different $\lambda$ values (i.e., $\{t_\theta(\lambda)\}_{0 < \lambda \le 1}$) derive the relative loss curve $\phi_\theta$. Moreover, if we fix the prior consideration degree $0 <\lambda \le 1$, then the ensemble of adapted structures at $\lambda$ degree (i.e. $\{t_\theta(\lambda)\}_{\theta\in\Theta_4}$) make up the whole structure space $\Theta_4$. Therefore, the regret function $R_\lambda(f)$, which assesses the supremum loss across all information structures at each point, can be viewed as the supremum of loss functions.

%\yw{maybe hide the following paragraph.}
%Each loss function is simple, with a definitive closed form. Through derivative analysis detailed in Lemma~\ref{lemma: V-shaped relative loss}, we show the loss functions are V-shaped. By our key observation, as shown in Lemma~\ref{lemma: V-shaped supremum}, %that the supremum operation preserves the V-shape, we can conclude the regret function $R_\lambda(f)$ for any arbitrary aggregator $f$ is V-shaped.

%In each transformation $t_\theta$, structure $t_\theta(\lambda)$ is designed to induce the same expert predictions as the perfect Bayesian posteriors under structure $\theta$. 

%These transformations allow us to shift our analysis from “a function of the supremum” to ``the supremum of a collection of functions'', where each function represents the relative loss for structures within a transformation. \yk{do not understand this. Regret is defined as supremum over information structures, not a function of supremum....}\yw{identical function}

% The formal proof of Theorem~\ref{theorem: V-shaped upper bound} is supported by the following lemmas, the proof of which is deferred for brevity and focus.

\begin{proof}[Proof of Theorem~\ref{theorem: V-shaped upper bound}]

To analyze the property of $R_\lambda(f)$, we first reduce the regret calculation of $f$ from the loss supremum across all conditionally independent structures in $\Theta$ to the loss supremum across two-signal structures in $\Theta_4$, i.e., $|\mathcal{S}_i| = 2$ for $i = 1,2$. We formally describe the statement as below. %in Lemma~\ref{lemma: reduction}, and defer its proof in Appendix.

\begin{lemma}[Reduction of Regret Computation]\label{lemma: reduction}
    In the conditionally independent setting, for any aggregator $f$ and any base rate consideration degree $\lambda$, $$R_\lambda(f) = \sup_{\theta \in \Theta_4}{\mathbb{E}_{\theta}
[ L(f(\mathbf{x}(\mathbf{s}, \lambda)), \omega) - L(f^*(\mathbf{s}), \omega)]},$$ where $\Theta_4$ is the set of all two-signal structures.% over $\{r_1,b_1\}\times\{r_2, b_2\}\times\{0,1\}$.
\end{lemma}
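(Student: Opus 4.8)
The plan is to prove the nontrivial inequality $\sup_{\theta\in\Theta}(\cdot)\le\sup_{\theta\in\Theta_4}(\cdot)$; the reverse is immediate, since every two-signal structure is conditionally independent, so $\Theta_4\subseteq\Theta$. The first step is to record the two facts that turn the problem into a finite-dimensional linear program for each fixed $\theta$. By \cref{lemma: square relative loss} the per-structure relative loss is $\mathbb{E}_\theta[(f(\mathbf x)-f^*(\mathbf s))^2]$, and by \cref{lemma: omniscient aggregator} the omniscient posterior $f^*(s_1,s_2)$ depends on the signal profile only through the two predictions $v_i:=\brn_i(s_i,\lambda)$ (together with the fixed $\mu$ and $\lambda$). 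Since $f$ also sees only $(v_1,v_2)$, the pointwise loss $g(v_1,v_2):=(f(v_1,v_2)-f^*(v_1,v_2))^2$ is a \emph{fixed} nonnegative function once $\mu,\lambda,f$ are fixed. Using conditional independence to write $\Pr_\theta[s_1,s_2]=\mu\Pr[s_1\mid 1]\Pr[s_2\mid 1]+(1-\mu)\Pr[s_1\mid 0]\Pr[s_2\mid 0]$, a finite structure $\theta$ has expected relative loss
\[ \mathbb{E}_\theta[\,\cdot\,]=\sum_{s_1,s_2}\Big(\mu\Pr[s_1\mid 1]\Pr[s_2\mid 1]+(1-\mu)\Pr[s_1\mid 0]\Pr[s_2\mid 0]\Big)\,g\big(v_1(s_1),v_2(s_2)\big). \]

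Next I would reduce expert $1$ to at most two signals while holding $\mu$ and expert $2$'s conditional marginals fixed. I parametrize each original expert-$1$ signal $s_1^{(j)}$ as a ray $(\Pr[s_1^{(j)}\mid 1],\Pr[s_1^{(j)}\mid 0])=t_j(\bar a_j,\bar b_j)$ and let the scalars $t_j\ge 0$ vary. Rescaling along the ray preserves the ratio $\bar a_j/\bar b_j$, hence preserves the prediction $v_1^{(j)}$ and therefore every value of $f^*$, $f$, and $g$; validity requires only the two linear constraints $\sum_j t_j\bar a_j=1$ and $\sum_j t_j\bar b_j=1$, which simultaneously keep each conditional a probability distribution and keep $\mu$ fixed. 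On this feasible polytope the displayed expected loss is a \emph{linear} functional of $(t_j)_j$: with expert $2$ and $g$ frozen, each term carries exactly one factor $t_j$. The all-ones vector is feasible (it is $\theta$ itself), and the polytope is bounded (each constraint forces $t_j\bar a_j\le1$ or $t_j\bar b_j\le1$).

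Therefore the maximum of this linear functional is attained at a basic feasible solution, which—having only two equality constraints—has at most two nonzero coordinates. This produces a structure with at most two expert-$1$ signals whose relative loss is at least $\mathbb{E}_\theta[\,\cdot\,]$. Finally I would repeat the identical one-sided step with the roles exchanged, rescaling expert $2$'s signals while holding the newly obtained two-signal expert $1$ and $\mu$ fixed, yielding a structure in $\Theta_4$ whose relative loss is still at least the original. Taking suprema over $\theta\in\Theta$ gives $\sup_{\Theta}\le\sup_{\Theta_4}$, and with the trivial reverse inclusion this is the claimed equality. The main obstacle is the bookkeeping that makes the objective genuinely linear: one must check that rescaling a signal's conditional likelihoods leaves all predictions—and thus, via \cref{lemma: omniscient aggregator}, both $f$ and $f^*$—untouched, so that only the mixture weights $\Pr_\theta[s_1,s_2]$ move and move linearly in $t$. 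The boundary predictions $v_i\in\{0,1\}$ (signals with $\bar a_j=0$ or $\bar b_j=0$) are handled uniformly by this ray parametrization, since one coordinate of the ray is simply zero.
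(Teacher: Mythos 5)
Your proposal is correct and follows essentially the same route as the paper's proof: both fix the predictions (hence $f$ and $f^*$) by holding the likelihood ratios constant, observe that the expected relative loss is then linear in the remaining signal-distribution parameters subject to two equality constraints, and conclude via basic feasible solutions with at most two nonzero entries. Your ray parametrization by scalars $t_j$ on the conditional likelihood pairs is a linear change of variables of the paper's parametrization by marginal signal probabilities $\mathbf{q}_i$ with fixed Bayesian posterior vectors, and your sequential one-expert-at-a-time reduction is the standard way to realize the paper's appeal to multilinearity over the product of the two polytopes.
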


\begin{appendixproof}[Proof of Lemma~\ref{lemma: reduction}]
\label{app:v shape}
    The inequality $$\sup_{\theta \in \Theta_4}{\mathbb{E}_{\theta}
[ L(f(\mathbf{x}(\mathbf{s}, \lambda)), \omega) - L(f^*(\mathbf{s}), \omega)]} \le \sup_{\theta \in \Theta}{\mathbb{E}_{\theta}
[ L(f(\mathbf{x}(\mathbf{s}, \lambda)), \omega) - L(f^*(\mathbf{s}), \omega)]}$$ follows directly by the fact that $\Theta_4 \subseteq \Theta$.

We're next to show $$\sup_{\theta \in \Theta_4}{\mathbb{E}_{\theta}
[ L(f(\mathbf{x}(\mathbf{s}, \lambda)), \omega) - L(f^*(\mathbf{s}), \omega)]} \ge \sup_{\theta \in \Theta}{\mathbb{E}_{\theta}
[ L(f(\mathbf{x}(\mathbf{s}, \lambda)), \omega) - L(f^*(\mathbf{s}), \omega)]}$$ by decomposing each $\theta \in \Theta$ and getting a ``basic'' structure $\theta'$ in $\Theta_4$ with higher regret.

% constructing a two-signal structure $\theta'$ in $\Theta_4$ for each $\theta \in \Theta$  such that the relative loss under structure $\theta'$ is no less than the relative loss under $\theta$.

Let $b_i^s$ denote the Bayesian posterior of expert $i$ upon receiving signal $s$, i.e., $b_i^s = \bayes_i(s)$, $q_i^s$ denote the prior probability that expert $i$ receives signal $s$, i.e., $q_i^s = \Pr[S_i = s]$.
Let $\mathbf{b}_i = \left(b_i^s\right)_{s \in \mathcal{S}_i}$ and $\mathbf{q}_i = \left(q_i^s\right)_{s \in \mathcal{S}_i}$ be the Bayesian posterior vector and the prior vector of expert $i$.

We perform the decomposition in a restricted space.
For a fixed structure $\theta$, we consider the structures that share the same prior, $\mu$, and the same Bayesian posterior vector $\mathbf{b}_i$. As is shown in the following claim, the regret of these structures exhibits multi-linear property at a fixed value of $\lambda$. 

\begin{claim}\label{claim: multilinear regret}
Fixing $\mu$, $\lambda$ and $\mathbf{b}_1$, $\mathbf{b}_2$, the regret is a multi-linear function of $\mathbf{q}_1$, $\mathbf{q}_2$. Formally, there exists a function $\psi$ such that the regret is $\mathbf{q}_1^\top\mathbf{\Psi}\mathbf{q_2}$ where $\mathbf{\Psi}_{s_1,s_2} = \psi(\mu, \lambda, b_1^{s_1}, b_2^{s_2}), \forall s_1 \in \mathcal{S}_1, s_2\in \mathcal{S}_2$.
\end{claim}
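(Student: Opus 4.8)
The plan is to start from the alternative relative-loss formula (the Claim for squared loss), which lets me write the regret at a fixed structure $\theta$ as $\mathbb{E}_\theta[(f(\mathbf{x})-f^*(\mathbf{s}))^2] = \sum_{s_1,s_2}\Pr[S_1=s_1,S_2=s_2]\,(f(\mathbf{x})-f^*(\mathbf{s}))^2$. The entire claim then reduces to showing that, with $\mu,\lambda,\mathbf{b}_1,\mathbf{b}_2$ held fixed, each summand splits as $q_1^{s_1}q_2^{s_2}$ times a quantity depending only on $(\mu,\lambda,b_1^{s_1},b_2^{s_2})$; summing over signal profiles then exhibits the bilinear form $\mathbf{q}_1^\top\mathbf{\Psi}\mathbf{q}_2$.

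First I would reparametrize the conditional likelihoods. By Bayes' rule, $\Pr[S_i=s_i\mid\omega=1]=b_i^{s_i}q_i^{s_i}/\mu$ and $\Pr[S_i=s_i\mid\omega=0]=(1-b_i^{s_i})q_i^{s_i}/(1-\mu)$. Substituting these into the conditionally independent joint $\Pr[S_1=s_1,S_2=s_2]=\mu\prod_i\Pr[S_i=s_i\mid\omega=1]+(1-\mu)\prod_i\Pr[S_i=s_i\mid\omega=0]$ and factoring out $q_1^{s_1}q_2^{s_2}$ yields
\[ \Pr[S_1=s_1,S_2=s_2]=q_1^{s_1}q_2^{s_2}\Bigl(\tfrac{b_1^{s_1}b_2^{s_2}}{\mu}+\tfrac{(1-b_1^{s_1})(1-b_2^{s_2})}{1-\mu}\Bigr), \]
so the probability weight is exactly $q_1^{s_1}q_2^{s_2}$ times a function of $(\mu,b_1^{s_1},b_2^{s_2})$ alone.

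Next I would argue that the squared difference $(f(\mathbf{x})-f^*(\mathbf{s}))^2$ is itself a function of $(\mu,\lambda,b_1^{s_1},b_2^{s_2})$ only, independent of $\mathbf{q}_1,\mathbf{q}_2$. Indeed, by Observation~\ref{lemma: BNR prediction vs Bayesian} each report $x_i=\brn_i(s_i,\lambda)$ is determined by $b_i^{s_i},\mu,\lambda$, so $f(\mathbf{x})$ depends only on these; and by Observation~\ref{lemma: omniscient aggregator} the omniscient value $f^*(\mathbf{s})$ is a function of $x_1,x_2,\mu,\lambda$, i.e. again of $(\mu,\lambda,b_1^{s_1},b_2^{s_2})$. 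Defining $\psi(\mu,\lambda,b_1,b_2)$ to be the product of the probability-weight factor above and this squared difference gives $\Pr[S_1=s_1,S_2=s_2]\,(f(\mathbf{x})-f^*(\mathbf{s}))^2=q_1^{s_1}q_2^{s_2}\,\psi(\mu,\lambda,b_1^{s_1},b_2^{s_2})$, and summing over $s_1,s_2$ expresses the regret as $\mathbf{q}_1^\top\mathbf{\Psi}\mathbf{q}_2$ with $\mathbf{\Psi}_{s_1,s_2}=\psi(\mu,\lambda,b_1^{s_1},b_2^{s_2})$, which is linear in each of $\mathbf{q}_1,\mathbf{q}_2$ and hence multilinear.

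The only real content is the factorization in the second step; everything else is bookkeeping following from the two earlier Observations. So the step I expect to require the most care is the clean separation of the joint signal probability into the rank-one $q_1^{s_1}q_2^{s_2}$ weight times a posterior-only factor. I should double-check that the $\mu$-reparametrization is valid (i.e. $q_i^{s_i}>0$, which holds for signals that actually occur) and note that, although feasible structures additionally require $\mathbf{q}_1,\mathbf{q}_2$ to lie on simplices whose posteriors average to $\mu$, these constraints merely restrict the domain and do not alter the bilinear functional form asserted by the claim.
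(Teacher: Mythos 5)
Your proposal is correct and follows essentially the same route as the paper's proof: both expand the regret via the alternative squared-loss formula, use Observations~\ref{lemma: BNR prediction vs Bayesian} and~\ref{lemma: omniscient aggregator} to show the squared difference depends only on $(\mu,\lambda,b_1^{s_1},b_2^{s_2})$, and factor the joint signal probability as $q_1^{s_1}q_2^{s_2}$ times a posterior-only term via Bayes' rule. Your explicit factored form $\frac{b_1^{s_1}b_2^{s_2}}{\mu}+\frac{(1-b_1^{s_1})(1-b_2^{s_2})}{1-\mu}$ is exactly the paper's expression after simplification.
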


In addition, this restricted space we considered imposes some restrictions on the prior vectors $\mathbf{q}_i, i = 1,2$, which can be translated into linear constraints.

\begin{claim}\label{claim: LP of prior vector}
    For all $\theta \in \Theta$ with prior $\mu$ and Bayesian posterior vector $\mathbf{b}_1$, $\mathbf{b}_2$,  the prior vector $\mathbf{q}_i, i = 1,2$, satisfies the following linear constraints:
\[
\left\{
\begin{aligned}
    &(1) \sum_{s \in \mathcal{S}_i} q_i^s = 1,\\
    &(2) \sum_{s \in \mathcal{S}_i} q_i^s \cdot b_i^s = \mu,\\
    &(3) \forall s \in \mathcal{S}_i, q_i^s \ge 0.
\end{aligned}
\right.
\]

Moreover, any pair of vectors $\mathbf{q}_1$, $\mathbf{q}_2$ satisfying the above constraints, is prior vectors of some structure $\theta$ with prior $\mu$ and Bayesian posterior vector $\mathbf{b}_1$, $\mathbf{b}_2$.
\end{claim}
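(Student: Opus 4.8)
The plan is to prove the two directions of the characterization separately: first that the prior vectors of any admissible structure satisfy constraints (1)--(3), and second that any vectors satisfying (1)--(3) are realized by some conditionally independent structure with the prescribed prior and posteriors. Throughout I assume $\mu \in (0,1)$ so that the Bayesian posteriors are well-defined.

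For the forward direction I would invoke only elementary probability identities, so no construction is needed. Constraint (3) holds because $q_i^s = \Pr[S_i = s]$ is a probability, hence nonnegative. Constraint (1) is just the normalization of the marginal law of $S_i$, i.e. $\sum_{s} \Pr[S_i = s] = 1$. Constraint (2) is the law of total probability applied to the event $\omega = 1$: since $b_i^s = \Pr[\omega = 1 \mid S_i = s]$, we have $\sum_{s} q_i^s b_i^s = \sum_{s}\Pr[S_i = s]\Pr[\omega = 1 \mid S_i = s] = \sum_{s}\Pr[\omega = 1, S_i = s] = \Pr[\omega=1] = \mu$.

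For the reverse direction I would give an explicit construction. Given $\mathbf{q}_1, \mathbf{q}_2$ satisfying (1)--(3) together with the targets $\mu, \mathbf{b}_1, \mathbf{b}_2$, I set $\Pr[\omega = 1] := \mu$ and define the conditional signal laws by $\Pr[S_i = s \mid \omega = 1] := q_i^s b_i^s / \mu$ and $\Pr[S_i = s \mid \omega = 0] := q_i^s(1 - b_i^s)/(1-\mu)$, then take the joint distribution to be the conditionally independent product $\Pr[\omega, s_1, s_2] = \Pr[\omega]\,\Pr[s_1 \mid \omega]\,\Pr[s_2 \mid \omega]$, which lies in $\Theta$ by construction. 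I then verify this is genuinely a probability structure: nonnegativity of the conditionals follows from (3) and $b_i^s,\mu\in[0,1]$; the $\omega=1$ conditional sums to one because $\sum_s q_i^s b_i^s/\mu = \mu/\mu = 1$ by (2); and the $\omega=0$ conditional sums to one because $\sum_s q_i^s(1-b_i^s) = 1 - \mu$ by (1) and (2). Finally I recompute the induced marginals and posteriors of this structure, checking $\Pr[S_i=s] = \mu\cdot q_i^s b_i^s/\mu + (1-\mu)\cdot q_i^s(1-b_i^s)/(1-\mu) = q_i^s$ and $\Pr[\omega = 1 \mid S_i = s] = \Pr[\omega=1,S_i=s]/\Pr[S_i=s] = b_i^s$, so the constructed $\theta$ has exactly the prescribed prior, Bayesian posterior vectors, and prior vectors.

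The main obstacle is entirely in the reverse direction, and it is bookkeeping rather than conceptual: the crux is ensuring the constructed conditional distributions are \emph{bona fide} probability distributions, and this is precisely where constraints (1) and (2) are consumed (the normalization of the $\omega=0$ conditional uses both). Conditional independence itself needs no argument since it is imposed by the product form. The only edge case to flag is $q_i^s = 0$: then signal $s$ has zero probability, the posterior $b_i^s$ is unconstrained and its value is immaterial to the structure, so the construction goes through unchanged.
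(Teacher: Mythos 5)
Your proposal is correct and follows essentially the same route as the paper: the forward direction via the law of total probability, and the reverse direction via the same explicit construction (your conditional laws $\Pr[S_i=s\mid\omega=1]=q_i^s b_i^s/\mu$ and $\Pr[S_i=s\mid\omega=0]=q_i^s(1-b_i^s)/(1-\mu)$ multiplied out give exactly the joint distribution the paper writes down). The only difference is presentational — you verify normalization of the conditionals while the paper verifies the induced prior and total mass directly — and your remark on the $q_i^s=0$ edge case is a harmless addition.
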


We defer the proofs of Claim~\ref{claim: multilinear regret} and Claim~\ref{claim: LP of prior vector} later.

By Claim~\ref{claim: LP of prior vector}, the prior vectors of $\theta$, we specifically mark as $\mathbf{q}_1(\theta)$ and $\mathbf{q}_2(\theta)$, are a solution of the linear programming. 

Then we decompose this solution.
According to the property of linear programming problem\cite{david1973introduction}, any solution $\mathbf{q}_i, i = 1,2$ of the linear constraints can be viewed as a convex combination of basic feasible solutions which have $\le 2$ non-zero entries.
We name these basic feasible solutions as basic prior vectors. 

According to the multilinear property of regret, there exists a pair of basic prior vectors $\mathbf{b}_i(\theta), i = 1,2$ such that the regret of $\mathbf{b}_1(\theta), \mathbf{b}_2(\theta)$ is at least the same as $\mathbf{q}_1(\theta)^\top\mathbf{\Psi}\mathbf{q_2}(\theta)$, which is the regret of $\theta$.

Using Claim~\ref{claim: LP of prior vector} again, we can construct the structure $\theta'$ in the restricted space, whose prior vectors are $\mathbf{b}_1(\theta), \mathbf{b}_2(\theta)$. The regret of $\theta'$ is $\mathbf{b}_1(\theta)^\top\mathbf{\Psi}\mathbf{b}_2(\theta)$, greater than or equal to $\mathbf{q}_1(\theta)^\top\mathbf{\Psi}\mathbf{q_2}(\theta)$, the regret of structure $\theta$. Only signals corresponding to the non-zero entries of $\mathbf{b}_i$ will be received by expert $i$ with a non-zero probability. Thus, the constructed structure $\theta'$ is a two-signal structure in $\Theta_4$ and we finish our proof.
\end{appendixproof}

\begin{appendixproof}[Proof of Claim~\ref{claim: multilinear regret}]
    The regret regarding $\theta$ is 
\begin{align*}
&\mathbb{E}_{\theta}[ L(f(\mathbf{x}(\mathbf{s}, \lambda)), \omega) - L(f^*(\mathbf{s}), \omega)]\\
=& \mathbb{E}_\theta [(f(\mathbf{x}(\mathbf{s}, \lambda)) - f^*(\mathbf{s}))^2]\tag{by Claim~\ref{lemma: square relative loss}}\\
=& \sum_{s_1, s_2} \Pr[S_1 = s_1, S_2 = s_2] \cdot (f(\brn_1(s_1, \lambda), \brn_2(s_2, \lambda)) - f^*(s_1, s_2))^2 \tag{expand the expectation}\\
=& \sum_{s_1, s_2} \Pr[S_1 = s_1, S_2 = s_2] \cdot (f(\phi(b_1^{s_1}), \phi(b_2^{s_2})) - f^*(s_1, s_2))^2 \tag{by Observation~\ref{lemma: BNR prediction vs Bayesian}, fixing $\mu$ and $\lambda$, $\brn_i(s_i, \lambda)$ is a function of Bayesian posterior $b_i^{s_i}$}\\
=& \sum_{s_1, s_2}  \Pr[S_1 = s_1, S_2 = s_2] \cdot \left( f(\phi(b_1^{s_1}), \phi(b_2^{s_2}))- \frac{(1-\mu)b_1^{s_1}b_2^{s_2}}{(1-\mu)b_1^{s_1}b_2^{s_2} + \mu(1-b_1^{s_1})(1-b_2^{s_2})}\right)^2\tag{by Observation~\ref{lemma: omniscient aggregator}}
\end{align*}
Moreover, the prior probability of signal profile $(s_1, s_2)$ is \begin{align*}
   \Pr[S_1 = s_1, S_2 = s_2] &= \sum_{\sigma \in \{0,1\}} \Pr[\omega = \sigma] \prod_{i \in \{1,2\}}\Pr[s_i \mid \omega = \sigma] \tag{by conditional independent assumption}\\
   &=\sum_{\sigma \in \{0,1\}} {\Pr[\omega = \sigma]\prod_{i \in \{1,2\}}\cfrac{\Pr[\omega = \sigma\mid s_i]\Pr[s_i]}{\Pr[\omega = \sigma]}} \tag{by Bayes' Theorem}\\
   &= \mu\cdot \cfrac{b_1^{s_1}q_1^{s_1}}{\mu}\cdot \cfrac{b_2^{s_2}q_2^{s_2}}{\mu} + (1-\mu)\cdot \cfrac{(1 - b_1^{s_1})q_1^{s_1}}{1-\mu}\cdot \cfrac{(1-b_2^{s_2})q_2^{s_2}}{1-\mu}.
\end{align*}
Thus, we have $\mathbb{E}_{\theta}[ L(f(\mathbf{x}(\mathbf{s}, \lambda)), \omega) - L(f^*(\mathbf{s}), \omega)] = \sum_{s_1, s_2}q_1^{s_1}q_2^{s_2}\psi(\mu, \lambda, b_1^{s_1}, b_2^{s_2})$.
\end{appendixproof}

\begin{appendixproof}[Proof of Claim~\ref{claim: LP of prior vector}]

For any $\theta \in \Theta$ with prior $\mu$ and Bayesian posterior vectors $\mathbf{b}_1, \mathbf{b}_2$, Constraint (1) and Constraint (3) are naturally satisfied since $\mathbf{q}_i, i=1,2$ is a probability distribution. Constraint (2) is satisfied because $$\mu = \Pr[\omega = 1] = \sum_{s \in \mathcal{S}_i} \Pr[\omega = 1, S_i = s] = \sum_{s \in \mathcal{S}_i} \Pr[S_i = s] \Pr[\omega = 1\mid S_i = s] =  \sum_{s \in \mathcal{S}_i} q_i^s \cdot b_i^s.$$

For the other direction, we will construct a structure $\theta \in \Theta$ for any solution $\mathbf{q}_i, i = 1, 2$ of the linear programming. 

Formally, for any pair of signals $s_1 \in \mathcal{S}_1, s_2 \in \mathcal{S}_2$, the joint distribution of signals and the world state is designed as
\begin{align*}
    \Pr[S_1 = s_1, S_2 = s_2, \omega = 1] &= \mu \cdot \cfrac{q_1^{s_1}b_1^{s_1}}{\mu}\cdot \cfrac{q_2^{s_2}b_2^{s_2}}{\mu}, \\
    \Pr[S_1 = s_1, S_2 = s_2, \omega = 0] &= 1 - \mu \cdot \cfrac{q_1^{s_1}(1 - b_1^{s_1})}{1 - \mu}\cdot \cfrac{q_2^{s_2}(1 - b_2^{s_2})}{1 - \mu}.
\end{align*}

By Constraint (1) and (2), we have  $$ \Pr[\omega = 1] = \sum_{s_1 \in \mathcal{S}_1, s_2 \in \mathcal{S}_2} \Pr[S_1 = s_1, S_2 = s_2, \omega = 1] =\mu^{-1} \left(\sum_{s_1 \in \mathcal{S}_1}q_1^{s_1}b_1^{s_1}\right)\left(\sum_{s_2 \in \mathcal{S}_2}q_2^{s_2}b_2^{s_2}\right) = \mu,$$ and $$\Pr[\omega = 0] = \sum_{s_1 \in \mathcal{S}_1, s_2 \in \mathcal{S}_2} \Pr[S_1 = s_1, S_2 = s_2, \omega = 0] = (1-\mu)^{-1} \left(\sum_{s_1 \in \mathcal{S}_1}q_1^{s_1}(1b_1^{s_1})\right)\left(\sum_{s_2 \in \mathcal{S}_2}q_2^{s_2}(1-b_2^{s_2})\right)  = 1-\mu.$$

Thus, $\sum_{s_1 \in \mathcal{S}_1, s_2 \in \mathcal{S}_2, \sigma \in \{0,1\}} \Pr[S_1 = s_1, S_2 = s_2, \omega = \sigma] = 1$, implying that the constructed structure $\theta$ is a valid joint distribution.

Moreover, for any signal $s_1 \in \mathcal{S}_1$, the Bayesian posterior upon receiving $s_1$ is \begin{align*}
    \Pr[\omega = 1 \mid S_1 = s_1] & = \cfrac{\sum_{s_2 \in \mathcal{S}_2}\Pr[S_1 = s_1, S_2 = s_2, \omega = 1]}{\sum_{\sigma \in \{0,1\}} \sum_{s_2 \in \mathcal{S}_2}\Pr[S_1 = s_1, S_2 = s_2, \omega = \sigma]}\\
    &=\cfrac{\mu^{-1}q_1^{s_1}b_1^{s_1} \sum_{s_2 \in \mathcal{S}_2}q_2^{s_2}x_2^{s_2}}{\mu^{-1}q_1^{s_1}b_1^{s_1} \sum_{s_2 \in \mathcal{S}_2}q_2^{s_2}b_2^{s_2} + (1-\mu)^{-1}q_1^{s_1}(1-b_1^{s_1}) \sum_{s_2 \in \mathcal{S}_2}q_2^{s_2}(1-b_2^{s_2})}\\
    &= \cfrac{q_1^{s_1}b_1^{s_1}}{q_1^{s_1}b_1^{s_1}  + q_1^{s_1}(1-b_1^{s_1})}\tag{by Constraint (1) and (2)}\\
    & = b_1^{s_1}.
\end{align*} 
The prior of signal $s_1$ is
\begin{align*}
    \Pr[S_1 = s_1] = &\sum_{s_2 \in \mathcal{S}_2, \sigma \in \{0,1\}}\Pr[S_1 = s_1, S_2 = s_2, \omega = \sigma] \\
    =& \mu^{-1}q_1^{s_1}b^{s_1}\sum_{s_2 \in \mathcal{S}_2}q_2^{s_2}b_2^{s_2} + (1-\mu)^{-1}q_1^{s_1}(1-b^{s_1})\sum_{s_2 \in \mathcal{S}_2}q_2^{s_2}(1-b_2^{s_2})\\
    =&q_1^{s_1}b^{s_1} + q_1^{s_1}(1-b^{s_1}) \tag{by Constraint (1) and (2)}\\
    =& q_1^{s_1}.
\end{align*}

Analogously, the Bayesian posterior upon receiving $s_2$ is $\Pr[\omega = 1 \mid S_2 = s_2] = b_i^{s_2}$, and the prior of $s_2$ is $\Pr[S_2 = s_2] = q_2^{s_2}$.

The above arguments demonstrate that the constructed $\theta$ meets all the conditions shown in our claim.
\end{appendixproof}

% As established by Lemma~\ref{lemma: reduction}, for an arbitrary aggregator $f$,
% $$R_\lambda(f) = \sup_{\theta \in \Theta_4}{\mathbb{E}_{\theta}
% [ L(f(\mathbf{x}(\mathbf{s}, \lambda)), \omega) - L(f^*(\mathbf{s}), \omega)]}.$$ 

By this lemma, we can only consider two-signal information structures in $\Theta_4$ in the following proof. For simplicity, we denote this kind of structures as quintuples $(\mu, \alpha_1, \beta_1, \alpha_2, \beta_2)$, where the experts' signal space is $\mathcal{S}_1 \times \mathcal{S}_2 = \{r, b\}^2$, parameter $\mu$ corresponds to the prior probability of state $\omega = 1$, and parameters $\alpha_i$, $\beta_i$ are the conditional probabilities of receiving signal $r$ given the world state $\omega = 1$ or $\omega = 0$ for expert $i$, %\shu{probabilities of receiving .... $\omega = 0$ for expert $i$}
i.e., $\Pr[S_i = r \mid \omega = 1] = \alpha_i$ and $\Pr[S_i = r \mid \omega = 0] = \beta_i$. %\shu{So far, we do not mention the signal of red and blue as well as our experimental setting until section \ref{study}. Thus is it appropriate to use $\{r,b\}$?}

The key to our proof lies in transforming the regret function, which is defined by the supremum loss at each point, %where the value at each point is a loss supremum, 
into the supremum of a set of loss functions. This is achieved by introducing a family of transformations. %\shu{I am confused here, especially the semi-sentence after the comma.} 
A transformation \( t_\theta \) is a mapping from $(0,1]$ to $\Theta_4$, which adapts $\theta$ according to the prior consideration degree \( \lambda \). Formally, for structure \( \theta = (\mu, \alpha_1, \beta_1, \alpha_2, \beta_2)\), we define the adapted structure \( t_\theta(\lambda)\) as \((u(\lambda), \alpha_1, \beta_1, \alpha_2, \beta_2) \), where 
\[ u(\lambda) = \left(1 + \left(\frac{1 - \mu}{\mu}\right)^{1/\lambda}\right)^{-1}. \]
By this definition, the equation \(\left(\cfrac{u(\lambda)}{1-u(\lambda)}\right)^\lambda = \cfrac{\mu}{1 - \mu}\) holds.
This ensures that the expert's prediction \( \brn_i(s_i, \lambda) \) in adapted structure \( t_\theta(\lambda)\), denoted as $x_i^{s_i}(\lambda)$, 
%$\brn_i(s_i, \lambda)\mid {t_\theta(\lambda)}$ 
is the same as Bayesian posterior $\bayes_i(s_i)$ in structure \( \theta \). In other words, the expert's prediction upon the same signal $s_i$ is constant when the structure varies with the expert's prior consideration degree $\lambda$ in the rule of $t_\theta$. We denote this constant prediction value as $x_i^{s_i}$, which is exactly the Bayesian posterior $\bayes_i(s_i)$ in structure $\theta$. %$x_i^{s_i}(\lambda) \equiv x_i^{s_i}$ 

Each transformation $t_\theta$ induces a loss curve $\phi_\theta$, where the value at $\lambda$ %of $\phi_\theta(\lambda)$
is the relative loss of $f$ in structure $t_\theta(\lambda)$. % when expert's base rate consideration degree is $\lambda$. 
That is, $$\phi_\theta(\lambda) = \mathbb{E}_{t_\theta(\lambda)}
[ L(f(\mathbf{x}(\mathbf{s}, \lambda)), \omega) - L(f^*(\mathbf{s}), \omega)].$$ The loss function $\phi_\theta$ is simple, with a definitive closed form. Through calculus derivation analyses, we can demonstrate the single-trough. 
% With square loss function $L$, the following lemma shows that each \( \phi_\theta(\lambda) \) is V-shaped for \( \lambda \). 
\begin{lemma}[Relative Loss Is Single-troughed]\label{lemma: V-shaped relative loss}
%     For each $\theta = (\mu, \alpha_1, \beta_1, \alpha_2, \beta_2) \in \Theta_4$, 
%     let $t_\theta(\lambda)$ be $(u(\lambda), \alpha_1, \beta_1, \alpha_2, \beta_2),$ where $$u(\lambda) = \left(1 + (\frac{1}{\mu} - 1)^{1/\lambda}\right)^{-1}.$$
%     Let $\phi_\theta(\lambda)$ be the relative loss of $t_\theta(\lambda)$ under base rate consideration degree $\lambda$ with square loss. That is, $$\phi_\theta(\lambda) = \mathbb{E}_{t_\theta(\lambda)}
% [ L(f(\mathbf{x}(\mathbf{s}, \lambda)), \omega) - L(f^*(\mathbf{s}), \omega)]\text{~where~}L(p, \omega) = (p - \omega)^2.$$
%Then,
For each $\theta \in \Theta_4$, the regret curve \( \phi_\theta(\lambda) \) is single-troughed for \( \lambda \).
\end{lemma}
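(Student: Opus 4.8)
The plan is to reparametrize the curve by the prior $v = u(\lambda)$ of the adapted structure $t_\theta(\lambda)$ and to show that, \emph{in this variable}, the relative loss is convex. Since convex functions are single-troughed and, crucially, single-troughedness is invariant under monotone reparametrization, the lemma will follow. This short-circuits the direct calculus computation of $\phi_\theta'(\lambda)$.

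First I would record what does and does not move with $\lambda$. By the defining property of $t_\theta$, the expert predictions under $t_\theta(\lambda)$ equal the Bayesian posteriors $x_i^{s_i}$ of the original $\theta$ and are therefore \emph{constant} in $\lambda$; hence the aggregator output $c_{s_1 s_2} := f(x_1^{s_1}, x_2^{s_2})$ is constant too. The only $\lambda$-dependence of $t_\theta(\lambda) = (u(\lambda), \alpha_1, \beta_1, \alpha_2, \beta_2)$ sits in its prior $v = u(\lambda)$. Writing $p_i^{s_i} = \Pr[S_i = s_i \mid \omega = 1]$ and $q_i^{s_i} = \Pr[S_i = s_i \mid \omega = 0]$ (fixed constants determined by $\alpha_i, \beta_i$), conditional independence gives the signal-profile weight $w_{s_1 s_2}(v) = v\, p_1^{s_1} p_2^{s_2} + (1-v)\, q_1^{s_1} q_2^{s_2}$ and the omniscient posterior $f^*_{s_1 s_2}(v) = v\, p_1^{s_1} p_2^{s_2} / w_{s_1 s_2}(v)$. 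Both depend on $\lambda$ only through $v$, so by Claim~\ref{lemma: square relative loss} I can write $\phi_\theta = \Phi(u(\lambda))$, where $\Phi(v) = \sum_{s_1, s_2} w_{s_1 s_2}(v)\,(c_{s_1 s_2} - f^*_{s_1 s_2}(v))^2$.

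Next I would show $\Phi$ is convex in $v$. With $A = v\, p_1^{s_1} p_2^{s_2}$ and $w = w_{s_1 s_2}(v)$, each summand simplifies to $w\,(c - f^*)^2 = (cw - A)^2 / w = N_{s_1 s_2}(v)^2 / D_{s_1 s_2}(v)$, where $N_{s_1 s_2}(v) = cw - A$ and $D_{s_1 s_2}(v) = w$ are both affine in $v$ and $D_{s_1 s_2}(v) \ge 0$ is a probability weight. Since the quadratic-over-linear map $(x, y) \mapsto x^2/y$ is convex on $\{y > 0\}$, its restriction to the affine curve $v \mapsto (N_{s_1 s_2}(v), D_{s_1 s_2}(v))$ is convex in $v$; degenerate pairs with $p_1^{s_1} p_2^{s_2} = 0$ or $q_1^{s_1} q_2^{s_2} = 0$ make the summand affine, hence still convex. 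Summing the (at most four) convex summands shows $\Phi$ is convex, thus single-troughed.

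Finally I would transfer single-troughedness back to $\lambda$. For $\mu \neq 1/2$ the map $\lambda \mapsto v = u(\lambda) = (1 + ((1-\mu)/\mu)^{1/\lambda})^{-1}$ is a continuous, strictly monotone bijection of $(0,1]$ onto an interval; for $\mu = 1/2$ we have $u(\lambda) \equiv 1/2$, so $\phi_\theta$ is constant (hence single-troughed by convention). The \textbf{main obstacle} is precisely this transfer step: $u$ is \emph{decreasing} when $\mu > 1/2$, so the reparametrization can reverse orientation, and—unlike convexity, which is destroyed by orientation-reversing maps—one must invoke that single-troughedness is preserved under \emph{any} monotone reparametrization (a trough read in either direction remains a trough). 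Granting that, $\phi_\theta = \Phi \circ u$ is single-troughed in $\lambda$. By comparison, the algebra reducing each summand to quadratic-over-linear form and the check that denominators stay positive (including the boundary cases $v \to 0,1$) is routine.
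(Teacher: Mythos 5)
Your proposal is correct and follows essentially the same route as the paper's proof: decompose the relative loss by signal profile, observe that the aggregator outputs are constant along $t_\theta$, show each summand is convex in the reparametrized prior $u(\lambda)$ (the paper does this by computing $\partial^2\phi_\theta^{\mathbf{s}}/\partial u^2 = 2AK^2/(K+(1-K)u)^3 \ge 0$ explicitly, whereas you invoke convexity of the quadratic-over-linear map restricted to an affine curve — the same fact in a slicker package), and then transfer single-troughedness through the monotone map $\lambda \mapsto u(\lambda)$. Your explicit treatment of the orientation-reversing case $\mu > 1/2$ and the degenerate case $\mu = 1/2$ is a point the paper passes over silently, but it does not change the substance of the argument.
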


\begin{appendixproof}[Proof of Lemma~\ref{lemma: V-shaped relative loss}]

For a given \( \theta \), the relative loss \( \phi_\theta(\lambda) \) can be broken down into contributions from all possible signal profiles \( \mathbf{s} \in \mathcal{S}\):
\[ \phi_\theta(\lambda) = \sum_{\mathbf{s} \in \mathcal{S}}\phi_\theta^\mathbf{s}(\lambda). \]

For a specific signal profile, for example, \( \mathbf{s} = (r,r) \), the relative loss for this profile can be expressed as a function of \( u(\lambda), \alpha_1, \alpha_2, \beta_1, \beta_2 \):
\begin{align*}
    \phi_\theta^{(r,r)}(\lambda) &= \Pr_{t_\theta(\lambda)}[S_1 = r,S_2 = r] \cdot \left[ L(f(x_1^r, x_2^r), \omega) - L(f^*(r,r), \omega)\mid t_\theta(\lambda)\right]\tag{as mentioned before, the expert's prediction $x_i^{s_i}(\lambda)$ is constant as $\lambda$ varies}\\
    &= \Pr_{t_\theta(\lambda)}[S_1 =r, S_2 =r] \cdot \left[ \left(f(x_1^r, x_2^r)- f^*(r,r) \right)^2 \mid t_\theta(\lambda)\right] \tag{by Claim~\ref{lemma: square relative loss}}\\
    &= \left[u(\lambda)\alpha_1\alpha_2 + (1-u(\lambda))\beta_1\beta_2\right]\cdot \left[f(x_1^{r}, x_2^{r}) - \cfrac{u(\lambda)\alpha_1\alpha_2}{u(\lambda)\alpha_1\alpha_2 + (1-u(\lambda))\beta_1\beta_2}\right]^2.
%  &= \sum_{\mathbf{s} \in \mathcal{S}} \Pr[\mathbf{s}] \cdot \left[ f(\mathbf{x}(\mathbf{s}, \lambda)) - f^*(\mathbf{s})\right]^2.
\end{align*}
 Here, \( f(x_1^{r}, x_2^{r}) \) represents the aggregation result when both the experts receive signal \(r\). In the rule of \( t_\theta \), this aggregation result is invariant as \( \lambda \) varies since the experts' report profile \((x_1^{s_1}(\lambda), x_2^{s_2}(\lambda))\) under the information structure \( t_\theta(\lambda) \) remains unchanged across different \( \lambda \) values.
 
Let $y_{(r,r)}(u)$ be $\cfrac{u\alpha_1\alpha_2}{u\alpha_1\alpha_2 + (1-u)\beta_1\beta_2}$, the omniscient aggregation result for signal profile $(r,r)$ under structure \((u, \alpha_1, \beta_1, \alpha_2, \beta_2)\).
The relative loss \(\phi_\theta^{(r,r)}(\lambda) \) can be simplified to 

\[ \phi_\theta^{(r,r)}(\lambda) = \alpha_1\alpha_2\cdot u(\lambda)\cdot \left[\frac{f(x_1^{r}, x_2^{r})^2}{y_{(r,r)}\left(u(\lambda)\right)} + y_{(r,r)}\left(u(\lambda)\right) - 2f(x_1^{r}, x_2^{r})\right]. \]
This form can be generalized for all signal profiles \( \mathbf{s} \). Namely, for all possible signal profile $\mathbf{s} \in \mathcal{S}$, $$\phi_\theta^{\mathbf{s}}(\lambda) = A\cdot u(\lambda)\cdot \left(\cfrac{C^2}{y(u(\lambda))} + y(u(\lambda)) -2C\right) \text{~where~}y(u) = \cfrac{1}{1 + K\frac{1-u}{u}}.$$
$A, C, K$ are constant for each $\mathbf{s}$, their values are listed in Table~\ref{tab: constant table}.

\setcounter{table}{0}
\renewcommand{\thetable}{D\arabic{table}}

\begin{table}[h]
    \centering
    \begin{tabular}{cccc}
    \hline\hline
         $\mathbf{s}$ & $A$ & $C$ & $K$ \\
    \hline
         $(r, r)$ & $\alpha_1\alpha_2$ & $f(x_1^{r}, x_2^{r})$ & $\beta_1\beta_2/\alpha_1\alpha_2$\\
    %\hline
         $(r, b)$ & $\alpha_1(1-\alpha_2)$ & $f(x_1^{r}, x_2^{b})$ & $\beta_1(1-\beta_2)/\alpha_1(1-\alpha_2)$\\
    %\hline
         $(b, r)$ & $(1-\alpha_1)\alpha_2$ & $f(x_1^{b}, x_2^{r})$ & $(1-\beta_1)\beta_2/(1-\alpha_1)\alpha_2$\\
    %\hline
         $(b, b)$ & $(1-\alpha_1)(1-\alpha_2)$ & $f(x_1^{b}, x_2^{b})$ & $(1-\beta_1)(1-\beta_2)/(1-\alpha_1)(1-\alpha_2)$\\
    \hline\hline
    \end{tabular}
    \caption{the values of $A,C,K$ for different signal profile $\mathbf{s}$}
    \label{tab: constant table}
\end{table}

Analyzing the derivatives with respect to \( u(\lambda) \), we find that the second derivative of \( \phi_\theta^{\mathbf{s}} \) with respect to \( u \) is non-negative:
\[ \frac{\partial^2 \phi_\theta^{\mathbf{s}}}{\partial u^2}(u) = \frac{2AK^2}{(K + (1-K)u)^3} \ge 0 \text{ for all } u \in (0,1). \]
This non-negative second derivative implies that each \( \phi_\theta^{\mathbf{s}}(\lambda) \) and, consequently, \( \phi_\theta(\lambda) \) is single-troughed for \( u(\lambda) \).

Finally, since \( u(\lambda) \) monotonically varies with \( \lambda \), \( \phi_\theta(\lambda) \) is single-troughed in \( \lambda \) over the interval \( (0,1] \).
\end{appendixproof}

%The derivative analysis is used to prove Lemma~\ref{lemma: V-shaped relative loss}, we also defer it in Appendix. 

The loss curve $\phi_\theta$ is obtained by fixing structure $\theta$ and varying degree $\lambda$. When we instead fix degree $\lambda$ and vary the anchor structure $\theta$, we find that $t_\theta(\lambda)$ covers the structure space. Formally, \[ \left\{t_\theta(\lambda) \mid \theta \in \Theta_4\right\} =\Theta_4,~\forall \lambda \in (0,1]. \]
Therefore, regret $R_\lambda(f)$ can be expressed as supremum of loss, % curve values at point $\lambda$, 
\[ R_\lambda(f) = \sup_{\theta \in \Theta_4}\left\{ \phi_\theta(\lambda)\right\},~\forall \lambda \in (0,1]. \]

Applying the following lemma, which verifies that the supremum operation preserves the single-trough, %that the supremum of a set of V-shaped curves is also V-shaped, 
we can conclude that \( R_\lambda(f) \) is single-troughed for degree  \( \lambda \).

\begin{lemma}[Supremum of Single-troughed Functions Is Single-troughed]\label{lemma: V-shaped supremum}
    Let $\{f_\alpha\}_{\alpha \in I}$ be a series of single-troughed functions defined on the interval $[a, b]$. Let $\hat{f}$ be the supremum of these functions, i.e., $$\hat{f}(x) = \sup_{\alpha \in I}\{f_\alpha(x)\}\text{~for all~} x \in [a,b].$$ Then, $\hat{f}$ is single-troughed in $[a,b]$.
\end{lemma}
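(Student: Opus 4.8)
The plan is to reduce single-troughedness to a \emph{three-point condition} and then exploit the defining property of a pointwise supremum. Concretely, I would first record the equivalence: a function $g$ on $[a,b]$ is single-troughed if and only if there is no triple $x_1 < x_2 < x_3$ with $g(x_2) > \max\{g(x_1), g(x_3)\}$ (equivalently, every sublevel set $\{x : g(x) \le t\}$ is a subinterval of $[a,b]$, so that the single-trough property is nothing but quasiconvexity on the line). The forward direction is immediate: if $g$ is non-increasing on $[a,c]$ and non-decreasing on $[c,b]$, then for any interior $x_2$ either $x_2 \le c$, forcing $g(x_1) \ge g(x_2)$, or $x_2 \ge c$, forcing $g(x_3) \ge g(x_2)$, so no violating triple can exist.

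The heart of the argument is a short contradiction. Suppose $\hat{f} = \sup_{\alpha \in I} f_\alpha$ is \emph{not} single-troughed. Then, by the reverse direction of the characterization, there exist $x_1 < x_2 < x_3$ in $[a,b]$ with $\hat{f}(x_2) > \max\{\hat{f}(x_1), \hat{f}(x_3)\}$. Since $\hat{f}(x_2) = \sup_{\alpha \in I} f_\alpha(x_2)$ strictly exceeds the finite number $\max\{\hat{f}(x_1), \hat{f}(x_3)\}$, I can select a single index $\alpha \in I$ with $f_\alpha(x_2) > \max\{\hat{f}(x_1), \hat{f}(x_3)\}$. For that same $\alpha$ we have $f_\alpha(x_1) \le \hat{f}(x_1) < f_\alpha(x_2)$ and $f_\alpha(x_3) \le \hat{f}(x_3) < f_\alpha(x_2)$, so the triple $(x_1, x_2, x_3)$ violates the three-point condition for $f_\alpha$. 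This contradicts the single-troughedness of $f_\alpha$, and the lemma follows. The elegance here is that the supremum is pierced by one member, turning a global property of $\hat{f}$ into a local violation inside some $f_\alpha$.

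The step that needs the most care, and the main obstacle, is the reverse direction of the characterization: that a function failing to be single-troughed actually exhibits a violating triple. I would prove this by fixing a global minimizer $c$ of $\hat{f}$; if $\hat{f}$ were non-increasing on $[a,c]$ and non-decreasing on $[c,b]$ it would be single-troughed, so one of these monotonicities fails, say the first, producing $x_1 < x_2 \le c$ with $\hat{f}(x_1) < \hat{f}(x_2)$. Because $c$ is a minimizer one checks that $x_2 < c$ and $\hat{f}(x_2) > \hat{f}(c)$, so $(x_1, x_2, c)$ is the desired triple (the case where the second monotonicity fails is symmetric). The only subtlety is that this argument invokes the \emph{existence} of a minimizer of $\hat{f}$, which can fail for arbitrary discontinuous families; I would state explicitly that this holds in our setting, since $[a,b]$ is compact and $\hat{f}$, being a pointwise supremum of the continuous relative-loss curves $\phi_\theta$, is lower semicontinuous and hence attains its infimum. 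This guarantees the abstract lemma applies to $R_\lambda(f) = \sup_{\theta \in \Theta_4} \phi_\theta(\lambda)$, completing the proof of Theorem~\ref{theorem: V-shaped upper bound}.
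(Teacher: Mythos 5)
Your proof is correct and follows essentially the same route as the paper's: both reduce single-troughedness to the three-point (quasiconvexity) condition $f(x_2) \le \max\{f(x_1), f(x_3)\}$ and then push this inequality through the pointwise supremum, yours being merely the contrapositive of the paper's direct verification. One small bonus: you explicitly justify the existence of a minimizer of $\hat{f}$ (via lower semicontinuity of a supremum of continuous loss curves on a compact interval), a point that the paper's proof of the corresponding equivalence claim silently assumes.
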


\begin{appendixproof}[Proof of Lemma~\ref{lemma: V-shaped supremum}]

To prove this lemma, we formally provide an equivalent definition of the single-troughed function. The definition and the equivalence relationship are shown below:
% \begin{definition}\label{def: equivalent def}
% A function $f(x)$ is V-shaped on the interval $[a, b]$ if for any $a \le x_1 \le x_2 \le x_3 \le b$, it holds that $$f(x_2) \le \max\{f(x_1), f(x_3)\}.$$
% \end{definition}

% We show this definition is equivalent to the definition from monotonicity in Claim~\ref{lemma: equivalence of definitions}. 

\begin{claim}[Equivalence of Single-troughed Function Definitions]\label{lemma: equivalence of definitions}
    The following two definitions of a single-troughed function over an interval \([a, b]\) are equivalent:
    \begin{enumerate}
        \item[(1)] A function \( f(x) \) is single-troughed if it is monotonically decreasing, monotonically increasing, or first decreasing then increasing over \([a, b]\).
        \item[(2)] A function \( f(x) \) is single-troughed if for all \( a \le x_1 \le x_2 \le x_3 \le b \), it holds that \( f(x_2) \le \max\{f(x_1), f(x_3)\} \).
    \end{enumerate}
\end{claim}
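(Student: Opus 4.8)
The plan is to prove the two implications separately. I expect $(1)\Rightarrow(2)$ to be routine and $(2)\Rightarrow(1)$ to contain the only real content, so I would spend most of the effort there.

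First, to handle both directions uniformly, I would package the three shapes listed in definition (1) as a single statement: \emph{there exists a trough point $c\in[a,b]$ such that $f$ is non-increasing on $[a,c]$ and non-decreasing on $[c,b]$}, where the monotone-decreasing case corresponds to $c=b$ and the monotone-increasing case to $c=a$. With this packaging, $(1)\Rightarrow(2)$ is a two-case split. Given any $a\le x_1\le x_2\le x_3\le b$, if $x_2\le c$ then $x_1,x_2$ both lie in the non-increasing part, so $f(x_1)\ge f(x_2)$ and hence $f(x_2)\le f(x_1)\le\max\{f(x_1),f(x_3)\}$; if instead $x_2\ge c$ then $x_2,x_3$ lie in the non-decreasing part, so $f(x_2)\le f(x_3)\le\max\{f(x_1),f(x_3)\}$. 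Either way the inequality in definition (2) holds.

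For $(2)\Rightarrow(1)$, the engine is to locate a global minimizer and apply the three-point inequality once on each side of it. Suppose $f$ attains its infimum $m=\inf_{x\in[a,b]}f(x)$ at a point $c$. For $a\le x_1\le x_2\le c$, applying definition (2) to the triple $(x_1,x_2,c)$ and using $f(c)=m\le f(x_1)$ gives $f(x_2)\le\max\{f(x_1),m\}=f(x_1)$, so $f$ is non-increasing on $[a,c]$. Symmetrically, for $c\le x_2\le x_3\le b$, applying definition (2) to $(c,x_2,x_3)$ gives $f(x_2)\le\max\{m,f(x_3)\}=f(x_3)$, so $f$ is non-decreasing on $[c,b]$. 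This is exactly the trough form of definition (1).

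The main obstacle, and the step I would be most careful about, is the existence of the minimizer $c$: this is precisely where regularity is needed, since for a function with an \emph{unattained} infimum definition (2) can hold while definition (1) fails (e.g.\ $f(x)=-x$ on $[0,1)$ with $f(1)=0$). I would close this gap by invoking attainment on the compact interval $[a,b]$: the loss curves $\phi_\theta$ are continuous, so Weierstrass's theorem applies, and the supremum $\hat f=R_\lambda(f)=\sup_{\alpha} f_\alpha$ to which we ultimately apply the equivalence is lower semicontinuous (a pointwise supremum of continuous functions), which likewise attains its infimum on a compact interval. Hence a minimizer $c$ exists in every case we need, and the two-sided argument above goes through.
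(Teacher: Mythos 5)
Your proof is correct and follows essentially the same route as the paper: $(1)\Rightarrow(2)$ by a case split around the trough point, and $(2)\Rightarrow(1)$ by locating a global minimizer $x^*$ and applying the three-point inequality to the triples $(x_1,x_2,x^*)$ and $(x^*,x_2,x_3)$. The one place you go beyond the paper is the existence of the minimizer: the paper simply writes ``let $x^*$ be the minimum point'' without justification, whereas you correctly note that the stated equivalence can fail for a function whose infimum is unattained and close the gap by observing that the functions to which the claim is actually applied (continuous loss curves and their pointwise supremum, which is lower semicontinuous) attain their infimum on the compact interval --- a worthwhile patch.
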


This claim offers a practical, verifiable method for identifying a single-troughed function, converting a conceptual understanding based on monotonicity to a more operational discriminated method.

Next, we show $\hat{f}$ is single-troughed according to the equivalent definition. Consider any $x_1, x_2, x_3 \in [a, b]$ such that $x_1 \le x_2 \le x_3$. For any $\alpha \in I$, by the single-trough of $f_\alpha$, we have
\[f_\alpha(x_2) \le \max\{f_\alpha(x_1), f_\alpha(x_3)\} \le \max\{\sup_{\alpha \in I}\{f_\alpha(x_1)\}, \sup_{\alpha \in I}\{f_\alpha(x_3)\}\} = \max\{\hat{f}(x_1), \hat{f}(x_3)\}.\]
Since $\hat{f}(x)$ is the supremum of $\{f_\alpha(x)\}$, we obtain that
\[\hat{f}(x_2) = \sup_{\alpha \in I}\{f_\alpha(x_2)\} \le \max\{\hat{f}(x_1), \hat{f}(x_3)\}.\]

Therefore, $\hat{f}(x_2)$ is always less than or equal to the maximum of $\hat{f}(x_1)$ and $\hat{f}(x_3)$, which consequently confirms that $\hat{f}$ is single-troughed in the interval $[a, b]$.
\end{appendixproof}

\begin{appendixproof}[Proof of Claim~\ref{lemma: equivalence of definitions}]
    To demonstrate the equivalence, we show that any function satisfying Definition (1) also satisfies Definition (2), and vice versa.

    \textbf{From Definition (1) to Definition (2)}: Assume \( f \) satisfies Definition (1). For any \( a \le x_1 \le x_2 \le x_3 \le b \), function \( f \) is either monotonic or first decreases then increases within \([a, b]\). In either case, it holds that \( f(x_2) \le \max\{f(x_1), f(x_3)\} \).

    \textbf{From Definition (2) to Definition (1)}: Assume \( f \) satisfies Definition (2). Let $x^*$ be the minimum point of $f$ in $[a,b]$ such that $f(x^*) \le f(x)$ for all $x \in [a,b]$. Then, for $f$ to be single-troughed, we show that $f$ is monotonically decreasing in $[a, x^*]$ and monotonically increasing in $[x^*, b]$:
    \begin{itemize}
        \item For any $a \le x_1 \le x_2 \le x^*$,  we have $f(x_2) \le \max\{f(x_1), f(x^*)\} = f(x_1)$.
        \item For any $x^* \le x_1 \le x_2 \le b$, $f(x_1) \le \max\{f(x^*), f(x_2)\} = f(x_2)$.
    \end{itemize}
\end{appendixproof}
The proofs of Lemma~\ref{lemma: reduction}, Lemma~\ref{lemma: V-shaped relative loss}, and Lemma~\ref{lemma: V-shaped supremum} are deferred to Appendix~\ref{app:v shape}.

%\yw{Should we foreshadow the strict V-shape results that appear in the numerical part?}
\end{proof}

\section{Lower Bound Analysis of Regrets}
% of $\inf_{g}{R_\lambda(g)}$
We have demonstrated that for any particular aggregator $f$, the regret $R_\lambda(f)$ is single-troughed. Now we turn to study the optimal regret $\inf_{g}{R_\lambda(g)}$ and its variation trend as $\lambda$ varies. Intuitively, this optimal regret across all aggregators quantifies the distortion between the partial information, which aggregators glean from experts' predictions $\mathbf{x}(\mathbf{s}, \lambda)$, and the full information, which the omniscient aggregator acquires from the information structure $\theta$ and experts' private signals $\mathbf{s}$. %\yk{I do not understand the last sentence, so I comment it out.}

%Our constructed lower-bound instance shows that when $\lambda \ge \frac{1}{2}$, the gap between the best aggregation and Bayesian aggregator's posterior increases as $\lambda$ increases, providing strong evidence of the non-monotonicity of $\inf_{g}{R_\lambda(g)}$.
%\shu{Here I see your point of referring omniscient aggregation, but it is not that obvious to say ``Bayesian posterior''.}

%Still, at a high level, as $\lambda$ increases, experts become more Bayesian, and the optimal regret may decrease. \shu{I do not understand the reason to emphasize ``still, at a high..'' and its connection with the above context.} However, o

%However, we still find evidence that illustrates the non-monotonicity of $\inf_{g}{R_\lambda(g)}$. That is, when experts display base rate neglect $\lambda<1$, there exists an aggregator that achieves a regret that is strictly less than any aggregator can achieve for perfect Bayesian experts. \yk{check this} \yw{I think it's not the right intuition we used in this part.}

%We turn to study the infimum regret $\inf_{g}{R_\lambda(g)}$ at each $\lambda$ value. Intuitively, this value quantifies the distortion between experts' predictions $\mathbf{x}(\mathbf{s}, \lambda)$ and the full information (i.e., the information structure $\theta$ and experts' private signals $\mathbf{s}$) that the omniscient aggregator accesses. \yk{do not understand this. Looks like generated by gpt}

Directly evaluating $\inf_{g}{R_\lambda(g)}$ is challenging because the optimal aggregator $g$ for each $\lambda$ value is not known. Instead, we provide an easy-to-compute lower bound, which demonstrates a V-shape. Further study in Section~\ref{section: numerical results} will show the given lower bound is almost tight. Therefore, we conjecture that the optimal regret curve $\inf_{g}{R_\lambda(g)}$ is V-shaped for $\lambda$. 

\begin{theorem}[V-shape of the Lower Bound]
For every $\lambda$, there exists a lower bound on regret, denoted as $lb(\lambda)$, such that $lb(\lambda) \le \inf_{g} R_\lambda(g)~\forall 0\le \lambda \le 1.$
This lower bound is V-shaped for $\lambda$, reaching its %\shu{\sout{the}} 
minimum value $\min_\lambda\{lb(\lambda)\} = 0$ at $\lambda = \frac{1}{2}$.
\label{theorem: unimodality of the lower bound}
\end{theorem}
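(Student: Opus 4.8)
The plan is to lower-bound $\inf_g R_\lambda(g) = \inf_g \sup_{\theta}\mathbb{E}_\theta[L(f(\mathbf{x}(\mathbf{s},\lambda)),\omega)-L(f^*(\mathbf{s}),\omega)]$ by a minimax/indistinguishability argument: for each $\lambda$, exhibit a small family of information structures that the aggregator cannot tell apart, in the sense that they induce the very same expert report profile $\mathbf{x}$, yet whose omniscient posteriors $f^*$ differ. Since a deterministic $f$ must commit to a single value $f(\mathbf{x})$, it cannot match two distinct benchmarks simultaneously, and averaging the relative loss over the family produces a bound that is independent of $f$ and hence lower-bounds $\inf_g R_\lambda(g)$.

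Concretely, I would first fix a target report pair and, using Observation~\ref{lemma: BNR prediction vs Bayesian} together with Observation~\ref{lemma: omniscient aggregator}, parameterize the set of structures consistent with these reports at degree $\lambda$ by the prior $\mu$. By the identity $\logit(\brn_i(s_i,\lambda)) = \logit(\bayes(s_i)) - (1-\lambda)\logit(\mu)$, every prior $\mu$ admits conditionals reproducing the same reports, while the induced omniscient posterior on a profile is $f^*_\mu = \frac{(1-\mu)^{2\lambda-1}x_1x_2}{(1-\mu)^{2\lambda-1}x_1x_2 + \mu^{2\lambda-1}(1-x_1)(1-x_2)}$. I would restrict to two representative priors $\mu_1,\mu_2$, taken symmetric ($\mu_2 = 1-\mu_1$) to keep the algebra clean, with weights $w_1,w_2$ equal to the chosen prior over structures times the profile probability. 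Then Claim~\ref{lemma: square relative loss} gives that, on this profile, the optimal $f(\mathbf{x})$ is the weighted mean of $f^*_{\mu_1},f^*_{\mu_2}$, leaving residual weighted loss $\frac{w_1 w_2}{w_1 + w_2}\bigl(f^*_{\mu_1} - f^*_{\mu_2}\bigr)^2$; I define $lb(\lambda)$ to be this residual.

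The crucial structural fact is the exponent $2\lambda - 1$: at $\lambda = \tfrac12$ it vanishes, so $f^*_\mu = \frac{x_1x_2}{x_1x_2+(1-x_1)(1-x_2)}$ becomes independent of $\mu$, the gap $f^*_{\mu_1}-f^*_{\mu_2}$ is zero, and $lb(\tfrac12)=0$. Since $lb$ is a nonnegative residual, $\lambda=\tfrac12$ is automatically a global minimizer of value $0$ (consistent with the fact that at $\lambda=\tfrac12$ the aggregator $f=\frac{x_1x_2}{x_1x_2+(1-x_1)(1-x_2)}$ equals $f^*$ on every structure, forcing $\inf_g R_{1/2}(g)=0$). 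To establish the V-shape I would substitute $r=r(\lambda)=\bigl(\tfrac{1-\mu}{\mu}\bigr)^{2\lambda-1}$, which is strictly monotone in $\lambda$ and equals $1$ exactly at $\lambda=\tfrac12$, rewrite the posterior gap and hence $lb$ as an explicit function of $r$, and show by a derivative sign analysis that this function is single-troughed in $r$ with trough at $r=1$. Monotonicity of $r(\lambda)$ then transfers the single-trough to $\lambda$, yielding the V-shape with minimum $0$ at $\lambda=\tfrac12$.

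The main obstacle is twofold. First, feasibility: I must verify that the two chosen priors genuinely yield valid structures in $\Theta_4$ reproducing the fixed reports --- all conditional probabilities in $[0,1]$ and the constraints of Claim~\ref{claim: LP of prior vector} satisfied --- and that $w_1,w_2>0$, so the two-point bound is legitimate. Second, the V-shape computation: because both the gap and the weights depend on $\lambda$, the derivative analysis is delicate; I would tame it by using the symmetric construction $\mu_2=1-\mu_1$, so that the two posteriors are reflections about $\tfrac12$ and the gap collapses to a single monotone function of $\lvert r - r^{-1}\rvert$, after which the sign of the derivative on each side of $r=1$ is transparent. Should a single fixed family fail to remain feasible across all of $[0,1]$, I would instead define $lb$ as a supremum over admissible constructions at each $\lambda$ and invoke Lemma~\ref{lemma: V-shaped supremum} to preserve the V-shape.
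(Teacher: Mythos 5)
Your proposal is correct and is essentially the paper's own argument: the paper also builds a two-point indistinguishable family with symmetric priors $\gamma$ and $1-\gamma$ (conditionals chosen so both structures induce the report $\tfrac12$ on the relevant profile with equal probability), exploits the $2\lambda-1$ exponent vanishing at $\lambda=\tfrac12$, and reparameterizes by $y=\bigl(\tfrac{\gamma}{1-\gamma}\bigr)^{2\lambda-1}$ to read off the V-shape before taking a supremum over $\gamma$. The "delicate" derivative analysis you anticipate turns out to be clean, since the profile probability factors as $\gamma(1+y)$ and the whole loss collapses to $\gamma\cdot\phi(y)$ with $\phi(y)=(1+y)\bigl(\tfrac12-\tfrac1{1+y}\bigr)^2$ convex in $y$ with minimum $0$ at $y=1$.
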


Following \citet{arieli2018robust}, we build the regret lower bound by constructing two information structures, each occurring with a one-half chance.
In both structures, experts receive signals that are independent and identically distributed (i.i.d.) given the true world state $\omega$. There are two types of signals (signal $r$ or signal $b$) for each expert, i.e., $\mathcal{S}_1 = \mathcal{S}_2 = \{r, b\}$. %\shu{notation of signals as previously mentioned.}
We carefully design the signals so that the expert's prediction will be $\frac{1}{2}$ upon receiving signal $r$, i.e., $\brn(r, \lambda) = \frac{1}{2}$, and be either $0$ or $1$ upon receiving signal $b$, i.e., $\brn(b, \lambda) = 0/1$. %\shu{The same notation ``/'' represents different meanings.} 
The specifics of the two structures are outlined in Table~\ref{tab: lower bound instance}, where $\gamma$ serves as a control parameter.

\begin{table}[h]
    \centering
    \begin{tabular}{cccc}
    \hline \hline
         &priori $\mu$ & $\Pr[r\mid \omega = 1]$ & $\Pr[r\mid \omega = 0]$\\
    \hline
        Structure 1&$\gamma$ & 1& $\gamma^\lambda/(1-\gamma)^\lambda$\\
         Structure 2&$1-\gamma$ &$\gamma^\lambda/(1-\gamma)^\lambda$ & 1\\
    \hline \hline
    \end{tabular}
    \caption{Construction of Lower Bound Instance ($0 < \gamma < \frac{1}{2}$)}
    \label{tab: lower bound instance}
\end{table}

By this construction, in the case where both experts receive signal $r$, their predictions are both $\frac{1}{2}$. The likelihood of this case is the same for both structures. Therefore, an aggregator without knowledge about which structure is currently appearing can at best give an aggregated forecast at $\frac{1}{2}$. %\shu{An overall suggestion: is it better to use $\frac{1}{2}$ instead of $\frac{1}{2}$, because the latter may be confusing regarding expression.} 
However, the omniscient aggregator who knows the currently occurring structure will forecast differently. The Bayesian aggregator's posterior provided by the omniscient aggregator is 
\[f^*(r,r) = 
\begin{cases} 
  \cfrac{\gamma}{\gamma + (1-\gamma)\cdot\gamma^{2\lambda}/(1-\gamma)^{2\lambda}} & \text{for Structure 1}, \\
  \cfrac{(1-\gamma)\cdot\gamma^{2\lambda}/(1-\gamma)^{2\lambda}}{(1-\gamma)\cdot\gamma^{2\lambda}/(1-\gamma)^{2\lambda} + \gamma} & \text{for Structure 2}.
\end{cases}
\]
Therefore, the regret for any aggregator $f$ is at least
% \[\Pr\left[S_1 = r, S_2 = r\right] \cdot \mathbb{E}
% \left[ L\left(\frac{1}{2}, \omega\right) - L\left(f^*(r,r), \omega\right)\mid S_1 = r, S_2 = r\right].\]
\(\Pr\left[S_1 = r, S_2 = r\right] \cdot \left(\frac{1}{2}- f^*(r,r)\right)^2.\) %\shu{incorporate it into the paragraph}

Varying the parameter $\gamma$ within range $(0,\frac{1}{2})$ to maximize the above relative loss, we can build the lower bound $lb(\lambda)$ for the regret $R_\lambda$. Formally,
% \[lb(\lambda) = \max_{\gamma \in (0,\frac{1}{2})}\left\{\Pr[\mathbf{s} = (r,r)] \cdot \mathbb{E}_{\theta}
%  \left[ L\left(\frac{1}{2}, \omega\right) - L\left(f^*(\mathbf{s}), \omega\right) \mid \mathbf{s} = (r,r) \right] \right\}. \]
\[lb(\lambda) = \max_{\gamma \in (0,\frac{1}{2})}\left\{\Pr\left[S_1 = r, S_2 = r\right] \cdot \left(\frac{1}{2}- f^*(r,r)\right)^2\right\}.\]

The remaining proof of Theorem~\ref{theorem: unimodality of the lower bound} is deferred in the Appendix~\ref{app:lower bound}. We verify the V-shape of the lower bound $lb(\lambda)$ by showing the relative loss is V-shaped for any fixed parameter $\gamma$. 
Intuitively, the first component $\Pr[S_1 = r, S_2 = r]$, which is the likelihood of the indistinguishable case, decreases as $\lambda$ increases. The second component $(\frac{1}{2} - f^*(r,r))^2$, which is the gap between the best aggregation and Bayesian aggregator's posterior, first decreases and then increases, reaching a minimum value of zero at $\lambda = \frac{1}{2}$.%\shu{new here}

\begin{appendixproof}[Proof of Theorem~\ref{theorem: unimodality of the lower bound}]
\label{app:lower bound}
%     By Lemma~\ref{lemma: square relative loss}, we can reformulated the relative loss as \[\Pr\left[\mathbf{s} = (r,r)\right] \cdot \mathbb{E}
% \left[ L\left(\frac{1}{2}, \omega\right) - L\left(f^*(\mathbf{s}), \omega\right)\mid \mathbf{s} = (r,r)\right] = \Pr\left[\mathbf{s} = (r,r)\right] \cdot \left(\frac{1}{2}-f^*(r,r)\right)^2.\]

Substituting the likelihood for the signal profile $(r,r)$, given as $$\Pr\left[S_1 =r, S_2 = r\right] = \gamma + (1-\gamma) \gamma^{2\lambda}/(1-\gamma)^{2\lambda},$$
and the Bayesian posterior \( f^*(r,r)\) into the relative loss formula, we obtain
\begin{align*}
  % &  \Pr\left[S_1 = r, S_2 = r\right]  \mathbb{E}\left[ L\left(\frac{1}{2}, \omega\right) - L\left(f^*(\mathbf{s}), \omega\right)\mid \mathbf{s} = (r,r)\right] \\
&\Pr\left[S_1 = r, S_2 = r\right] \cdot \left(\frac{1}{2}- f^*(r,r)\right)^2\\
= & \left(\gamma + (1-\gamma)\frac{\gamma^{2\lambda}}{(1-\gamma)^{2\lambda}}\right)\cdot \left(\frac{1}{2} - \cfrac{\gamma}{\gamma + (1-\gamma)\cdot
\gamma^{2\lambda}/(1 - \gamma)^{2\lambda}}\right)^2\\
=& \gamma\cdot \left(1 + \frac{\gamma^{2\lambda - 1}}{(1-\gamma)^{2\lambda - 1}}\right)\cdot \left(\frac{1}{2} - \cfrac{1}{1 + \gamma^{2\lambda - 1}/(1 - \gamma)^{2\lambda - 1}}\right)^2
\end{align*}

Let $y(\gamma, \lambda)$ denote value $\gamma^{2\lambda - 1}/(1 - \gamma)^{2\lambda - 1}$. We simplify the relative loss as \[ \Pr\left[S_1 = r, S_2 = r\right] \cdot \left(\frac{1}{2}- f^*(r,r)\right)^2 = \gamma\cdot\phi(y(\gamma, \lambda)),\] where \[\phi(y) = (1+y)\left(\frac{1}{2} - \frac{1}{1+y}\right)^2.\]
% And \( \Pr\left[\mathbf{s} = (r,r)\right] \cdot \mathbb{E}\left[ L\left(\frac{1}{2}, \omega\right) - L\left(f^*(\mathbf{s}), \omega\right)\mid \mathbf{s} = (r,r)\right] \) is equal to \( \gamma\cdot\phi(y) \) where \( \phi(y) = (1+y)\left(\frac{1}{2} - \frac{1}{1+y}\right)^2\).

For any fixed $\gamma$, \( y(\gamma,\lambda)\) decreases from \( \frac{1 - \gamma}{\gamma} \) to \( \frac{\gamma}{1-\gamma} \) as \( \lambda \) increases from \( 0 \) to \( 1 \).
Notice that \( \phi(y) = (1+y)\left(\frac{1}{2} - \frac{1}{1+y}\right)^2 = \frac{1}{4} \left(\frac{4}{1+y} + (1+y)\right) - 1 \) is V-shaped for $y$ with minimum value zero reached at point \( y = 1 \). It can be derived that the relative loss decreases for \( \lambda \) from \( 0 \) to \( \frac{1}{2} \) and increases for \(\lambda\) from \( \frac{1}{2} \) to \( 1 \). 
Specifically, when $\lambda = \frac{1}{2}$, for any parameter value $\gamma$, the relative loss is zero.%\( lb\left(\frac{1}{2}\right) \) equals \( 0 \).

Recall that the lower bound value $lb(\lambda)$ is the maximum relative loss across different parameters $\gamma$. The monotonicity for relative loss still holds for the lower bound $lb(\lambda)$.
\begin{itemize}
    \item For \( 0 \le \lambda_1 < \lambda_2 \le \frac{1}{2}\), assuming \( lb(\lambda_2) \) is obtained at \( \gamma = \gamma^* \), we have
\[ lb(\lambda_1) \ge \gamma^* \cdot \phi(y(\gamma^*, \lambda_1)) > \gamma^* \cdot \phi(y(\gamma^*, \lambda_2)) = lb(\lambda_2). \]
\item Similarly, for \(\frac{1}{2} \le \lambda_1 < \lambda_2 \le 1 \), it holds that \( lb(\lambda_1) < lb(\lambda_2) \).
\end{itemize}
\end{appendixproof}

% The unimodality of the lower bound also holds for log loss function.
% \begin{theorem}
%     The lower bound of regret, $lb(\lambda)$, with the log loss $L(p, \omega) = -[\omega \log(p) + (1 - \omega) \log(1 - p)]$, first shows a monotonic decrease and then a monotonic increase as $\lambda$ varies from $0$ to $1$.
% \end{theorem}
% \yw{Prove it in Appendix}

\section{Numerical Results}\label{section: numerical results}
In this section, we present several numerical results about the regret of specific aggregators. These regret curves are all single-troughed as our theoretical result in Theorem~\ref{theorem: V-shaped upper bound}. Each of them provides an upper bound for the optimal regret $\inf_{g}{R_\lambda(g)}$, whose lower bound is studied in Theorem~\ref{theorem: unimodality of the lower bound}. Here is an outline of our results:
\begin{itemize}
    \item [(1)] Average Prior is V-shaped: While the regret of the simple average aggregator monotonically decreases as the value of $\lambda$ increases, interestingly, we find the average prior aggregator achieves the lowest regret with $\lambda < 1$. %This finding suggests that non-fully Bayesian helps \shu{partial Bayesian benefits?} average prior aggregation.
    \item [(2)] New Family of Aggregators: We identify a family of aggregators, $\{f^{\hat{\lambda}}_{ap}\}_{0\le \lambda \le 1}$, named $\hat{\lambda}$-base rate balancing aggregators. The minimum regret of these aggregators closely approaches our constructed lower bound $lb(\lambda)$, with a small error margin below 0.003. 
    Since these regrets are all upper bounds of $\inf_{g}R_\lambda(g)$, our finding indicates that our proposed lower bound is almost tight. 
    \item [(3)] Almost-zero\hide{$\approx 0$} Regret at $\lambda=0.5$: There exists an aggregator $f^{0.5}_{ap}$ that achieves almost-zero regret when the prior consideration degree $\lambda$ is one-half. %This observation verifies the feasibility of effectively estimating the Bayesian posterior using only experts' predictions for cases where $\lambda = \frac{1}{2}$.
    %indicating that the distortion of expert reports is minor in this case.
    %\yw{remove the distortion expression here and add (4)}

    \item [(4)] Nearly Optimal Aggregator across All $\lambda$: A particular $\hat{\lambda}$-base rate balancing aggregator, $f_{ap}^{0.7}$, performs well across different $\lambda$. This robust aggregator is nearly optimal within a 0.013 loss compared to the optimal aggregator for all $\lambda$.
\end{itemize}

%\yk{I do not understand the distortion part.}

%\yk{add a summary for 0.7 aggregator which performs well across different degrees.}

%\yk{directly describe the results. e.g. we will show that all regret curves are v-shape and surprisingly some achieve the lowest regret with base neglect degree $<1$. This means that counter-intuitively, non-fully Bayesian helps aggregation. We will demonstrate multiple numerical results. 1) there exists an aggregator that achieves almost zero regret for degree $\approx tktk$, and has $R(f)\leq tktk$. ... }

\subsection{Regret of Existing Aggregators}
We first evaluate the following two aggregators numerically \footnote{We employ the same method in \citet{arieli2018robust}, the global optimization box in Matlab.}. 
\begin{itemize}
    \item Simple Average Aggregator: \(f_{ave}(x_1, x_2) = \frac{x_1+x_2}{2}.\)
    \item Average Prior Aggregator: \(f_{ap}(x_1, x_2) = \frac{(1-\hat{\mu})x_1x_2}{(1-\hat{\mu})x_1x_2 + \hat{\mu}(1-x_1)(1-x_2)},\) where symbol $\hat{\mu}$ is the prior proxy set to $\frac{x_1 + x_2}{2}$. 
\end{itemize}

Figure~\ref{fig: main results} presents the numerical regret curves $R_\lambda(f_{ave})$, $R_\lambda(f_{ap})$, and the lower bound curve $lb(\lambda)$, considering $\lambda$ as a multiple of a tenth. As we can see, the regret of the simple average aggregator, $R_\lambda(f_{ave})$, initially decreases and then stabilizes. Notably, the average prior aggregator achieves a lower regret at some interior point where $\lambda < 1$, suggesting that the regret curve $R_\lambda(f_{ap})$ is V-shaped. This observation is somewhat counterintuitive --- when the experts incorrectly lower the prior weight and make wrong predictions, the aggregation results, however, turn out to be better.

In addition, as shown in this figure, the regret curve of average prior closely approaches the lower bound when $\lambda$
%\shu{\sout{the}} $\lambda$ \shu{\sout{value}} 
is close to $1$. This implies that the average prior is a nearly optimal aggregator for experts who are perfect Bayesian. %\shu{near-optimal \& nearly optimal}

\subsection{Nearly Optimal Aggregators for Various Degrees $\lambda$}
% $\lambda$ Values
As shown in Figure~\ref{fig: main results}, there remains a large gap between the lower bound $lb(\lambda)$ and the regret curves of existing aggregators when degree $\lambda$ is small. This gap indicates the poor performance of these aggregators when experts demonstrate a considerable tendency of base rate neglect.

To better aggregate predictions from various expert groups, for each $\lambda$, we require a nearly optimal aggregator. We propose a new family of aggregators, the \textbf{$\hat{\mathbf{\lambda}}$-base rate balancing aggregators}, denoted as $\{f^{\hat{\lambda}}_{ap}\}_{0\le \hat{\lambda} \le 1}$. 
%For each $\lambda$ value, \shu{Formally,} 
Formally, we define the aggregator $f^{\hat{\lambda}}_{ap}$ as \[f^{\hat{\lambda}}_{ap}(x_1, x_2) = \frac{(1-\hat{\mu})^{2\hat{\lambda} - 1}x_1x_2}{(1-\hat{\mu})^{2\hat{\lambda} - 1}x_1x_2 + \hat{\mu}^{2\hat{\lambda} - 1}(1-x_1)(1-x_2)}, \] where the prior proxy $\hat{\mu}$ is set to the average prediction of experts, i.e., $\frac{x_1+x_2}{2}$. The $\hat{\lambda}$-base rate balancing aggregators include average prior aggregator as a special case where $\hat{\lambda} = 1$, i.e., $f_{ap}^{1} = f_{ap}$. 

These aggregators adopt the same aggregation methodology as the omniscient aggregator (see Observation~\ref{lemma: omniscient aggregator}). However, unlike the omniscient aggregator, the $\hat{\lambda}$-base rate balancing aggregator lacks knowledge of the true prior $\mu$ and the prior consideration degree $\lambda$. Instead, these aggregators embed a fixed value $\hat{\lambda}$ within the aggregation formula and use the average prediction as an estimate for the actual prior. Particularly, even if the embedded value $\hat{\lambda}$ is exactly the degree $\lambda$, %\yw{keep this semi-sentence.}
the difference between prior proxy $\hat{\mu}$ and true prior $\mu$ leads to a non-zero regret. % for $f_{ap}^{\hat{\lambda}}$. 
For example, even when all the experts are Bayesian, it remains a gap between the average prior (i.e., $f_{ap} = f_{ap}^{1}$) and the omniscient aggregator because the average prediction $\frac{x_1 + x_2}{2}$ does not always meet the actual prior $\mu$.

% With these extensions, we can tailor the aggregation process according to the experts' varying degrees of base rate consideration, thereby optimizing the aggregation accuracy for non-Bayesian expert groups.

\begin{figure}
    \centering
    \begin{tikzpicture}[scale = 0.85]
    \begin{axis}[
        width=1\textwidth, % Adjust the width of the plot
        height=7.8cm, % Adjust the height of the plot
        xlabel={base rate consideration degree $\lambda$},
        ylabel={regret $R_\lambda(f)$},
        xmin=0, xmax=1,
        ymin=0,
        ytick={0.05,0.1,0.15,0.2,0.25}, % 设置横坐标刻度的位置
        yticklabels={0.05,0.10,0.15,0.20,0.25} , % 设置横坐标刻度标签的内容
        legend pos=north east,
        grid style=dashed,
        ]
    
    % Lower bound curve
    \addplot[color=lightBlue,
    solid,
    line width=1pt,
    mark=o,
    mark options={solid},
    mark size=2.5pt] coordinates {
        (0,0.25) (0.1,0.100347) (0.2,0.042957) (0.3,0.015106) (0.4,0.003017) (0.5,0) (0.6,0.001939) (0.7,0.006288) (0.8,0.011604) (0.9,0.017143) (1,0.022542)
    };
    \addlegendentry{lower bound $lb(\lambda)$}
    
    % Average prior with alpha = 0.5
    \addplot[color=darkGreen,
    solid,
    line width=1pt,
    mark=triangle,
    mark options={solid},
    mark size=2.5pt] coordinates {
        (0,0.249999) (0.1,0.100717) (0.2,0.043769) (0.3,0.015889) (0.4,0.003341) (0.5,0) (0.6,0.002482) (0.7,0.008698) (0.8,0.017282) (0.9,0.027317) (1,0.038177)
    };
    \addlegendentry{$\hat{\lambda}$-base rate balancing aggregator with $\hat{\lambda} = 0.5$}
    
    % Average prior with alpha = 0.7
    \addplot[color=darkBlue,
    solid,
    line width=1pt,
    mark=square,
    mark options={solid},
    mark size=2.5pt
    ] coordinates {
        (0,0.249999) (0.1,0.112599) (0.2,0.054847) (0.3,0.02508) (0.4,0.009709) (0.5,0.002478) (0.6,0.002237) (0.7,0.006743) (0.8,0.013867) (0.9,0.022527) (1,0.032162)
    };
    \addlegendentry{$\hat{\lambda}$-base rate balancing aggregator with $\hat{\lambda} = 0.7$}

    % Average priori curve
    \addplot[color=lightGreen,
    solid,
    line width=1pt,
    mark=diamond,
    mark options={solid},
    mark size=2.5pt] coordinates {
        (0,0.25) (0.1,0.147781) (0.2,0.094201) (0.3,0.060114) (0.4,0.037578) (0.5,0.022542) (0.6,0.015458) (0.7,0.014549) (0.8,0.014352) (0.9,0.018257) (1,0.025992)
    };
    \addlegendentry{$\hat{\lambda}$-base rate balancing aggregator with $\hat{\lambda} = 1$}
    \end{axis}
    \end{tikzpicture}
    \caption{the Regret of $\hat{\lambda}$-base rate balancing aggregators $f^\lambda_{ap}$}
    \label{fig: EAP_regret}
\end{figure}


The regret curves of $\hat{\lambda}$-base rate balancing aggregators %\shu{with different $\hat{\lambda}$?} 
are shown in Figure~\ref{fig: EAP_regret}.
When the embedded parameter $\hat{\lambda}$ is set to $0.5$, the numerical regret $R_\lambda(f^{0.5}_{ap})$ closely approaches the lower bound for cases where $\lambda \le 0.5$, implying its near-optimality when experts slightly incorporate the prior into their predictions. We highlight that this aggregator achieves almost-zero regret for $\lambda = 0.5$, i.e. $R_{0.5}(f^{0.5}_{ap}) \approx 0$. This surprising finding implies the negligible distortion between the partial information contained in experts' predictions and the full information that the omniscient aggregator can access at $\lambda = 0.5$. In other words, when experts integrate their prior knowledge at a degree of $\lambda = 0.5$, a decision-maker without specific knowledge of the underlying information structure can effectively approximate the Bayesian aggregator's posterior, by solely relying on experts' predictions. %\yw{partial information thing}

%When experts are perfect Bayesian, no aggregator can have regret

%\yw{I rewrite the following paragraph.}
The regret of $\hat{\lambda}$-base rate balancing aggregator with $\hat{\lambda} = 0.5$ and that with $\hat{\lambda} = 1$ together form an upper bound of the optimal regret $\inf_{g}{R_\lambda(g)}$, which is notably close to the previously established lower bound $lb(\lambda)$, with a small error margin up to 0.003. Notably, for degree $\lambda \le 0.8$, this error remains exceptionally low (not exceeding 0.001).
Such proximity between upper bound, i.e., $\min\{R_\lambda(f_{ap}^{0.5}), R_\lambda(f_{ap}^{1})\}$, and lower bound, i.e., $lb(\lambda)$, suggests that both of them are almost tight.
%the lower bound $lb(\lambda)$ is almost tight.

\subsection{Robust Aggregator for Unknown Degree $\lambda$}
The aforementioned nearly optimal aggregators help aggregation when the prior consideration degree $\lambda$ is known. However, the decision maker generally does not know to what extent the experts consider the prior. Noticing that a nearly optimal aggregator at degree $\lambda_1$ may poorly perform at another degree $\lambda_2$, we require a robust aggregator that aggregates predictions effectively across different $\lambda$ values. %\shu{Thus,} 

% because A near-optimal aggregator at $\lambda_1$
% As is shown in Section~\ref{section: numerical results}, different aggregators excel in distinct ranges of base rate considerations. The decision maker should select the appropriate aggregator according to the experts' base rate consideration degree. However, 
% To quantify the efficacy of an aggregator with an unknown $\lambda$ value, we use the new metric, $R(f)$, as discussed in the previous section. This metric assesses the ``robustness'' of an aggregator's performance regarding experts' prior consideration degree.
%\yw{I rewrite this sentence and hide the repeated overall regret definition}

We employ a new framework as mentioned in Introduction and evaluate the performance of an aggregator by assessing the overall regret defined in Equation (\ref{def: overall regret}). %\[R(f) = \sup_{\lambda \in [0, 1]}\{R_\lambda(f) - \inf_{g}{R_\lambda(g)}\}\] \yw{label the definition in the introduction and refer to the label?}
This overall regret is hard to compute due to the complexity of deciding the optimal regret $\inf_g R_\lambda(g)$. Instead, we use the regret lower bound $lb(\lambda)$ to replace the optimal regret, providing an upper bound for the overall regret $R(f)$, denoted as $\hat{R}(f)$. Formally,
\[\hat{R}(f) = \sup_{\lambda \in [0, 1]}\{R_\lambda(f) - lb(\lambda)\} \ge R(f).\]

Table~\ref{tab: numerical results of new criterion} shows the numerical results for this upper bound of regret. We find that with $\hat{\lambda} = 0.7$, the $\hat{\lambda}$-base rate balancing aggregator attains an aggregated outcome with a regret below 0.013, irrespective of experts' prior consideration degree $\lambda$.
%, irrespective of the specific information structure or the consideration degree of prior by the experts.

%\yw{Should we find the optimal $f_{ap}^\lambda$?}
\begin{table}
    \centering
    \begin{tabular}{ccccc}
    \hline \hline
       aggregator  &  $f_{ave}$ & $f_{ap}$ & $f_{ap}^{0.5}$ & $f_{ap}^{0.7}$\\
       \hline
       $\hat{R}(f)$  & 0.062 & $\ge$ 0.051 & 0.015 &$\approx$ 0.013 \\
    \hline \hline
    \end{tabular}
    \caption{Numerical Results of $\hat{R}(f)$}
    \label{tab: numerical results of new criterion}
\end{table}

% \section{Extension: Cases that experts consider the prior differently}
% \yw{NO FINISHED}

\section{Study}\label{study}

We have theoretically and numerically assessed the performance of different aggregators. %\yw{In this section, we empirically study forecast aggregation with real-world human predictions.} 
Regarding aggregating predictions from real-world human subjects, we investigate the following questions:
%\shu{At last, I should check all the numbers after we finalize the manuscript.}

\begin{itemize}
\item [(1)] Do people display base rate neglect as prior empirical studies suggest?

\item [(2)] Which aggregator is best for aggregating predictions empirically?

\item [(3)] Does a certain degree of base rate neglect help aggregation in practice?
\end{itemize}

To further examine these questions, we conduct an online study to identify base rate neglect in human subjects and empirically compare our aggregators with alternatives. To make our comparison more representative, we use average loss rather than worst-case loss to measure aggregators' performance. Our findings are outlined below:

\begin{itemize}
\item [(1)] 
Types of Responses: Very few predictions are perfect Bayesian. Some of them display base rate neglect.
%\shu{Maybe we should use some, because majority fall outside the range.} 
However, around ~57\% of predictions do not fall between perfect BRN and Bayes, which is beyond our theoretical base rate neglect model. Around ~19\% even just report the prior, indicating a tendency opposite to base rate neglect, which exhibits signal neglect. 

\item [(2)] %For the general population \yw{of xxx structures we test?}, 
New Aggregator Wins in Inside Group:
Among the general population, simple averaging achieves the lowest average loss in the level of information structure. %\shu{new here} 
This is because ~57\% of predictions fall outside the perfect BRN-Bayes range that our theoretical model considers. When we restrict the predictions within this range,
%into the BRN-Bayes range,
certain $\hat{\lambda}$-base rate balancing aggregators with $\hat{\lambda} < 1$ can achieve lower loss than other aggregators, such as simple average and average prior, aligning with theoretical results in previous sections.

\item [(3)] Base Rate Neglect Helps Aggregation: Within the same aggregator, some degree of base rate neglect does not necessarily hurt forecast aggregation - it may even improve it. 
%\shu{Actually, we do not explore the effect of degree of BRN on the performance in the inside group, we only divide the subsample, eg, inside vs. outside. Thus, we should say ``within the same aggregator'' instead of within the inside group.}\yw{I don't get this paragraph}
\end{itemize}

The following content of this section presents the design and results of our study in detail.
We highlight that different from previous studies which only focus on several specific information structures [e.g., \citealp{ginossar1987problem,esponda2023mental}], our work collects a comprehensive dataset on predictions under tens of thousands of information structures. 
%our work collects a comprehensive dataset on a finite set of information structures, including predictions under tens of thousands of information structures. \shu{repetitive, the definition of information structure?}

%\yk{highlight here for empirical data, we focus on a finite set of information structures. mostly focus on average case results. (perhaps also attach the numerical results for the average case)}\shu{added.}

\subsection{Study Design and Data Collection}
\label{section:design}

\paragraph{\textbf{Task}} We use the standard belief-updating task to elicit the forecast of subjects \citep{phillips1966conservatism,grether1980bayes}. Specifically, there are two boxes, each containing a mix of red and blue balls with a total of 100. In the left box, the proportions of red and blue balls are $p_{le}\in (0,1)$ and $(1-p_{le})$ respectively. Similarly, the proportions in the right box are $p_{ri}\in (0,1)$ and $(1-p_{ri})$. %\shu{should we use $\alpha$ and $\beta$? Maybe it is better.}\yw{I think this version is okay.} 
One box is selected randomly. 
Particularly, the probability of selecting the left box is $\mu\in(0,1)$, and that of the right box is $1-\mu$. Then one ball is randomly drawn from the selected box. The color of the drawn ball is informed to the subjects as a signal.
After knowing the signal, subjects are required to estimate the probability that the drawn ball comes from the left box\footnote{Specifically, the two questions are ``If the ball is red, what is the probability that it comes from the left box'' and ``If the ball is blue, what is the probability that it comes from the left box''.}. 
We consider a finite set of information structures, and name the specific combinations of the parameters (i.e., $\mu, p_{le}, p_{ri}$) as cases. The parameters are all multiples of one tenth. 
%We consider a finite set of information structures, where the parameters (i.e., ) of each structures are all multiples of one tenth. 
%To collect individual predictions under different information structures as much as possible, we discretize each of the parameters in $\{p_{le}, p_{ri}, \mu\}$ with increments of 0.1. 
Consequently, there are $9^3=729$ cases in total. Each subject is required to answer 30 different cases. 
In each case, the subject's predictions upon two signals (red ball or blue ball) are collected \footnote{To ensure subject's predictions upon the two signals in the same case are independent, we assign them randomly across different rounds.}.
Therefore, each subject should answer 60 rounds of questions involving 30 cases. %All predictions must be expressed as percentage points and fall within the range of 0\% to 100\%.
Predictions should be stated in percentage points, with values ranging from 0\% to 100\%.

\paragraph{\textbf{Procedure}} The experiment is conducted using Otree \citep{chen2016otree} and we recruit a balanced sample of male and female from Prolific \citep{Palan2018prolific}.
%\yw{citation?}. 
Subjects provide informed consents and are made aware that their responses would be used for research purposes. We use the incentive-compatible BDM method \citep{becker1964measuring} to elicit their true belief.
%\footnote{Subjects can receive an additional bonus of either \$6 or \$0, depending on their decisions.}.
%\yw{Particularly?}, 
 Particularly, we introduce the example task and payment scheme before the formal task to guarantee subjects' understanding. Appendix~\ref{appendix:instruction} shows the instructions to subjects.

At last, 291 subjects finished the study. On average, there are 11.98 different subjects providing predictions under each case. In total, we obtain predictions under 29,889 information structures. The experiment lasts 32.68 minutes on average and the average payment is around \$8.16 (including \$5.5 participation fee).
%\yw{case v.s. information structure?; }
\hide{At last, 201 subjects finished the study, and we exclude x of them who spent less than x minutes on the study, resulting in x subjects and x predictions used for the formal analyses. Thus, on average, there are x predictions under each information structure. The average duration for the remaining subjects is 33.7 minutes and the average payment is \$x.  }

%\section{Empirical Analysis of Base Rate Neglect in Humans}

\subsection{Identification of Base Rate Neglect}\label{subsection:brn}
Our work is motivated by the well-observed deviation from Bayesian predictions.
Therefore, our first objective is to identify whether the responses from subjects in our study align with Bayesian principles. 
%Ideally, subjects apply Bayes' Theorem to update their beliefs, 
Thus, we use the perfect Bayesian posterior as benchmark: for a red ball signal, it is $x^{Bayes}(r) = \frac{\mu  p_{le}}{\mu   p_{le} + (1-\mu)   p_{ri}}$, and for a blue ball signal, it is $x^{Bayes}(b) = \frac{\mu   (1-p_{le})}{\mu   (1-p_{le}) + (1-\mu)   (1-p_{ri})}$. %In these formulations, $r$ and $b$ denote the red ball and blue ball signals. 
We name these reports as perfect Bayes. Furthermore, similar to \citet{esponda2023mental}, we define responses $x^{pBRN}(r) = \frac{p_{le}}{p_{le}+p_{ri}}$ and $x^{pBRN}(b) = \frac{1-p_{le}}{(1-p_{le})+(1-p_{ri})}$ and name them as perfect Base Rate Neglect (perfect BRN), which corresponds to the instance of $\lambda = 0$ in our base rate neglect model.

%because literature reports that many subjects respond by fully ignoring the prior. We name such responses as perfect Base Rate Neglect (perfect BRN). These responses are considered in our base rate neglect model with $\lambda = 0$ . 

\paragraph{\textbf{Base Rate Neglect at Prediction Level}} The results of our study show that only 12.44\% of the predictions are consistent with perfect Bayes\footnote{We exclude the cases when $\mu = 0.5$ during the analyses in this subsection, because there is no difference between $x^{Bayes}$ and $x^{pBRN}$.}$^{,}$\footnote{We relax the Bayesian belief by permitting rounding $x^{Bayes}$ both up and down to two decimal places, and the same principle applies to $x^{pBRN}$. For example, both 0.56 and 0.57 are regarded as perfect Bayes when the actual Bayesian posterior is 0.5625.}.
Meanwhile, 5.37\% of the predictions fully ignore the base rate, which is consistent with perfect BRN.
Moreover, 25.11\% of the responses fall inside the range between $x^{Bayes}$ and $x^{pBRN}$ (named inside group), which exhibit partial base rate neglect, and 57.08\% fall outside (named outside group)\footnote{We acknowledge that our theoretical model does not encompass predictions in outside group. Nevertheless, our subsequent empirical analyses will incorporate such predictions and examine how aggregators perform when aggregating them.}$^{,}$\footnote{In our study, the occurrence of perfect BRN is relative low when compared to what is documented in existing literature. For example, using \citet{kahneman1972prediction}'s taxicab problem, \citet{bar1980base} finds around 10\% subjects provide Bayesian predictions while 36\% fully ignore the base rate. The low occurrence in our study can be ascribed to two factors.
%\yw{(in round one of Primitives treatment of \cite{esponda2023mental}, 56.3\% of subjects submit perfect BRN)} can be ascribed to two factors. 
Firstly, our study introduces a broader range of information structures beyond the classical cases known to easily provoke base rate neglect.
Secondly, we use abstract description instead of contextualized vignette for simplicity, leading to a lower degree of base rate neglect \citep{ganguly2000asset}.}. 
%Each prediction can be classified into one of the four groups. \shu{To update footnote 7}
%subsequent empirical analyses will complement this limitation by consider their aggregating performance. 

%Beyond the above classifications, we also notice that there are 18.94\% of the predictions consistent with the priors, which is also a type of systematic deviation from Bayesian reasoning, named signal neglect \citep{phillips1966conservatism,coutts2019good,campos2023non}. \yk{among the outside?}\shu{They cannot be classified into either one of the above types, namely, they are beyond the above classification.}\yw{I'm also confused here. I think the signal neglect is counted in outside class.}
\hide{When we narrow our focus to the predictions in the first round instead of all 60 rounds, where learning effect is not a concern, the proportion of perfect Bayes slightly decreases, while that of perfect BRN slightly increases, both reaching 6.99\%. \yk{do not understand this sentence, red ball round?}\shu{more clear now maybe.} }
%\shu{I drop the first round sentence.} 

Figure~\ref{fig: ans_ratio} shows the proportion of different types of responses conditional on the signal across rounds. We observe that these proportions are relatively stable with respect to both rounds and signals.
%\footnote{We do not observe the dynamic pattern across rounds identified by \citet{esponda2023mental}, as subjects respond to different information structures in each round rather than just receiving additional signals.}. 
Thus, we combine the reports under two signals in the following analyses. The above findings together validate that subjects rarely submit Bayesian beliefs. 

In addition, we also notice that there are 18.94\% of the predictions consistent with the priors, which is also a type of systematic deviation from Bayesian reasoning named signal neglect \citep{phillips1966conservatism,coutts2019good,campos2023non}. %Predictions with signal neglect cannot be fully classified into either one of the four groups.
%\shu{new here.}

%\yk{change notations here}\shu{changed}

\begin{figure}[h]
    \centering
    \includegraphics[width=0.8\linewidth]{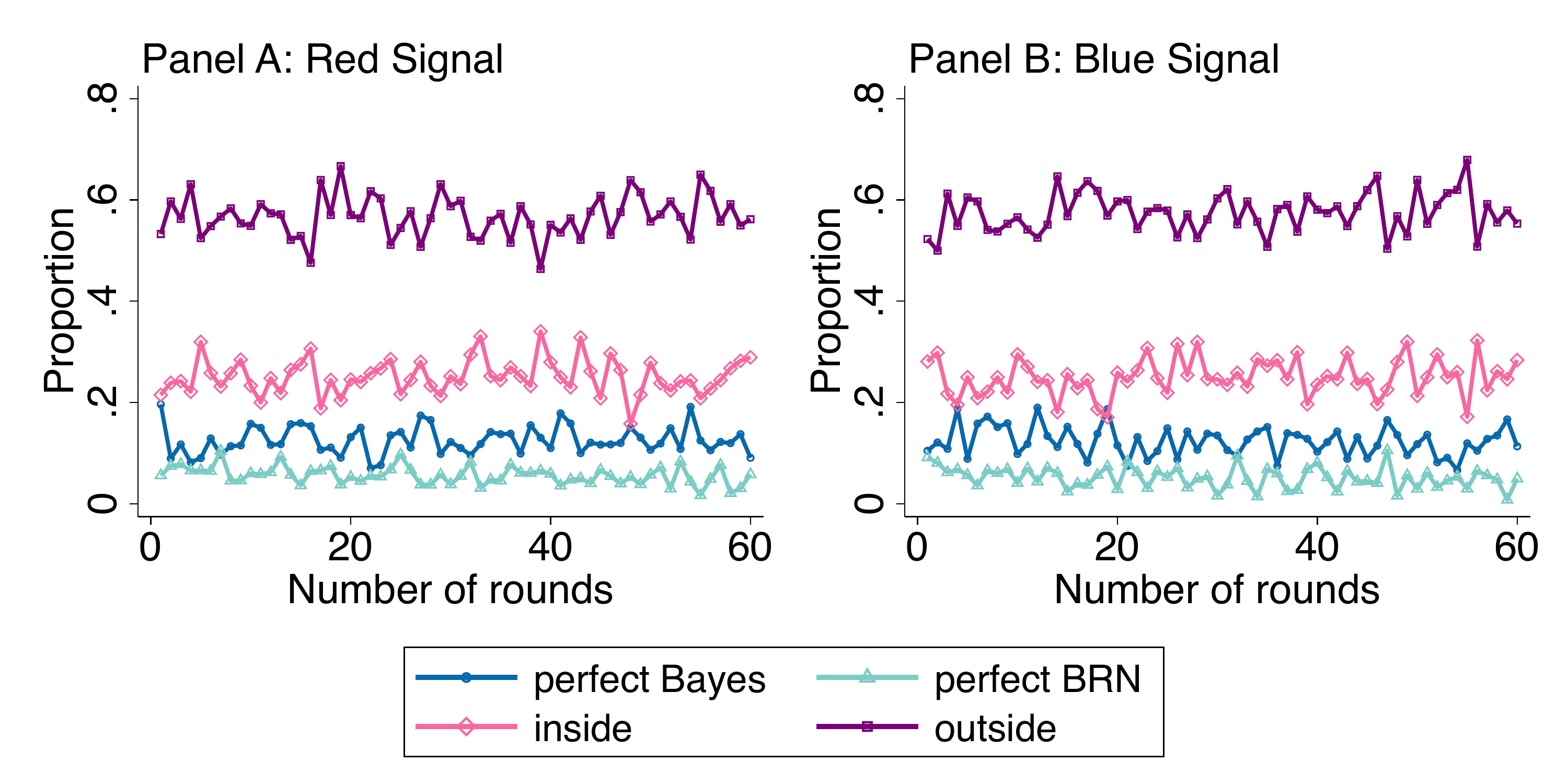}
    \caption{Proportions of Different Types of Response}
    \label{fig: ans_ratio}
\begin{flushleft}
{\footnotesize \textit{Notes:} We exclude the responses when $\mu=0.5$ because there is no difference between $x^{Bayes}$ and $x^{pBRN}$. Panel A and B correspond to the responses when the signal is red ball and blue ball, respectively. \hide{Red solid line denotes the proportion of responses that equal to $B_{Bay}$, blue dash line denotes that equal to $B_{pBRN}$, green dash-dot line indicates that fall between the two, and brown line indicates that fall outside.}}
\end{flushleft}
\end{figure}

%\shu{One question about the following figure, do we need to separate the two signals? Because we merge the predictions from the two signals afterwards.}
\paragraph{\textbf{Base Rate Neglect at Subject Level}} After exploring base rate neglect at prediction level, another question arises: how far do subjects deviate from Bayesian? To answer this question, we estimate the base rate consideration degree $\lambda$ for each subject $i$. According to Observation \ref{lemma: BNR prediction vs Bayesian}, we can obtain the following econometric model, %\shu{notation to discuss}
\[\logit (x^{Bayes}_{(t)}) - \logit (x_{(t)}) =\beta  \logit(\mu_{(t)}) + \varepsilon_{(t)}, \]
where $x_{(t)}$ is subject's prediction in round $t$, $x^{Bayes}_{(t)}$ is the corresponding perfect Bayes benchmark, %\shu{repetitive definiation?} 
and $\logit(x)$ represents the log odds function where $\logit(x) = \log \frac{x}{1-x}$. The coefficient $\beta$ is of our interest, and equation $\lambda = 1- \beta$ holds. We estimate the above econometric model and obtain estimated $\lambda$
%\yw{the prior consideration degree $\lambda$} 
using Ordinary Least Squares (OLS) regression %\yw{citation?} 
for each subject. 

Figure \ref{fig: est_lambda} depicts the distribution of estimated $\lambda_i$. The results show that $\lambda_i$ displays three distinct peaks
%, each with roughly equivalent frequencies. These peaks 
corresponding to 0 (perfect BRN), 0.6 (representing moderate BRN), and 1 (perfect Bayes), respectively.
The average consideration degree of the base rate at subject level is 0.4488.
%, which is comparable to the estimate at prediction level, standing at 0.4529. %based on predictions of all subjects across all rounds, which is 0.4529. 
Besides, a minority of subjects have prior consideration degree $\lambda_i$ that falls outside the range of $[0,1]$. The proportion of such subjects is relatively small and these deviations are minor, with only 3.09\% of $\lambda_i$ values being less than -0.2, and none exceeding 1.2. 

\begin{figure}[h]
    \centering    \includegraphics[width=0.5\linewidth]{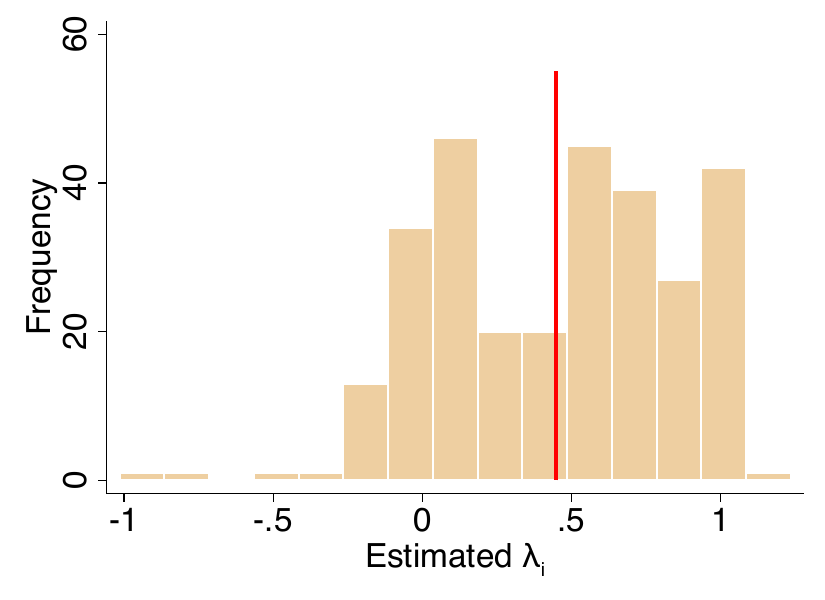}
    \caption{The Distribution of Estimated $\lambda_i$}
    \label{fig: est_lambda}
\begin{flushleft}
{\footnotesize \textit{Notes:} 
We exclude the responses when $\mu=0.5$ because there is no difference between $x^{Bayes}$ and $x^{pBRN}$. The red line indicates the average estimated $\lambda_i$ at subject level.}
\end{flushleft}
\end{figure}

%\yk{change notations, also explain that our model does not capture the outside case.... }

\subsection{Aggregator evaluation}
After observing base rate neglect, we further explore the performance of aggregators under subjects' predictions. We denote $\zeta$ as  a single-expert information structure with parameter $(p_{le},p_{ri},\mu)$ provided in the task. Each single-expert structure $\zeta$ corresponds to a case in our study. Subject $i$'s predictions upon the signals of red ball and blue ball under case $\zeta$ are denoted as $x^{\zeta}_i(r)$ and $x^{\zeta}_i(b)$.

%If subject $i$ is assigned with case $\zeta$, her predictions upon the red ball signal and blue ball signal are denoted as $x^{\zeta}_i(r)$ and $x^{\zeta}_i(b)$, respectively.

Combining two single-expert information structures with the same selection probability $\mu$ of the left box, we can obtain an information structure defined in Section~\ref{section: problem statement} where experts' signals are independent conditional on the selected box. Let $\zeta_1$, $\zeta_2$ be two single-expert information structures with same parameter $\mu$. The event that the left box is selected corresponds to the state $\omega = 1$. %The red ball signal and blue ball signal correspond to signal $r$ and $b$ respectively. 
Then the state distribution in the combined information structure $\theta$ is $\Pr[\omega = 1] = \mu$ and $\Pr[\omega = 0] = 1- \mu$. The conditional distributions of experts' signals are 
\begin{equation*}
    \begin{aligned}
        &\Pr[S_i = r\mid \omega = 1] = p_{le}(\zeta_i),~\Pr[S_i = b\mid \omega = 1] = 1-p_{le}(\zeta_i) ~\forall i = 1,2 , \\
        &\Pr[S_i = r \mid \omega = 0] = p_{ri}(\zeta_i),~ \Pr[S_i = b \mid \omega = 0] = 1 - p_{ri}(\zeta_i)~\forall i = 1,2 .
    \end{aligned}
\end{equation*}

Let $\mathcal{I}_{\zeta_1}$, $\mathcal{I}_{\zeta_2}$ denote the set of subjects assigned with case $\zeta_1$, $\zeta_2$ respectively.
%By Lemma \ref{lemma: square relative loss}, 
The empirical relative loss of the combined information structure $\theta$ under aggregator $f$ is defined as

\[Loss_{\theta}^{f} = \sum\limits_{ i_1\in \mathcal{I}_{\zeta_1},i_2\in\mathcal{I}_{\zeta_2},i_1\neq i_2}{C_{\zeta_1, \zeta_2}}^{-1} \sum\limits_{s_1, s_2 \in \{ r, b \}}{
    \Pr[S_1 = s_1, S_2 = s_2]
\left[ f \left( x_{i_1}^{\zeta_1}(s_1),  x_{i_2}^{\zeta_2}(s_2)\right) - f^*\left(s_1, s_2\right) \right]^2},
\]
where \(C_{\zeta_1, \zeta_2}= \sum_{i_1,i_2}\mathbf{1}[i_1\in \mathcal{I}_{\zeta_1}, i_2\in\mathcal{I}_{\zeta_2},i_1\neq i_2]\) and the Bayesian aggregator's posterior is \( f^*(s_1,s_2)= \Pr[\omega = 1\mid S_1 = s_1, S_2 =s_2].\)

Intuitively, the empirical loss $Loss_\theta^f$ is determined by averaging the losses across all possible pairs of subjects' predictions. This empirical loss is exactly the expected square loss if we randomly choose two subjects who are assigned with case $\zeta_1$ and $\zeta_2$ respectively, select the box according to $\mu$, and draw balls for subjects following the probability given by $p_{le}$ and $p_{ri}$.
We emphasize that in order to ensure the independence of predictions and to avoid aggregating two predictions from the same subject, we exclude instances where $i=j$\footnote{Namely, predictions from a single subject will not be aggregated.}. Thus, we construct a dataset including real-world human predictions under each pair of cases $(\zeta_1,\zeta_2)$, which enables us to formally evaluate the performance of aggregators. To calculate the relative loss of aggregators when inputting Bayesian posteriors, we substitute subjects' predictions by perfect Bayes $x^{Bayes}$.

% We highlight that to guarantee the independence of predictions and avoid aggregating two predictions from a same subject, we exclude the cases where $i=j$ under different information structures.\footnote{Note that one subject cannot face the same information structure due to our assignment rule.} %Moreover, we also eliminate the repeated combinations under the same information structure, because $f \left( x_i^{\hat{\theta}}(s_i),  x_j^{\hat{\theta}}(s_j)\right) = f \left( x_j^{\hat{\theta}}(s_j),  x_i^{\hat{\theta}}(s_i)\right) $.

%\yw{I changed the expression below.}
%Thus, we construct a dataset uniquely identified by a pair of cases $(\zeta_1,\zeta_2)$. 

%\shu{Change the wording if we would like to use the updated version of the table. Column num \& panel}
%\paragraph{\textbf{Overall performance}}

\paragraph{\textbf{Whole Sample Analyses}}
Tables \ref{table:agg perf} summarizes the performance of our $\hat{\lambda}$-base rate balancing aggregators $(\hat{\lambda}<1)$, average prior and simple average on aggregating subjects' predictions. 
We find that when $\hat{\lambda}$ ranges from 0.1 to 0.9, there is a decrease in both average loss and maximum loss. However, despite this decrease, all these aggregators achieve higher loss than both average prior and simple average aggregators, with simple average performing the best.
%perform less effectively than both average prior and simple average aggregator, with the simple average aggregator achieving the lowest average loss in this comparison.

This pattern shifts when aggregating Bayesian posteriors. While the trend concerning changes in $\hat{\lambda}$ remains consistent, the $\hat{\lambda}$-base rate balancing aggregator with $\hat{\lambda}=0.9$, surpasses the average prior in terms of average loss. In addition, the simple average aggregator only demonstrates moderate performance.

% The same trend ( as $\hat{\lambda}$ changes) applies when aggregating Bayesian posteriors, except that the $\hat{\lambda}$-base rate balancing aggregator with $\hat{\lambda}=0.9$ outperforms average prior regarding average loss, and simple average aggregator only shows a medium-level performance.

\hide{
\begin{figure}[h]
    \centering
    \includegraphics[width=\linewidth]{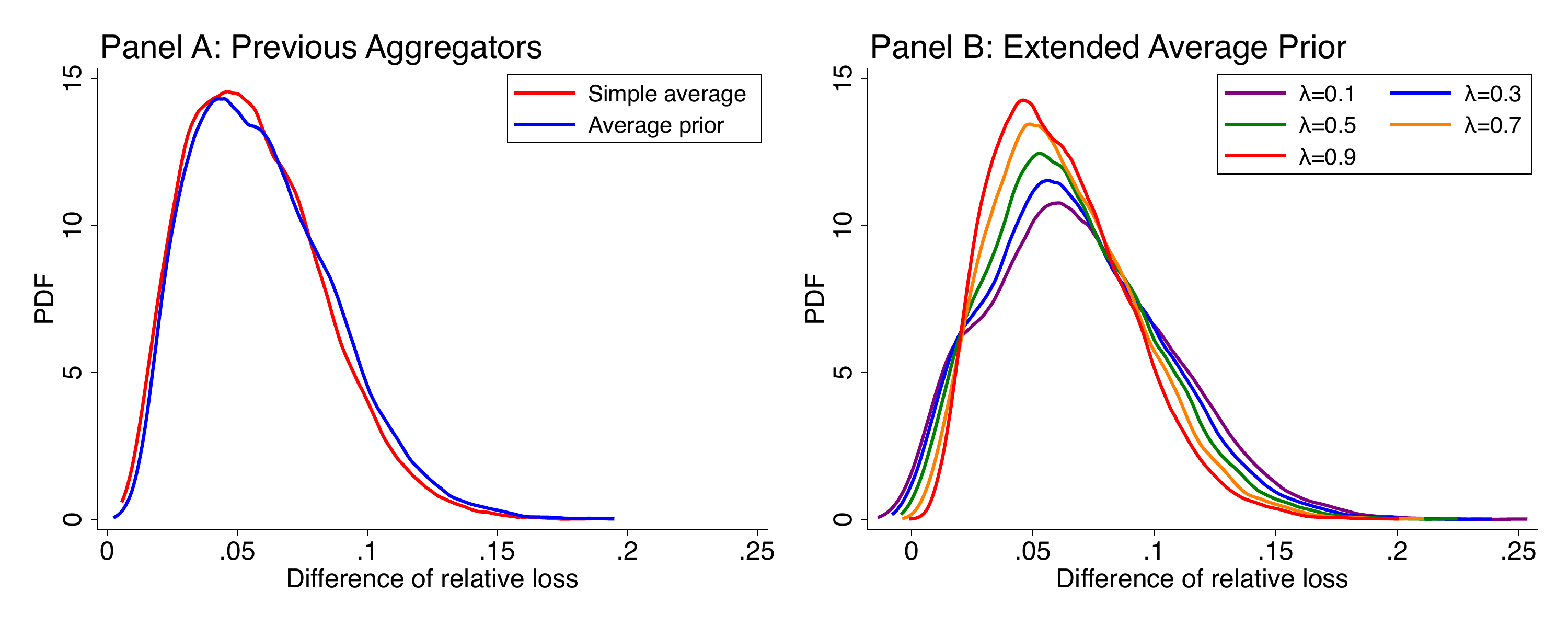}
    \caption{PDF of the Difference of Loss between Subjects' predictions and Bayesian Posteriors}
    \label{fig: diff_density}
\begin{flushleft}
{\footnotesize \textit{Notes:} \shu{change the color as figure 4\&7} }
\end{flushleft}
\end{figure}
}

\setlength{\tabcolsep}{4pt}
\begin{table}
\footnotesize
\begin{threeparttable}   
\begin{tabular}{lccccccccccc}
\hline \hline
 & \multicolumn{9}{c}{$\hat{\lambda}$-base rate balancing aggregator $f^{\hat{\lambda}}_{ap}$ ($\hat{\lambda} < 1$)} & \multirow{2}{*}{\makecell{Average \\ prior}} & \multirow{2}{*}{\makecell{Simple \\ average}} \\
  \cline{2-10}
 \multicolumn{1}{l}{} & $\hat{\lambda}=0.1$ & $\hat{\lambda}=0.2$ & $\hat{\lambda}=0.3$ & $\hat{\lambda}=0.4$ & $\hat{\lambda}=0.5$ & $\hat{\lambda}=0.6$ & $\hat{\lambda}=0.7$ & $\hat{\lambda}=0.8$ & $\hat{\lambda}=0.9$ &  &  \\
 \hline
Avg. loss   & 0.0882 & 0.0853 & 0.0823 & 0.0793 & 0.0762 & 0.0731 & 0.0702 & 0.0675 & 0.0652 & 0.0638 & 0.0627 \\
Max. loss  & 0.2929 & 0.2863 & 0.2792 & 0.2714 & 0.2630 & 0.2539 & 0.2443 & 0.2341 & 0.2269 & 0.2203 & 0.2155 \\
\hline \hline
\end{tabular}
\begin{tablenotes}[flushleft]
\footnotesize \item \textit{Notes:} The number of observations is 29,889. For the convenience of comparison, we exclude 44 pairs of predictions that cannot be aggregated by either average prior aggregator or $\hat{\lambda}$-base rate balancing aggregators ($\lambda < 1$). This exclusion applies to cases where, for instance, one subject reports a probability of 0\% while another reports 100\%. 
\end{tablenotes}
\end{threeparttable}
\caption{Summary of Aggregators' Performance on Subjects' Predictions}
\label{table:agg perf}
\end{table}

%\yk{change the name of extended average. moreover, do not use panel a and panel b, as we have already used this terminology for red/blue signals. Perhaps, for simplicity, we do not need columns 3-7?}\shu{In econ tradition, Panel A and B may be different across tables and figures, just like the name of column (1) and (2). To simplify the table, I revise it to the above version, which seems less confusing.}

%\paragraph{\textbf{Subsample performance}}
\paragraph{\textbf{Subsample Analyses}}
We note that there exists a gap between our theoretical results and the above empirical analyses. Theoretically, the consideration degree of base rate $\lambda$ is assumed to vary between 0 and 1, which means the actual predictions should lie between the extremes of perfect BRN and perfect Bayes. However, as depicted in Figure~\ref{fig: ans_ratio}, around 57\% of the predictions fall outside this expected range.%\hide{, which may cause the aforementioned gap. }

To close this gap and gain deeper insights, we categorize the sample based on whether the predictions fall within the expected range. We then investigate the heterogeneous performance of our $\hat{\lambda}$-base rate balancing aggregators, particularly focusing on the predictions that do locate within the perfect BRN - perfect Bayes range, which we refer to as the inside group.

%Therefore, we divide the sample according to the relationship with the expected range and investigate the heterogeneous performance of our $\hat{\lambda}$-base rate balancing aggregators, with a focus on the predictions in the perfect BRN - Bayes range, i.e., the inside group.

As mentioned in Subsection~\ref{subsection:brn}, there are two main groups of predictions: those within and outside the expected range. % (inside group) and those outside this range (outside group). 
%\yw{TO Kong: you can modify this sentence if you think this sentence is redundant.}
In our context, we aggregate predictions from two experts, each providing two predictions based on the received signals. We first identify five subsamples according to the composition of these four reports, ranging from the subsample where all four reports are outside the expected range (4 outside) to that where all four reports are inside it (4 inside).
%ranging from 4 outside reports to 4 inside reports.%\hide{1) 4 outside reports, 2) 3 outside reports and 1 inside reports, 3) 2 outside reports and 2 inside reports, 4) 1 outside reports and 3 inside reports, 5) 4 inside reports. }
Additionally, we consider two special instances: one where all four reports are perfect BRN (4 perfect BRN) and another where all reports are perfect Bayes (4 perfect Bayes).
%which are 4 perfect BRN and 4 perfect Bayes reports. 
Figure~\ref{fig: heter_agg} shows the performance of various aggregators across the above subsamples, assessed in terms of average loss at information structure level.

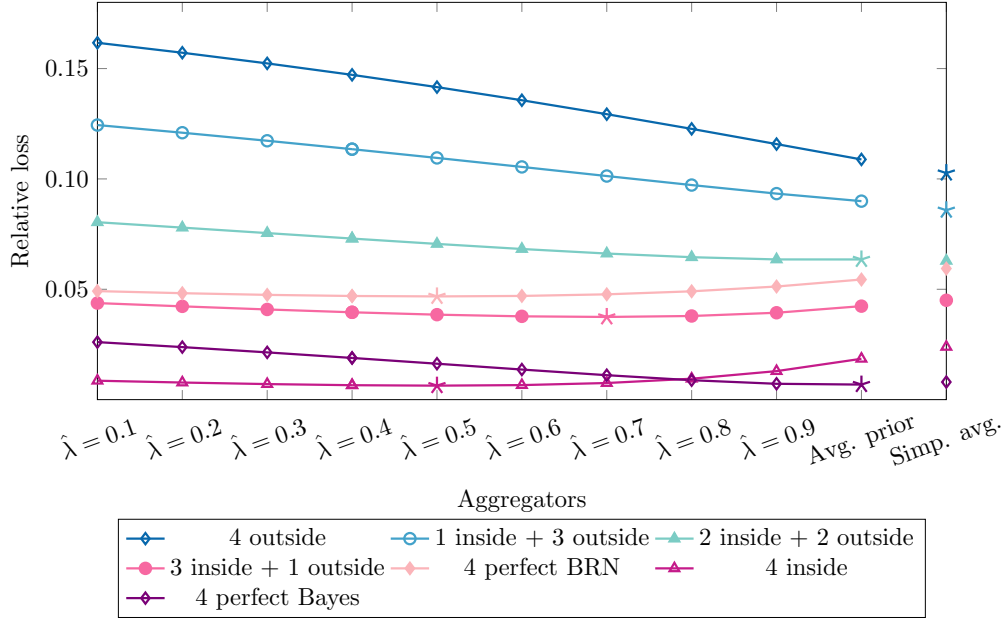
\begin{figure}[h]
    \centering
    \begin{tikzpicture}[scale=0.85] % Adjust the scale to increase or decrease the figure size
    \begin{axis}[
        width=0.9\textwidth, % Adjust the width of the plot
        height=7.8cm, % Adjust the height of the plot
        xlabel={Aggregators},
        ylabel={Relative loss},
        xmin=1, xmax=11,
        xtick={1,2,3,4,5,6,7,8,9,10,11}, % 设置横坐标刻度的位置
        xticklabels={$\hat{\lambda}=0.1$, $\hat{\lambda}=0.2$, $\hat{\lambda}=0.3$, $\hat{\lambda}=0.4$, $\hat{\lambda}=0.5$, $\hat{\lambda}=0.6$, $\hat{\lambda}=0.7$, $\hat{\lambda}=0.8$, $\hat{\lambda}=0.9$, Avg. prior, Simp. avg.} , % 设置横坐标刻度标签的内容
        ytick={0.05,0.1,0.15}, % 设置横坐标刻度的位置
        yticklabels={0.05,0.10,0.15} , % 设置横坐标刻度标签的内容
        xticklabel style={rotate = 20},
        ymin=0, ymax=0.18,
        legend style={at={(0.5,-0.3)},
	       anchor=north,legend columns=3},
        grid style=dashed,
        ]

    %because I cannot cancel the point drawn and plot the star as the minimum point, thus I try to plot the line twice, and add the points afterwards.
    
    % 4 outside
    % plot the lie 
    \addplot[color=blue4,
    solid,
    line width=1pt,
    mark=diamond,
    mark options={solid},
    mark size=2.5pt] coordinates {
        (1,0.161681)(2,0.157180)(3,0.152347)(4,0.147158)
    };
    \addlegendentry{4 outside}
    \addplot[color=blue4,
    solid,
    line width=1pt,
    mark=none,
    mark options={solid},
    mark size=2.5pt,
    forget plot] coordinates {
    (4,0.147158)(5,0.141596)(6,0.135652)(7,0.129333)(8,0.122679)(9,0.115782)(10,0.108827)
    };    
        %plot the ponits except for the minimum one
    \addplot[only marks, mark=diamond,color=blue4,mark size=2.5pt,line width=1pt, forget plot] coordinates {
    (5,0.141596)(6,0.135652)(7,0.129333)(8,0.122679)(9,0.115782)(10,0.108827)
    };
    %plot the minimum
    \addplot[only marks, mark=star,color=blue4,mark size=4pt,line width=1pt, forget plot] coordinates {
     (11,0.102643)
    };    

    % 1 bet + 3 outside
    \addplot[color=blue3,
    solid,
    line width=1pt,
    mark=o,
    mark options={solid},
    mark size=2.5pt] coordinates {
        (1,0.124410)(2,0.120937)(3,0.117291)(4,0.113478)
    };
    \addlegendentry{1 inside + 3 outside}
    \addplot[color=blue3,
    solid,
    line width=1pt,
    mark=none,
    mark options={solid},
    mark size=2.5pt,forget plot] coordinates {
        (4,0.113478)(5,0.109516)(6,0.105440)(7,0.101309)(8,0.097223)(9,0.093344)(10,0.089931)
    };
    \addplot[only marks, mark=o,color=blue3,mark size=2.5pt,line width=1pt,forget plot] coordinates {
     (4,0.113478)(5,0.109516)(6,0.105440)(7,0.101309)(8,0.097223)(9,0.093344)(10,0.089931)
    };    
    \addplot[only marks, mark=star,color=blue3,mark size=4pt,line width=1pt,forget plot] coordinates {
     (11,0.085808)
    };

    % 2 bet  + 2 outside 
    \addplot[color=blue2,
    solid,
    line width=1pt,
    mark=triangle*,
    mark options={solid},
    mark size=2.5pt] coordinates {
    (1,0.080388)(2,0.077940)(3,0.075459)(4,0.072979)
};
    \addlegendentry{2 inside + 2 outside}
    \addplot[color=blue2,
    solid,
    line width=1pt,
    mark=none,
    mark options={solid},
    mark size=2.5pt, forget plot] coordinates {
   (4,0.072979)(5,0.070552)(6,0.068252)(7,0.066192)(8,0.064535)(9,0.063524)(10,0.063512)
};
    \addplot[only marks, 
    mark=triangle*,color=blue2,mark size=2.5pt,
    line width=1pt, forget plot] coordinates {
   (4,0.072979)(5,0.070552)(6,0.068252)(7,0.066192)(8,0.064535)(9,0.063524)(11,0.062884)
    };        
    %plot the minimum
    \addplot[only marks, 
    mark=star,color=blue2,
    mark size=4pt,
    line width=1pt, forget plot] coordinates {
     (10,0.063512)
    };       

        % 3 bet + 1 outside 
    \addplot[color=pink2,
    solid,
    line width=1pt,
    mark=*,
    mark options={solid},
    mark size=2.5pt] coordinates {
    (1,0.043743)(2,0.042262)(3,0.040854)(4,0.039572)
};
    \addlegendentry{3 inside + 1 outside}
    \addplot[color=pink2,
    solid,
    line width=1pt,
    mark=none,
    mark options={solid},
    mark size=2.5pt,forget plot] coordinates {
(4,0.039572)(5,0.038497)(6,0.037740)(7,0.037462)(8,0.037893)(9,0.039367)(10,0.042356)
};
    \addplot[only marks, 
    mark=*,
    color=pink2, mark size=2.5pt,
    line width=1pt, forget plot] coordinates {
(4,0.039572)(5,0.038497)(6,0.037740)(8,0.037893)(9,0.039367)(10,0.042356)(11,0.045018)
};  
     %plot the minimum
    \addplot[only marks, 
    mark=star,
    color=pink2, mark size=4pt,
    line width=1pt, forget plot] coordinates {
     (7,0.037462)
    };         
    
    %4 pbrn
    \addplot[color=pink1,
    solid,
    line width=1pt,
    mark=diamond*,
    mark options={solid},
    mark size=2.5pt] coordinates {
        (1,0.049172)(2,0.048223)(3,0.047466)(4,0.046958)
    };
    \addlegendentry{4 perfect BRN}  
    \addplot[color=pink1,
    solid,
    line width=1pt,
    mark=none,
    mark options={solid},
    mark size=2.5pt,forget plot] coordinates {
       (4,0.046958)(5,0.046768)(6,0.046983)(7,0.047709)(8,0.049077)(9,0.051250)(10,0.054421)
    };
    \addplot[only marks, 
    mark=diamond*,
    color=pink1,mark size=2.5pt,
    line width=1pt, forget plot] coordinates {
       (4,0.046958)(6,0.046983)(7,0.047709)(8,0.049077)(9,0.051250)(10,0.054421)(11,0.059402)
    };  
    %plot the minimum
    \addplot[only marks, 
    mark=star,
    color=pink1,mark size=4pt,
    line width=1pt, forget plot] coordinates {
     (5,0.046768)
    };  

    % 4bet
    \addplot[color=pink3,
    solid,
    line width=1pt,
    mark=triangle,
    mark options={solid},
    mark size=2.5pt] coordinates {
        (1,0.008572)(2,0.007748)(3,0.007046)(4,0.006538)
    };
    \addlegendentry{4 inside}
    \addplot[color=pink3,
    solid,
    line width=1pt,
    mark=none,
    mark options={solid},
    mark size=2.5pt,forget plot] coordinates {
         (4,0.006538)(5,0.006334)(6,0.006588)(7,0.007527)(8,0.009479)(9,0.012916)(10,0.018508)
    };
    \addplot[only marks, 
    mark=triangle,
    color=pink3,mark size=2.5pt,
    line width=1pt, forget plot] coordinates {
         (4,0.006538)(6,0.006588)(7,0.007527)(8,0.009479)(9,0.012916)(10,0.018508)(11,0.023898)
    };    
     %plot the minimum
    \addplot[only marks, 
    mark=star,
    color=pink3,mark size=4pt,
    line width=1pt, forget plot] coordinates {
      (5,0.006334)
    };

    %4 bayese
    \addplot[color=pink4,
    solid,
    line width=1pt,
    mark=diamond,
    mark options={solid},
    mark size=2.5pt] coordinates {
        (1,0.026061)(2,0.023804)(3,0.021409)(4,0.018890)
    };
    \addlegendentry{4 perfect Bayes}
    \addplot[color=pink4,
    solid,
    line width=1pt,
    mark=none,
    mark options={solid},
    mark size=2.5pt,forget plot] coordinates {
        (4,0.018890)(5,0.016278)(6,0.013634)(7,0.011074)(8,0.008801)(9,0.007175)(10,0.006820)
    };
    \addplot[only marks, 
    mark=diamond,
    color=pink4,mark size=2.5pt,
    line width=1pt, forget plot] coordinates {
        (4,0.018890)(5,0.016278)(6,0.013634)(7,0.011074)(8,0.008801)(9,0.007175)(11,0.007971)
    };  
     %plot the minimum
    \addplot[only marks, 
    mark=star,
    color=pink4,mark size=4pt,
    line width=1pt, forget plot] coordinates {
      (10,0.006820)
    };     
    
    \end{axis}
\end{tikzpicture}
    \caption{Aggregators' Performance in Subsample}
    \label{fig: heter_agg}
    \begin{flushleft}
{\footnotesize \textit{Notes:} For the convenience of comparison, we exclude 44 pairs of predictions that cannot be aggregated by either average prior aggregator or $\hat{\lambda}$-base rate balancing aggregators ($\lambda < 1$). This exclusion applies to cases where, for instance, one subject reports a probability of 0\% while another reports 100\%. The symbol * denote the lowest loss across aggregators within the subsample.
}
\end{flushleft}
\end{figure}

For the subsample of 4 outside reports and that of 1 inside and 3 outside reports, the simple average aggregator achieves the lowest average loss. However, this pattern does not hold for the other instances. 
For subsamples where most of the reports exhibit base rate neglect, certain $\hat{\lambda}$-base rate balancing aggregators with $\hat{\lambda} < 1$ surprisingly benefit the aggregation. Notably, the $\hat{\lambda}$-base rate balancing aggregator with $\hat{\lambda} =0.7$ performs best for the subsample of 3 inside and 1 outside reports, while the \hide{$\hat{\lambda}$-base rate balancing }aggregator with $\hat{\lambda} =0.5$ is optimal for both 4 perfect BRN reports and 4 inside reports.
%such as in the cases of 3 inside and 1 outside reports ($ \hat{\lambda} =0.7$ performs best), 4 perfect BRN reports and 4 inside reports ($ \hat{\lambda} =0.5$ performs best). 
In contrast, for the subsample of 2 inside and 2 outside reports, as well as the subsample of 4 perfect Bayes reports, the average prior is most effective. The above findings underscore the critical importance of choosing the appropriate aggregator based on experts' consideration degree of the base rate, which can significantly improve the aggregation accuracy. %Notably, our $\hat{\lambda}$-base rate balancing aggregators excel in 3 out of 7 subsamples.

\subsection{Base Rate Neglect vs. Bayesian}
At last, we investigate the role of base rate neglect in forecast aggregation. Namely, given the same aggregator, we study whether the prior consideration degree influences the performance of aggregators. Given validation that subjects do not submit Bayesian posteriors, we compare the performance of an aggregator across two distinct scenarios to answer this question, where the first involves subjects' actual reports and the second considers hypothetical Bayesian posteriors.

%\yw{NOTICE: some results are hidden}

\paragraph{\textbf{Whole Sample Analyses}}
Existing aggregators including simple average and average prior achieve higher loss for human subjects, with the average loss being 0.0627 and 0.0638 respectively (see Table \ref{table:agg perf}). This loss significantly reduces to 0.0091 and 0.0076 when Bayesian posteriors are used for aggregation (see Table~\ref{table:agg perf appendix} in Appendix~\ref{appendix:analyses}). Moreover, Bayesian posteriors consistently enhance the aggregation accuracy in all tested information structures.

%The average loss is 0.0627 and 0.0638 when aggregating all subjects' predictions for simple average and average prior (see Table \ref{table:agg perf}), while it decreases to 0.0091 and 0.0076 when aggregating Bayesian posteriors (see Table~\ref{table:agg perf appendix} in Appendix~\ref{appendix:analyses}). 
%Meanwhile, in all information structures, Bayesian posteriors always benefit forecast aggregation. 

%Table~\ref{table:agg perf appendix} present aggregators' performance on Bayesian posteriors. The results show that

%As for $\hat{\lambda}$-base rate balancing aggregators, base rate neglect also results in worse performance in the general population. 
As for $\hat{\lambda}$-base rate balancing aggregators, subjects' reports also result in worse performance in the general population of tested information structures. 
Interestingly, as $\hat{\lambda}$ increases, the loss difference between aggregating subjects' reports and aggregating Bayesian posteriors diminishes, suggesting that the performance of $\hat{\lambda}$-base rate balancing aggregator gets better as $\hat{\lambda}$ increases when inputting subjects' predictions.
However, the proportion of structures that subjects' predictions result in a lower loss than Bayesian posteriors decreases from 0.71\% to 0.00\% as $\hat{\lambda}$ increases from 0.1 to 0.9. This diminishing trend becomes even larger, from 25.20\% to 7.36\%, when examining the loss \hide{from subjects' predictions }at the prediction pairs level (see Table~\ref{table:agg perf appendix} in Appendix~\ref{appendix:analyses}). This highlights that non-Bayesian predictions may result in better aggregation compared to Bayesian ones. %Such phenomenon is more evident for our $\hat{\lambda}$-base rate balancing aggregators with lower $\hat{\lambda}$ values.

\paragraph{\textbf{Subsample Analyses}}
When comparing aggregators' performance across subsamples (see Figure~\ref{fig: heter_agg}), we find that subsample of 4 perfect Bayes reports does not always achieve the lowest loss for all aggregators. Our aggregators with $\hat{\lambda} \le 0.7$ can achieve lower loss when aggregating 4 inside reports, compared to that when aggregating 4 perfect Bayesian reports. 
Moreover, the minimal loss across all tested aggregators when aggregating 4 inside reports ($\hat{\lambda}$-base rate balancing aggregator with $ \hat{\lambda} = 0.5$, $loss = 0.0063$) is less than that when aggregating 4 perfect Bayesian reports (average prior, $loss = 0.0068$). This observation implies that base rate neglect does not necessarily compromise the aggregation performance, which is consistent with our theoretical results.

%When comparing aggregators' performance in subsamples in Figure~\ref{fig: heter_agg}, we find that aggregating 4 perfect Bayes reports does not always achieve lowest loss under different aggregators. Our aggregators with $ \hat{\lambda} <= 0.7$ can even achieve lower loss when aggregating 4 inside reports, compared to 4 perfect Bayes ones. 
% Moreover, regardless of the types of aggregator, the lowest loss of aggregating 4 inside reports ($\hat{\lambda}$-base rate balancing aggregator with $ \hat{\lambda} = 0.5$, $loss = 0.0063$) is smaller than aggregating 4 perfect Bayes (average prior with $ \hat{\lambda}= 1$, $loss = 0.0068$), which suggests base rate neglect does not necessarily compromise aggregation performance.

%Discussion of predictions outside perfect BRN - Bayes range
%k\subsection{Aggregator Performance on Human Data}

%Panel A in Figure \ref{fig: diff_density} presents the distribution of difference of relative loss between subjects' predictions and their Bayesian counterparts. 

%\subsubsection{Overall Performance}

%Simple averaging beats theory on full dataset
%\subsubsection{Performance Within BRN-Bayes Range}

%base rate balancing best for BRN-Bayes predictions

%Performance can benefit from some BRN

\section{Conclusion}

This work provides a first step to consider robust forecast aggregation when experts exhibit base rate neglect. We theoretically illustrate the single-troughed regret regarding the consideration degree of base rate $\lambda$ and examine it numerically. Moreover, we construct a family of $\hat{\lambda}$-base rate balancing aggregators that take experts' base rate consideration degree into account.
%are robust to unknown $\lambda$. 
We also numerically show that the aggregator with an appropriate $\hat{\lambda}$ can achieve a low regret across all possible degree $\lambda$.
To justify the validity of those findings, we also collect a comprehensive dataset of predictions from human subjects under various information structures from an online study.

There are some limitations to our work. First, as a starting point, we only consider the scenario for aggregating predictions from two experts, and they are assumed to exhibit the same consideration degree of base rate. Although we relax the latter assumption in our empirical study, we believe it is interesting to theoretically explore the cases of aggregating predictions from multiple experts with heterogeneous consideration degrees of base rate. Second, as our empirical study reveals, some people have base rate neglect but some have signal neglect. A richer theoretical framework should be considered to incorporate both situations. Finally, an experiment with real-world scenarios where the base rates and private signals are not explicitly presented to the subjects is worth studying. %would be interesting. %\yk{I add some, can improve them}

\clearpage
\bibliographystyle{ACM-Reference-Format}
\bibliography{reference}

% Appendix
\setcounter{table}{0}
\renewcommand{\thetable}{A\arabic{table}}

\begin{toappendix}
\section{Instruction}\label{appendix:instruction}

In this appendix, we present the instruction used in the online study. First, the subjects receive consent forms, followed by a straightforward coin flipping exercise designed to familiarize them with the task and the payment scheme \citep{esponda2023mental}. Subsequently, we introduce a sample task and explain it, then 60 rounds of formal task start. In addition, we also ask several questions about their demographics.

In the following instruction, the content in \{ \} varies across subjects or rounds. Comments for clarity are provided in brackets [ ] and are italicized, which are not visible to subjects during the study. The question that require a response are marked by dot ($\bullet$).  Note: We use \underline{underline} to replace personal information about authors in this instruction.

\vspace{0.3cm}

{\large \textbf{Welcome!}} [\textit{New page}]

$\bullet$ \ Please enter your Prolific ID.

\textbf{Contact}
%\shu{When we submit the paper, the identified information should be hidden.}

This study is conducted by a research team in \underline{ \quad \quad \quad \quad } [\textit{authors' universities}], \underline{ \quad\quad\quad \quad } [\textit{authors' country}].
%Peking University and Tsinghua University, 
If you have any questions, concerns or complaints about this study, its procedures, risks, and benefits, please write to \underline{ \quad \quad \quad \quad } [\textit{one author's email}]
%shu-wang20@mails.tsinghua.edu.cn.

\textbf{Confidentiality}

This study is anonymous. The data collected in this study do not include any personally identifiable information about you. By participating, you understand and agree that the data collected in this study will be used by our research team and aggregated results will be published.

\textbf{Duration}

This study lasts approximately 40 minutes. 

You may choose to stop participating in this study at any time, but you cannot gain any payment.

\textbf{Qualification}

A set of instructions will be given at the start. Please read the instructions carefully. The formal task consists of 60 rounds of questions about decision-making. 
Please do not talk with others or search the answers on the Internet.

\textbf{Payment}

You will receive \$5.5 as participation fee if you finish the whole study. We will randomly select 1 round in the formal task to pay you an additional bonus, which will be either \$6 or \$0. The likelihood of getting \$6 will be determined by your choice in the selected round. The transfer of bonuses will take up a week. 

By ticking the following boxes, you indicate that you understand and accept the rules, and you would like to participate in this study. 

$\square$ I understand and accept the rules, and I would like to participate in this study.

$\square$ I am above 18 years old.

\vspace{0.3cm}

{\large \textbf{Basic Questions}} [\textit{New page}]

$\bullet$ \ 1) What is your gender? a) Male, b) Female.

$\bullet$ \ 2) What is your age? a) <18 years old, b) 18-24 years old, c) 25-34 years old, d) 35-44 years old, e) 45-54 years old, f) 55-64 years old, g) >=65 years old.

$\bullet$ \ 3) What is your race? a) American Indian or Alaska Native, b) Asian, c) Black or African American, d) Native Hawaiian or Other Pacific Islander, e) White, f) Others.

$\bullet$ \ 4) What is your nationality? a) American, b) Indian, c)  Canadian, d) Others.

$\bullet$ \ 5) What is your educational level? a) Elementary school, b) High school, c) Associate's, d) Bachelor's, e) Master's, f) Ph.D.

$\bullet$ \ 6) What is your current employment status? a) Employed full time (40 or more hours per week), b) Employed part time (up to 39 hours per week), c) Umemployed and currenctly looking for work, d) Unemployed and not currently looking for work, e) Student, f) Retired, g) Homemaker, h) Self-employed, g) Unable to work.

$\bullet$ \ 7) Are you currently: a) Married, b) Living together as married, c) Divorced, d) Seperated, e) Widowed, f) Single.

$\bullet$ \ 8)  Have you had any children? a) No children, b) One child, c) Two children, d) Three children, e) More than three children.

$\bullet$ \ 9)  What is your religious affiliation? a) Potestant, b) Catholic, c) Jewish, d) Islamic, e) Buddhism, f) Others, g) None.

\vspace{0.3cm}

{\large \textbf{Instruction (1/3)}} [\textit{New page}]

In this experiment, you will assess the chances that certain events will happen.

Here is a simple example to explain. Suppose we flip a fair coin, with 50\% chance landing Heads and 50\% chance landing Tails.

$\bullet$ \ What is the probability (\%) that the coin lands Heads? (answer it using an integer between 0 and 100):

Click on the [Submit] button after you finish the answer. Please notice that you can NOT change your answer after submission.

\vspace{0.3cm}

{\large \textbf{Instruction (2/3)}} [\textit{New page}]

\textbf{Overview}

In this coin flipping example, the chance that the coin lands Heads is 50\% and the chance it lands Tails is 50\%.

We will pay you based on your answer. Our payment scheme guarantees that it is always in your best interest to report your truthful assessment of the chance.

\textbf{Payment Details}

In every probability assessment question as above, you will submit X about the chance that an event happens. In our coin flipping example, X represents the percentage chance of Heads. Then computer will randomly draw a value Y from 0 to 100.

If Y is greater than or equal to X, you will win \$6 with Y\% chance. If Y is less than X, you will win \$6 if the event occurs (in this example, the event refers that coin flip lands Heads). Namely,
\[ 
\begin{cases}\$ 6 \text { with } Y \% \text { chance, } & \text { if } Y \geq X ; \\
\$ 6 \text { when the event occurs,} & \text { if } Y<X.\end{cases}
 \]

Given this scheme, it is always in your best interest to choose X that represents your truthful assessment of the chance that the relevant event happens. Thus, the optimal choice of the above example is X = 50.

Click here to see the explanation. [\textit{The following content on this page will only be displayed when this sentence is clicked.}]

\textbf{Explanation}

Consider you submit a lower value for X; for example, X = 20. If the drawn number Y is between 20 and 50, you will win \$6 with Y\% chance. If you had instead submitted X = 50, you are more likely to get the \$6 with 50\% (the coin has landed Heads with 50\% chance).

Similarly, consider choosing a higher value for X; for example, X = 80. If Y is between 50 and 80, you will win \$6 with 50\%, which is the probability of Heads. If you had instead submitted X = 50, you will get the \$6 with Y\% chance, which is between 50\% and 80\%.

\vspace{0.3cm}

{\large \textbf{Instruction (3/3)}} [\textit{New page}]

In our scenario, there are two boxes of balls. Each box has a total of 100 balls, and each ball can be either red or blue. One box is selected, and one ball is randomly drawn as a signal from the selected box. But the selected box is not revealed.

We will tell you the CASE about:

\quad 1) the proportion of two types of balls in the left and right boxes,

\quad 2) the probability of selecting the two boxes.

We will ask you to assess the probability of selecting the left box conditional on the color of the drawn ball.

\textbf{Example}

Here is a sample CASE.

\quad Left box contains 40 red balls and 60 blue balls.

\quad Right box contains 30 red balls and 70 blue balls.

\quad The probability of selecting the left box is 20\%, and the probability of selecting the right box is 80\%.

In this case, one box is selected according to the probability, and one ball is randomly drawn from the selected box.
You will answer the following two questions in two rounds:

$\bullet$ \ If the ball is red, what is the probability (\%) that it comes from the left box?

$\bullet$ \ If the ball is blue, what is the probability (\%) that it comes from the left box?

We have 30 cases, and 2 questions for each, resulting in 60 rounds in total.

The computer will randomly choose one case from the 30 cases. In the chosen case, the computer will first select a box according to the probability, and then randomly draw a ball from the selected box. If the drawn ball is red/blue, we will use your submitted choice for the corresponding question and pay you as explained before. Remember to provide your truthful assessment to maximize the chance of winning \$6.

If you want to check the payment scheme again, please click here. [\textit{The following content before understanding testing questions will only be displayed when this sentence is clicked.}]

You will submit X about the chance that an event happens. Then computer will randomly draw a value Y from 0 to 100.

If Y is greater than or equal to X, you will win \$6 with Y\% chance. If Y is less than X, you will win \$6 if the event occurs (in this example, the event refers that coin flip lands Heads). Namely,
\[ 
\begin{cases}\$ 6 \text { with } Y \% \text { chance, } & \text { if } Y \geq X ; \\
\$ 6 \text { when the event occurs,} & \text { if } Y<X.\end{cases}
 \]

\textbf{Understanding Testing Questions}

$\bullet$ \ 1) In the above example, if the computer draws a blue ball, your answer of which question will be used to pay? a) the question where the ball is red, b) the question where the ball is blue, c) I don’t know.

$\bullet$ \ 2) Suppose you estimate the probability at x\%, which answer will maximize your chance of winning \$6? a) some number smaller than x, b) some number larger than x, c) x, d) I don’t know.

From now on, you probably have a good sense of question. The following questions only vary in terms of the boxes' composition and the selection probability of each box.

Please make sure you understand the rule.

If you are ready to enter the formal task, please click on the following button.

\vspace{0.3cm}

{\large \textbf{Formal Task}} [\textit{New page, 60 rounds in totals.}]

Round \{$n$\}/60: [\textit{$n$ represents the number of round.}]

\quad Left box contains $\{p_l*100\}$ red balls and $\{(1-p_l)*100\}$ blue balls.

\quad Right box contains $\{p_r*100\}$ red balls and $\{(1-p_r)*100\}$ blue balls.

\quad The probability of selecting the left box is $\{\mu*100\%\}$, and the probability of selecting the right box is $\{(1-\mu)*100\%\}$. [\textit{$p_l,p_r,\mu$ are defined in subsection \ref{section:design}.}]

One box is selected according to the probability, and one ball is randomly drawn from the selected box.

$\bullet$ \ If the ball is red, what is the probability (\%) that it comes from the left box?

(answer it using an integer between 0 and 100)

\vspace{0.3cm}

{\large \textbf{Additional Question}} [\textit{New page}]

$\bullet$ \ How do you determine your answer in the previous formal task?

\vspace{0.3cm}

{\large \textbf{The End}} [\textit{New page}]

Thanks for your participation! You have finished the formal task, and gain the participation fee of \$5.5.

The selected round for you is \{$n$\}. [\textit{$n$ represents the selected round for payment.}]

Upon calculation, your bonus is \${$q$}, The bonus will be distributed in a week. [\textit{Based on the payoff calculation, $q$ is set to either 6 or 0.}]

NOTICE: 

Please click the following link to redirect to the Prolific as a final step to update your completion status.

\{ Link \} [\textit{Here we present our redirect URL to Prolific.}]

(If the link does not work, you can copy and paste it into your browser.)

After redirecting, the whole study is finished and you can close this webpage. Thanks for your participation again.

\setcounter{table}{0}
\renewcommand{\thetable}{G\arabic{table}}

\hide{

\section{Additional analyses}\label{appendix:analyses}

Here, we first discuss the detailed results regarding different color. Then we distangle the effect into both the first round and the following rounds.
We should also discuss the effect from different mu and pa pb, such as which kind of information structure are more likely to induce BRN. Then we should also discuss why we should observe the decay effect which is same as AER paper.

pa+pb =100

\begin{table}[h]
\footnotesize
\begin{threeparttable}   
\begin{tabular}{lccccccc}
\hline \hline
 & \multicolumn{2}{c}{Subjects' predictions} & \multicolumn{2}{c}{Bayesian posteriors} &  \\
 & \multicolumn{1}{c}{Average loss} & \multicolumn{1}{c}{Maximum loss} & \multicolumn{1}{c}{Average loss} & \multicolumn{1}{c}{Maximum loss} & Diff. of loss(1)-(3) \\
 \hline
\multicolumn{3}{l}{\textit{Panel A: Previous Aggregators}} & \textit{} & \textit{} & \multicolumn{1}{l}{\textit{}} \\
Simple average & 0.0621 & 0.8049 & 0.0092 & 0.0400 & 0.0529 \\
Average prior & 0.0633 & 0.8506 & 0.0077 & 0.0210 & 0.0556 \\
\multicolumn{6}{l}{\textit{Panel B: $\lambda$- base rate balancing aggregator $f^{\hat{\lambda}}_{ap}$}} \\
$\hat{\lambda}=0.1$ & 0.0875 & 0.8729 & 0.0222 & 0.0488 & 0.0653 \\
$\hat{\lambda}=0.2$ & 0.0847 & 0.8719 & 0.0202 & 0.0450 & 0.0645 \\
$\hat{\lambda}=0.3$ & 0.0817 & 0.8702 & 0.0182 & 0.0411 & 0.0635 \\
$\hat{\lambda}=0.4$ & 0.0787 & 0.8674 & 0.0161 & 0.0371 & 0.0626 \\
$\hat{\lambda}=0.5$ & 0.0756 & 0.8630 & 0.0139 & 0.0330 & 0.0617 \\
$\hat{\lambda}=0.6$ & 0.0726 & 0.8559 & 0.0118 & 0.0289 & 0.0608 \\
$\hat{\lambda}=0.7$ & 0.0696 & 0.8506 & 0.0099 & 0.0254 & 0.0597 \\
$\hat{\lambda}=0.8$ & 0.0670 & 0.8506 & 0.0083 & 0.0237 & 0.0587 \\
$\hat{\lambda}=0.9$ & 0.0648 & 0.8506 & 0.0074 & 0.0221 & 0.0574 \\
Observations & \multicolumn{1}{c}{4,218,968} & \multicolumn{1}{c}{4,218,968} & \multicolumn{1}{c}{4,218,968} & \multicolumn{1}{c}{4,218,968} & 4,218,968
\\
\hline \hline
\end{tabular}
\begin{tablenotes}[flushleft]
\footnotesize \item \textit{Notes:} Controls  who are you
\end{tablenotes}
\end{threeparttable}
\caption{Summary of Aggregator Performance at prediction pair level}
\label{table:agg perf at pairs}
\end{table}
}

\newpage

\section{Additional Table}\label{appendix:analyses}

% Please add the following required packages to your document preamble:
\setlength{\tabcolsep}{3.5pt}
\begin{table}[h]
\begin{center}
\footnotesize
\begin{threeparttable}   
\begin{tabular}{cccccccccccc}
\hline \hline
 & \multicolumn{9}{c}{$\hat{\lambda}$-base rate balancing aggregator $f^{\hat{\lambda}}_{ap}$ ($\hat{\lambda} < 1$)} & \multirow{2}{*}{\makecell{Average \\ prior}} & \multirow{2}{*}{\makecell{Simple \\ average}} \\
  \cline{2-10}
 \multicolumn{1}{l}{} & $\hat{\lambda}=0.1$ & $\hat{\lambda}=0.2$ & $\hat{\lambda}=0.3$ & $\hat{\lambda}=0.4$ & $\hat{\lambda}=0.5$ & $\hat{\lambda}=0.6$ & $\hat{\lambda}=0.7$ & $\hat{\lambda}=0.8$ & $\hat{\lambda}=0.9$ &  &  \\
 \hline
 Avg. loss & 0.0225 & 0.0205 & 0.0184 & 0.0163 & 0.0141 & 0.0120 & 0.0100 & 0.0083 & 0.0073 & 0.0076 & 0.0091 \\
 Max. loss & 0.0488 & 0.0450 & 0.0411 & 0.0371 & 0.0330 & 0.0289 & 0.0254 & 0.0237 & 0.0221 & 0.0210 & 0.0400 \\
 Diff. of loss & 0.0657 & 0.0648 & 0.0639 & 0.0630 & 0.0621 & 0.0611 & 0.0602 & 0.0592 & 0.0579 & 0.0562 & 0.0536 \\
 \% at info. struct.  & 0.71 & 0.49 & 0.34 & 0.22 & 0.14 & 0.07 & 0.03 & 0.01 & 0.00 & 0.00 & 0.00 \\
  \% at report pair & 25.20 & 24.25 & 23.00 & 21.53 & 19.66 & 17.52 & 14.69 & 11.10 & 7.36 & 4.80 & 3.19 \\
\hline \hline
\end{tabular}
\begin{tablenotes}[flushleft]
\footnotesize \item \textit{Notes:} The number of observations is 29,889 for the first four rows, and 4,218,968 for the row of \% at report pair. For the convenience of comparison, we exclude 44 pairs of subjects' predictions that cannot be aggregated by either average prior aggregator or $\hat{\lambda}$-base rate balancing aggregators ($\hat{\lambda} < 1$). This exclusion applies to cases where, for instance, one subject reports a probability of 0\% while another reports 100\%. Avg. loss and Max. loss represent the average and maximum relative loss when aggregating Bayesian posteriors. Diff. of loss represents the difference of relative loss when aggregating subjects' predictions (Table~\ref{table:agg perf}) and Bayesian posteriors. \% at info. struct. and \% at report pair refer to the percentage proportion of achieving lower loss for subjects' predictions than Bayesian posteriors.

\end{tablenotes}
\end{threeparttable}
\caption{Summary of Aggregators' Performance on Bayesian Posteriors}
\label{table:agg perf appendix}
\end{center}
\end{table}

\end{toappendix}

\end{document}